\ifdefined\pdfminorversion\pdfminorversion=7\fi
\documentclass[10pt]{article}

\usepackage[letterpaper,margin=1in]{geometry}
\usepackage[T1]{fontenc}
\usepackage[sc]{mathpazo}
\usepackage[protrusion=true,expansion=true]{microtype}
\usepackage{newunicodechar}
\newunicodechar{’}{\textquoteright}
\newunicodechar{–}{\textendash}

\usepackage{amsmath,amsfonts,bm}

\def\1{\bm{1}}

\DeclareMathAlphabet{\mathsfit}{\encodingdefault}{\sfdefault}{m}{sl}
\SetMathAlphabet{\mathsfit}{bold}{\encodingdefault}{\sfdefault}{bx}{n}

\usepackage{xcolor}
\usepackage{stmaryrd}
\usepackage{pifont}
\usepackage{graphicx}
\usepackage{colortbl}
\usepackage{natbib}
\setcitestyle{authoryear,round,citesep={;},aysep={,},yysep={;}}
\usepackage{url}
\usepackage{booktabs}
\usepackage{multirow}
\usepackage{array}
\usepackage{tabularx}
\usepackage{siunitx}
\newcolumntype{N}[1]{S[table-format=#1]}
\newcolumntype{P}{S[table-format=3.1,round-precision=1]}
\usepackage{amsmath,amssymb,amsthm}

\usepackage{adjustbox}
\definecolor{oursbg}{RGB}{234,242,255}
\definecolor{spdgreen}{RGB}{20,125,60}
\newcommand{\ours}{\rowcolor{oursbg}}
\newcommand{\spd}[2]{\textbf{#1}\textsubscript{\,\color{spdgreen}#2$\times$}}
\newcommand{\rat}[1]{\,{\color{gray}\footnotesize(#1$\times$)}}
\newcommand{\tgroup}[2]{\addlinespace[3pt]\multicolumn{#1}{@{}l}{\textit{#2}}\\[1pt]}
\theoremstyle{plain}
\newtheorem{theorem}{Theorem}[section]
\newtheorem{lemma}[theorem]{Lemma}
\newtheorem{proposition}[theorem]{Proposition}
\theoremstyle{definition}

\theoremstyle{plain}

\definecolor{brandblue}{RGB}{22,119,255}
\definecolor{linkblue}{RGB}{0,82,170}
\usepackage[font=small,labelfont=bf,labelsep=period]{caption}
\usepackage{enumitem}
\setlist{itemsep=2pt,topsep=4pt,parsep=1pt}
\usepackage{titlesec}
\titleformat{\section}{\large\bfseries}{\thesection}{0.7em}{}
\titleformat{\subsection}{\normalsize\bfseries}{\thesubsection}{0.7em}{}
\titleformat{\subsubsection}{\normalsize\bfseries\itshape}{\thesubsubsection}{0.7em}{}
\titlespacing*{\section}{0pt}{12pt plus 3pt minus 2pt}{4pt plus 1pt}
\titlespacing*{\subsection}{0pt}{9pt plus 2pt minus 2pt}{3pt plus 1pt}
\titlespacing*{\paragraph}{0pt}{5pt plus 2pt minus 1pt}{0.7em}

\usepackage{hyperref}
\hypersetup{colorlinks=true,linkcolor=linkblue,citecolor=linkblue,urlcolor=linkblue,
  pdftitle={DConstrain: Compiling Context-Parameterized Transitions for Diffusion Language Model Constrained Decoding},
  pdfauthor={Shidong Cao, Jing Ma, Wenhao Chai, Yuchen Tian, Lun Du, Da Zheng}}

\begin{document}
\thispagestyle{plain}

\newcommand{\logodivider}{\hspace{8pt}{\color{gray!55}\rule{0.6pt}{0.30in}}\hspace{8pt}}
\noindent
\includegraphics[height=0.30in]{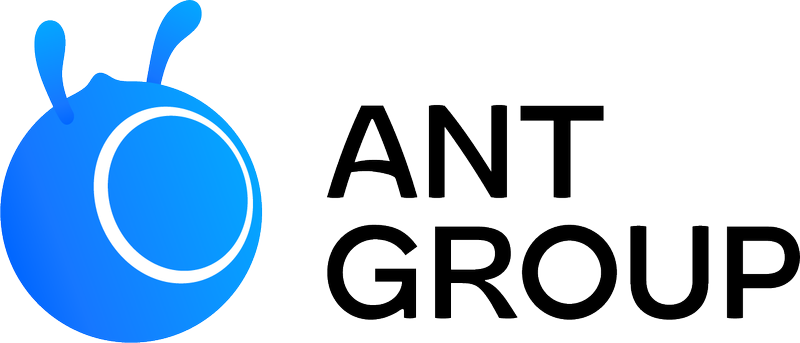}%
\logodivider
\raisebox{0.05in}{\includegraphics[height=0.20in]{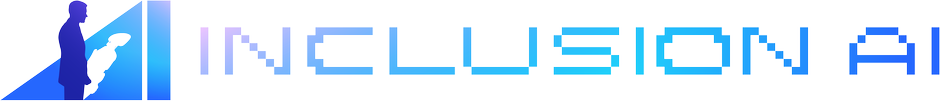}}%
\hfill
\raisebox{-0.03in}{\includegraphics[height=0.36in]{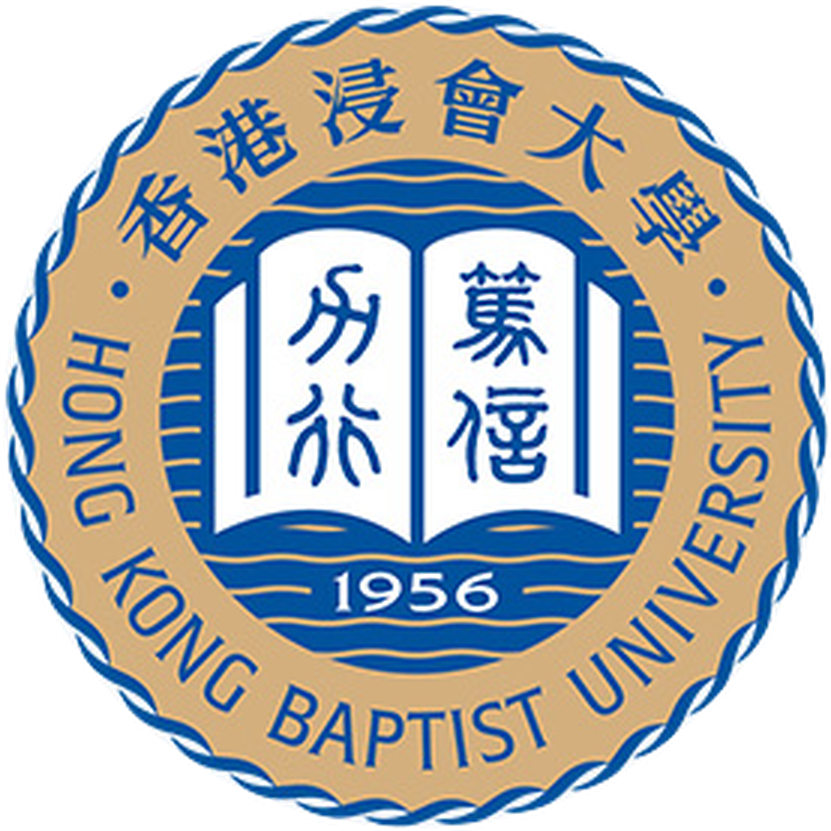}}%
\logodivider
\raisebox{-0.03in}{\includegraphics[height=0.36in]{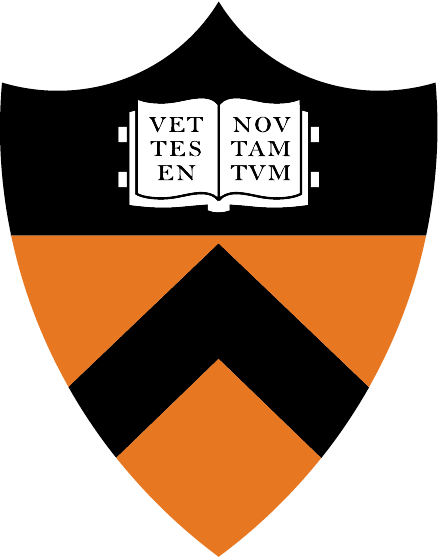}}
\par\vspace{4pt}
{\color{brandblue}\hrule height 0.6pt}
\vspace{16pt}
\begin{center}
{\LARGE\bfseries DConstrain: Compiling Context-Parameterized Transitions\\[3pt]
for Diffusion Language Model Constrained Decoding\par}
\vspace{14pt}
{\color{brandblue}\hrule height 0.6pt}
\vspace{14pt}
{\large\bfseries Shidong~Cao$^{1}$\quad\ Jing~Ma$^{1\dagger}$\quad\ Wenhao~Chai$^{2}$\quad\ Yuchen~Tian$^{1}$\quad\ Lun~Du$^{3\dagger}$\quad\ Da~Zheng$^{3\dagger}$\par}
\vspace{6pt}
{\normalsize $^{1}$Hong Kong Baptist University\qquad $^{2}$Princeton University\qquad $^{3}$Ant Group\par}
\vspace{4pt}
{\small\ttfamily \{cssdcao,majing\}@comp.hkbu.edu.hk\qquad \{dulun.dl,zhengda.zheng\}@antgroup.com\par}
\end{center}
{\let\thefootnote\relax\footnotetext{$^{\dagger}$Corresponding authors.}}
\vspace{6pt}
\begin{center}{\large\bfseries Abstract}\end{center}
\vspace{-4pt}
\begin{list}{}{\leftmargin=0.55in\rightmargin=0.55in}\item\relax
Diffusion language models (DLMs) decode multiple tokens in parallel, so individually valid tokens can be incompatible together under a schema. Exact joint inference over a grammar graph enforces compatibility, but it requires token-labeled successors across many hypothetical parser configurations. Encoding stacks and field histories in compiled states can cause exponential growth, which forces prior exact decoders to fix property orders and exclude valid outputs. We present \textbf{DConstrain}, which compiles schemas into \emph{context-parameterized transition programs}. Cold compilation specializes token consumption to local grammar states while retaining context-dependent stack operations, environment guards, and caller-dependent deferred parsing symbolically. Tracking field histories as runtime values supports arbitrary property order without enumerating field subsets offline. At runtime, warm compilation binds reached configurations to these programs to construct exact successors, overlapping with the model forward pass because graph construction requires no model scores. Furthermore, \emph{token-effect grouping} merges tokens with identical compiled effects before runtime contexts are bound, so groups are reused across stacks and environments even when concrete successors differ. Objective-specific score aggregation and token restoration provably preserve both the Viterbi optimum and the token-labeled path distribution of ungrouped inference. Across three benchmarks and two DLM families, DConstrain compiles 100\% of BFCL schemas under arbitrary property orders, whereas an optimized finite-state baseline compiles only 71.30--85.81\%. Under arbitrary property order, it reduces BFCL end-to-end decoding latency by 6.65--46.26$\times$ with 100\% schema validity and raises DiffusionGemma's BFCL accuracy from 37.27\% under fixed order to 81.56\%.

\end{list}
\vspace{4pt}

\section{Introduction}
\label{sec:introduction}

Diffusion language models (DLMs) generate text by iteratively denoising multiple tokens in parallel~\citep{austin2021d3pm,lou2024sedd,sahoo2024mdlm,nie2025llada,ye2025dream}. Parallel decoding reduces latency and makes DLMs attractive for agentic tasks such as tool calling~\citep{schick2023toolformer,qin2024toolllm}, API interaction, and structured data extraction. These tasks require outputs that strictly follow a schema~\citep{patil2025bfcl,geng2025jsonschemabench}, yet unconstrained DLMs often violate schema rules~\citep{suresh2025dingo,dang2026constrained}. In autoregressive (AR) models, schema compliance is enforced by masking invalid next tokens for a fixed left-to-right prefix~\citep{willard2023guided,llguidance,dong2025xgrammar}. This mechanism does not carry over directly to DLMs. Because a DLM fills several masked positions at once, two tokens that are each valid on their own can be invalid together.

Existing DLM constrained decoders follow two main paradigms. \emph{Verification-based methods} sample masked positions independently and then check whether the proposed output can still be completed~\citep{mundler2026cfg,zhang2026lave,jin2026epic}. Because independent proposals ignore dependencies among positions, decoding many tokens at once causes frequent conflicts and requires repeated resampling or heuristic fallbacks under finite budgets~\citep{mundler2026cfg,jin2026epic}. Conversely, \emph{automata-based methods} enforce compatibility through joint inference~\citep{suresh2025dingo,dang2026constrained,thimmaiah2026twister}. For the active denoising window (the \emph{canvas}), they propagate the grammar position by position into a \emph{canvas-specific grammar graph} whose paths represent sequences of mutually compatible tokens. The decoder then identifies the highest-scoring path or samples according to the graph's path weights. However, these inference algorithms assume a finite automaton with readily accessible token transitions, leaving the cost of constructing this representation for complex schemas largely unaddressed.

This bottleneck arises because DLM joint inference requires a richer \emph{successor-construction interface} than AR decoding. An AR decoder only needs a boolean mask of valid next tokens for a single realized prefix~\citep{dong2025xgrammar}. In contrast, joint inference must compute the exact successor \emph{parser configurations} for candidate tokens across many hypothetical configurations before any token is committed. Each configuration consists of a \emph{local grammar state} (a position inside one grammar rule) and a \emph{runtime context} (a stack of return points and a parsing environment, such as the set of fields already written). Finite-state methods encode runtime contexts as separate automaton states, and DINGO additionally precomputes transitions across the full vocabulary~\citep{suresh2025dingo}. In real schemas, nested objects multiply possible stack contexts, while unordered properties introduce exponentially many field histories. Even context-free grammars (CFGs) can require exponential size to represent all permutations of distinct fields~\citep{ellul2005regular}. To keep compiled automata tractable, prior exact decoders restrict schemas to a fixed property order~\citep{suresh2025dingo,dang2026constrained,thimmaiah2026twister}, excluding valid JSON outputs that the model may prefer.

Our key observation is that \emph{context-specific successors need not be compiled as context-specific transitions}. Consider a token containing two closing braces, \texttt{\}\}}, in a nested JSON object. After the first brace closes the inner object, parsing the remaining brace requires returning to the caller. If the parent still lacks a required field, the token is rejected; otherwise, it may close the parent. The return point depends on the stack, and field completion depends on the parsing environment. These contexts can share one compiled procedure: check field completion, return to the caller, and resume parsing the remaining bytes. Rather than precomputing successors for every combination of stack and field history, we compile how to compute them with the runtime context left as a parameter. This separation allows exact successors to be constructed at runtime without enumerating all possible contexts during compilation.

Building on this insight, we propose \textbf{DConstrain}, which compiles schemas into \emph{context-parameterized transition programs} (Figure~\ref{fig:method_overview}). \emph{Cold compilation} specializes token consumption to local grammar states: it precomputes byte-level parsing and local transitions wherever inputs are known, and retains context-dependent stack operations, environment guards, and deferred parsing symbolically. Tracking field histories as runtime values supports arbitrary property order without enumerating field subsets offline. At runtime, \emph{warm compilation} instantiates these programs for configurations reachable in the current canvas, resolves context-dependent operations, and constructs the exact successor relations required for joint inference. Since warm graph construction does not require model scores, it overlaps with the model forward pass. Finally, \emph{token-effect grouping} identifies equivalent compiled token effects during cold compilation, before runtime contexts are bound. The resulting groups can be reused across different stacks and environments, sharing successor computation even when the concrete successors differ across contexts. Objective-specific score aggregation and token restoration preserve the exact Viterbi optimum and token-labeled path distribution.

\begin{samepage}
Our contributions are summarized as follows:
\begin{itemize}
\item \textbf{Successor-Construction Bottleneck.} We identify token-labeled successor construction as the primary scalability bottleneck in automata-based DLM decoding. Materializing context-specific transitions incurs combinatorial growth with nesting and field histories. Even CFGs can require exponential size for arbitrary permutations of distinct fields, motivating fixed property orders in prior exact decoders.

\item \textbf{Context-Parameterized Compilation.} We introduce context-parameterized transition programs paired with a two-stage compiler and compile-time token-effect grouping. We prove that program execution preserves the reference successor relation and that grouping preserves both the Viterbi optimum and the full token-labeled path distribution of ungrouped inference. The compiled programs can be reused by any constrained-decoding algorithm that consumes token-labeled grammar successors.

\item \textbf{Exact Unordered Decoding.} Across three structured-output benchmarks and multiple DLM tokenizers, DConstrain substantially improves compilation coverage and latency under arbitrary property order. On BFCL, exact unordered constraints also improve task accuracy over both fixed-order constraints and unconstrained decoding.
\end{itemize}
\end{samepage}

\section{Related Work}
\label{sec:related_work}

\paragraph{Autoregressive Constrained Decoding.}
Grammar-constrained decoding is widely used to guarantee structured outputs in autoregressive LLMs. At each step, the decoder masks invalid tokens for the realized prefix and advances the parser with the selected token. Early methods introduced incremental parsing and semantic constraints for structured NLP tasks~\citep{scholak2021picard,poesia2022synchromesh,geng2023gcd}. Subsequent work corrected the distributional distortion of token-level masking through adaptive sampling~\citep{park2024grammaraligned}. Later systems focused on efficient tokenizer-aware mask generation~\citep{willard2023guided,beurerkellner2024domino,koo2024automata,llguidance,dong2025xgrammar}, and XGrammar 2 added JIT compilation with cross-grammar cache reuse~\citep{li2026xgrammar2}. Our compiler design is closest to two of these systems. XGrammar~\citep{dong2025xgrammar} precomputes stack-independent valid tokens at each grammar position and checks only the remaining tokens at runtime. \citet{koo2024automata} compose a detokenizing transducer with a pushdown automaton so that each token transition carries explicit stack operations. DConstrain also compiles context-independent work once, but it targets a different \emph{successor-construction interface}. A validity mask only identifies admissible next tokens for a single prefix. In contrast, DLM joint inference requires the full transition map $(c,v)\mapsto \mathsf{T}_{\mathcal G}(c,v)$ across many hypothetical configurations before any token is committed. DConstrain further keeps the written-field history as a runtime parameter, which no compact grammar can encode (Proposition~\ref{prop:unordered_properties}).

\paragraph{DLM Constrained Decoding.}
Early diffusion language models supported soft control by steering continuous latents with classifier gradients~\citep{li2022diffusionlm}, but soft guidance cannot guarantee valid syntax. Decoders for hard constraints fall into two families. The first family consists of \emph{verification-based decoders}, which sample candidate tokens independently and resample or recover when a grammar check fails. Constrained Diffusion~\citep{mundler2026cfg} checks whether the partial output can still be completed. LAVE~\citep{zhang2026lave} validates each proposed token against sampled lookahead completions. EPIC~\citep{jin2026epic} accelerates completion checks with memoized lexing and Earley-style parsing~\citep{earley1970} to commit verified token batches in parallel. Although these decoders avoid finite-state materialization, a feasibility check only ensures that some valid continuation exists. At each step, it recovers neither the highest-scoring joint assignment nor an exact sample from the constrained distribution. In our BFCL experiments, LAVE-Opt and EPIC-Opt have lower completion rates and throughput than DConstrain (Tables~\ref{tab:runtime_comparison} and~\ref{tab:runtime_common_success}).

The second family consists of \emph{automata-based methods}, which couple token choices through joint inference over finite automata~\citep{suresh2025dingo,dang2026constrained,thimmaiah2026twister}. DINGO~\citep{suresh2025dingo} builds a DFA and precomputes vocabulary-level token transitions. \citet{dang2026constrained} perform chain and tree inference over a finite automaton whose edges group tokens and sum their probabilities. Building on such step-exact samplers, \citet{thimmaiah2026twister} use twisted sequential Monte Carlo to correct trajectory bias across denoising steps. However, all three methods compile JSON schemas under a fixed property order to keep the automaton finite and small. Interpreting a pushdown automaton directly at runtime would avoid materializing context-specific states, but it would repeatedly execute token-consumption logic for every reached configuration. DConstrain resolves this tension by compiling reusable local grammar--token programs while leaving the runtime context unbound. Within this framework, our token-effect grouping generalizes the edge-level aggregation of \citet{dang2026constrained} to parameterized transitions. As a result, a single token partition per local grammar state serves every runtime context.

\section{Preliminaries}
\label{sec:preliminaries}

A JSON schema $\mathcal{G}$~\citep{jsonschema2022core} defines a grammar whose rules can nest. For example, an object property can itself hold an object. We therefore parse $\mathcal{G}$ with a pushdown parser~\citep{hopcroft2007automata}, where each grammar rule is represented as a small automaton and entering a nested rule pushes a return point onto a stack. At any point, the parser state is captured by a configuration $c=(q,s,e)$. Here $q$ is the \emph{local grammar state}, which marks a node inside the current rule's automaton. The stack $s$ holds pending return points, and the parsing environment $e$ records earlier parsing choices, such as which object fields have already been written. Together, $(s,e)$ form the \emph{runtime context} of $c$. Given a configuration $c$ and a vocabulary token $v\in\mathcal{V}$, the token-transition relation $\mathsf{T}_{\mathcal{G}}(c,v)$ returns the set of valid successor configurations $c'$ after consuming $v$. This set omits statically certified dead configurations that can never lead to a complete valid output (see Appendix~\ref{app:grammar_background} for formal parser semantics).

\paragraph{Automata-Based DLM Constrained Decoding.}
Automata-based methods apply $\mathsf{T}_{\mathcal{G}}$ across an active decoding window to coordinate token choices at multiple unresolved positions~\citep{suresh2025dingo,dang2026constrained}. Consider a denoising iteration $t$ and its canvas of $H$ token positions. Let $\mathcal{V}_i^t\subseteq\mathcal{V}$ denote the candidate tokens at position $i$; a position that is already fixed has a singleton candidate set. Starting from the parser configurations $\mathcal{C}_0$ compatible with the fixed prefix $\boldsymbol{u}$, the token-labeled transitions and reachable configurations at each position are defined recursively as
\begin{equation}
\begin{aligned}
E_i
&=
\bigl\{(c,v,c') \mid
c\in\mathcal{C}_{i-1},\;
v\in\mathcal{V}_i^t,\;
c'\in\mathsf{T}_{\mathcal{G}}(c,v)\bigr\}, \\
\mathcal{C}_i
&=
\bigl\{c' \mid \exists c,v:\ (c,v,c')\in E_i\bigr\},
\qquad i=1,\ldots,H.
\end{aligned}
\label{eq:canvas_grammar_graph}
\end{equation}
The position-indexed layers $\{\mathcal{C}_i\}_{i=0}^{H}$ and transitions $\{E_i\}_{i=1}^{H}$ form a layered \emph{canvas-specific grammar graph} $\mathcal{F}_t$. A full-window path through $\mathcal{F}_t$ is admissible if its terminal configuration satisfies the required acceptance or continuation condition.

Each denoising iteration constructs and resolves $\mathcal{F}_t$ in two stages. First, \emph{grammar propagation} builds the graph layers from left to right: starting from $\mathcal{C}_0$, it advances every reached configuration with every candidate token to collect its successors. Notably, propagation depends only on parser configurations and candidate sets, not on model scores. Second, \emph{joint inference} weights each edge $(c,v,c')\in E_i$ using the token probability $p_i(v)$ from the DLM predictions $P_t$ (defined in Appendix~\ref{app:dlm_background}). Max-product dynamic programming then extracts a highest-weight admissible path via the Viterbi algorithm~\citep{viterbi1967}, while sum-product computes path-weight sums for exact probabilistic sampling~\citep{goodman1999semiring,dang2026constrained}.

\paragraph{Problem Formulation.}
\label{sec:problem_formulation}
All automata-based decoders share this two-stage pipeline, and their efficiency hinges on how grammar propagation answers each successor query $\mathsf{T}_{\mathcal{G}}(c,v)$. In AR constrained decoding, each forward pass predicts a single position for a fixed prefix. The decoder only needs a boolean mask of valid next tokens, after which it samples one token $v$ and advances the parser once~\citep{dong2025xgrammar}. In contrast, a DLM leaves many positions unresolved at once, and each position can be reached by many hypothetical parser configurations. Before committing any token, grammar propagation must return exact successor sets rather than boolean masks, and it must evaluate every candidate token at every reached configuration. Without query reuse or token grouping, exhaustive propagation requires
\begin{equation}
N_{\mathrm{query}}^{\mathrm{naive}}
=
\sum_{i=1}^{H}
|\mathcal{C}_{i-1}|\,|\mathcal{V}_i^t|
\label{eq:transition_query_count}
\end{equation}
configuration--token queries per canvas.

To avoid expensive online parsing during propagation, finite-state pipelines such as DINGO~\citep{suresh2025dingo} precompute token-level transitions offline during \emph{cold compilation}. Given a materialized finite state space $Q_{\mathcal{G}}$, a dense transition table stores an entry for each of the $|Q_{\mathcal{G}}||\mathcal{V}|$ state--token pairs, turning each online query into a table lookup. However, flattening runtime contexts into explicit finite states duplicates shared grammar logic and precomputes transitions for contexts that a canvas may never reach. The following propositions quantify this state explosion.

\begin{proposition}[Context expansion; Appendix~\ref{sec:complexity_analysis}]
\label{prop:context_expansion}
Let $Q_{\mathrm{loc}}$ be the set of local grammar states, $g>1$ the number of continuation-frame symbols, $d_{\max}$ the stack-depth bound, and $B$ the maximum number of environment values per local-state--stack pair. A context-expanded state space contains
\begin{equation}
S_{d_{\max}}\;\le\;|Q_{\mathrm{loc}}|\,B\sum_{\ell=0}^{d_{\max}}g^{\ell}\;=\;O\!\left(|Q_{\mathrm{loc}}|\,B\,g^{d_{\max}}\right)
\label{eq:main_context_expansion}
\end{equation}
states. In particular, for bounded-depth brackets of $g$ types, a minimal complete DFA requires $1+\sum_{\ell=0}^{d_{\max}}g^{\ell}=\Theta(g^{d_{\max}})$ states. A dense context-expanded transition table therefore stores $S_{d_{\max}}|\mathcal{V}|$ entries, whereas a factorized compiler stores at most $|Q_{\mathrm{loc}}||\mathcal{V}|$ local-state--token entries.
\end{proposition}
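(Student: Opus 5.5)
The plan is to count context-expanded states as triples $(q,s,e)$, prove a matching lower bound for bounded-depth brackets with a Myhill--Nerode argument, and then read off the table sizes.

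\textbf{Upper bound.} A context-expanded state is a triple $(q,s,e)$ with $q\in Q_{\mathrm{loc}}$, $s$ a stack of continuation-frame symbols of depth $\ell\le d_{\max}$, and $e$ an environment value.
\begin{itemize}
\item For fixed $\ell$ there are at most $g^{\ell}$ stacks, since each of the $\ell$ frames is one of $g$ symbols.
\item By definition of $B$, each pair $(q,s)$ admits at most $B$ environment values.
\item Summing over $\ell=0,\dots,d_{\max}$ gives $S_{d_{\max}}\le |Q_{\mathrm{loc}}|\,B\sum_{\ell}g^{\ell}$.
\item Since $g>1$, the geometric sum equals $(g^{d_{\max}+1}-1)/(g-1)\le \tfrac{g}{g-1}g^{d_{\max}}\le 2g^{d_{\max}}$. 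This yields the $O(\cdot)$ form.
\end{itemize}

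\textbf{Lower bound.} Take the language $L$ of well-nested bracket strings over $g$ bracket types with nesting depth at most $d_{\max}$. The plan is a Myhill--Nerode argument.
\begin{itemize}
\item \emph{Reachable classes.} Every prefix of a word in $L$ determines its stack of unmatched open brackets, a word $w$ over $g$ symbols with $|w|\le d_{\max}$. Each such $w$ is reachable by emitting the open brackets of $w$.
\item \emph{Distinguishability.} Two distinct stack words $w\neq w'$ are distinguished by the suffix $\overline{w}$, the closing brackets of $w$ in reverse order. Appending $\overline{w}$ to a prefix with stack $w$ gives a word of $L$. Appending it to a prefix with stack $w'$ does not, for one of two reasons. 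If the lengths differ, the result is unbalanced. If the lengths are equal, there is a first mismatching bracket type, and closing it fails.
\item \emph{Dead class.} All non-prefixes, including prefixes that exceed depth $d_{\max}$, form one further class. It is distinct from every live class because the live classes can reach acceptance.
\item The minimal complete DFA therefore has exactly $1+\sum_{\ell=0}^{d_{\max}}g^{\ell}$ states. Exactness also needs the observation that prefixes with equal stacks have identical futures, so these are the only classes.
\item For $g\ge2$, $g^{d_{\max}}\le\sum_\ell g^\ell\le 2g^{d_{\max}}$, which gives $\Theta(g^{d_{\max}})$.
\end{itemize}

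\textbf{Table sizes.} A dense context-expanded table has one entry per state--token pair, so it stores $S_{d_{\max}}|\mathcal{V}|$ entries. A factorized compiler stores at most one program entry per pair $(q,v)\in Q_{\mathrm{loc}}\times\mathcal{V}$, hence at most $|Q_{\mathrm{loc}}||\mathcal{V}|$ entries. The stack operations and environment guards inside each program are symbolic, so they add no dependence on $s$ or $e$.

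\textbf{Main obstacle.} The delicate step is the distinguishability argument. The suffix must respect the depth bound: it does, because it only closes brackets. The argument must also handle the dead state separately and verify that the dead class is indeed distinct from every live class.
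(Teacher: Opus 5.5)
Your proposal is correct and follows the paper's own argument. The upper bound counts $(q,s,e)$ triples. The bracket case uses Myhill--Nerode, with distinct stacks separated by their closing sequences plus one sink. The table sizes follow by direct counting. You supply details the paper's sketch leaves implicit: the explicit geometric-sum constant, and the check that prefixes with equal stacks have identical futures, which yields exactly $1+\sum_{\ell=0}^{d_{\max}} g^{\ell}$ states rather than only a lower bound.
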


\emph{Proof sketch.} Each context-expanded state pairs a local state with a stack of depth at most $d_{\max}$ over $g$ frame symbols and one environment value; counting these triples yields Eq.~\eqref{eq:main_context_expansion}. For bounded-depth brackets, any two distinct stacks of open brackets require distinct closing sequences. By the Myhill--Nerode theorem~\citep{hopcroft2007automata}, a minimal complete DFA must assign a separate state to every valid stack plus one sink state, which bounds the number of table entries required.

Proposition~\ref{prop:context_expansion} leaves the environment factor $B$ abstract. When a schema includes unordered object properties, $B$ itself grows exponentially, and replacing finite automata with context-free grammars does not avoid this blow-up.

\begin{proposition}[Unordered properties; Appendix~\ref{sec:complexity_analysis}]
\label{prop:unordered_properties}
For every $k\ge1$, there exists an object schema with $k$ required unordered properties such that every DFA accepting its valid serializations has at least $2^k$ states, and every context-free grammar generating them has size $2^{\Omega(k)}$. A factorized compiler instead requires only $O(k)$ local grammar states for this object and tracks its field history in a $k$-bit runtime environment value.
\end{proposition}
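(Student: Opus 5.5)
We take the object schema with $k$ required properties whose keys are pairwise distinct single characters $a_1,\dots,a_k$ (each wrapped in quotes) and whose values are a fixed constant, say \texttt{0}. To abstract away JSON punctuation, we work with the language
\[
L_k=\{\texttt{\{}\,\pi\,\texttt{\}} : \pi \text{ is a serialization of the $k$ fields in some order, each exactly once}\},
\]
which is a fixed homomorphic image of the permutation language $\mathrm{Perm}_k=\{\sigma(a_1)\cdots\sigma(a_k):\sigma\in S_k\}$ up to delimiters and commas.

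\textbf{DFA bound.} We apply Myhill--Nerode, as in the proof of Proposition~\ref{prop:context_expansion}. For each subset $S\subseteq\{a_1,\dots,a_k\}$, fix the prefix $x_S$ that opens the object and writes the fields of $S$ in increasing index order, separated by commas. For $S\neq T$, pick $a_j\in S\setminus T$ (swapping the roles of $S$ and $T$ if necessary). The suffix that writes the fields of the complement of $T$ and then closes the object completes $x_T$ to a valid serialization. The same suffix applied to $x_S$ either repeats $a_j$ or omits some field, so that string is invalid. The boundary comma needs care, since the complement of $T$ may be empty; we handle that case by choosing the suffix to be just \texttt{\}} or to begin with a comma, as appropriate. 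This gives $2^k$ pairwise inequivalent prefixes, hence at least $2^k$ states.

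\textbf{CFG bound.} This is the main obstacle, and we plan to reduce to the known result of \citet{ellul2005regular}, already cited in the introduction: every CFG for $\mathrm{Perm}_k$ has size $2^{\Omega(k)}$. We transfer the bound to $L_k$ by showing that a CFG $G$ for $L_k$ yields a CFG for $\mathrm{Perm}_k$ of size $O(|G|)$. The transfer uses closure of CFLs under inverse homomorphism, intersection with a regular set, and homomorphism. Concretely, we intersect with a regular language enforcing the shape $\texttt{\{}(\text{field},)^{*}\text{field}\texttt{\}}$, then apply the erasing homomorphism that maps each field encoding to its letter $a_i$ and deletes the punctuation.

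The subtlety is keeping each step linear, or at worst polynomial, in $|G|$. Two points need checking:
\begin{itemize}
\item The triple construction with a DFA of $O(k)$ states blows the grammar up by a polynomial factor $\mathrm{poly}(k)\cdot|G|$. This is still enough, because $\mathrm{poly}(k)\,|G|\ge 2^{\Omega(k)}$ forces $|G|\ge 2^{\Omega(k)}$.
\item Alternatively, if the keys are single tokens, a direct substitution argument works without the intersection.
\end{itemize}
If the keys are multi-character strings, we would instead choose them so that the field encodings form a prefix code. That makes the decoding homomorphism well defined.

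\textbf{Upper bound.} The factorized compiler uses one local automaton with $O(k)$ states: open, expect-key, one state per key (or per key-trie node), value, and comma-or-close. The environment is a $k$-bit mask $e$. The key transition for $a_j$ is guarded by $\neg e_j$ and sets $e_j$. The close transition is guarded by $e=1^k$. Correctness follows by induction on the prefix: $e$ equals the set of fields written so far. Therefore accepted strings are exactly those with each field written once, in any order.
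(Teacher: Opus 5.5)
Your proposal is correct and follows essentially the same route as the paper. Both use Myhill--Nerode over the $2^k$ subset prefixes. Both reduce the CFG bound to the $2^{\Omega(k)}$ permutation-language bound of Ellul et al.\ by intersecting with an $O(k)$-state regular language (polynomial blowup) and applying an erasing homomorphism on single-character keys. Both use a $k$-bit written-field mask with uniqueness and completion guards for the factorized side.

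The only notable difference is in robustness. Your distinguishing string ($x_S$ followed by the completion of $x_T$) is invalid solely because it repeats $a_j$, and your star-shaped regular language admits repeated fields, so both bounds rely on your exactly-once definition of $L_k$. The paper instead appends the completion of $u_S$ to $u_T$, so a required field is omitted, and it intersects with a language of exactly $k$ pairs. Its two bounds therefore hold however duplicate keys are treated.
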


\emph{Proof sketch.} Any two prefixes that have written different subsets of required fields admit different sets of valid completions, so the Myhill--Nerode theorem separates all $2^k$ subsets. For the CFG bound, intersecting the schema language with a regular language that fixes field values and counts, and then projecting onto property names, yields the language of all permutations of $k$ distinct symbols. Every CFG generating this permutation language has size $2^{\Omega(k)}$~\citep{ellul2005regular}. Because regular intersection and homomorphic projection increase CFG size by at most a polynomial factor, the original grammar must also have size $2^{\Omega(k)}$.

Propositions~\ref{prop:context_expansion} and~\ref{prop:unordered_properties} bound the size of static finite-state and CFG representations rather than the runtime cost of parsing a single string. For offline finite-state compilers, however, representation size directly dictates compilation cost, which explains the sharp growth with nesting depth and property count observed in Section~\ref{sec:analysis}.

To overcome this bottleneck, we seek a compiled artifact $\mathcal{A}_{\mathcal{G}}$ containing executable successor-construction procedures, with the runtime context supplied at query time. These procedures must implement the \emph{exact successor-construction interface}
\begin{equation}
\operatorname{Succ}(\mathcal{A}_{\mathcal{G}},c,v)
=
\mathsf{T}_{\mathcal{G}}(c,v)
\label{eq:successor_contract}
\end{equation}
for every configuration--token pair queried during grammar propagation. Our goal is to minimize cold-compilation cost, compiled storage, and online query latency without materializing the exponentially large context-expanded state space.

\section{DConstrain}
\label{sec:methodology}

DConstrain compiles schemas into \emph{context-parameterized transition programs} (Figure~\ref{fig:method_overview}). These programs implement the successor interface in Eq.~\eqref{eq:successor_contract}, independently of the downstream constrained-decoding algorithm. Cold compilation specializes token consumption to local grammar states while leaving the runtime context unbound and retaining context-dependent operations symbolically (Section~\ref{sec:pushdown_factorization}). Warm compilation executes these programs for reached configurations to construct exact successors (Section~\ref{sec:on_demand_instantiation}). Token-effect grouping partitions tokens by their canonical compiled effects during cold compilation and reuses these partitions during warm compilation (Section~\ref{sec:token_effect_reduction}). Appendices~\ref{sec:factorized_correctness} and~\ref{sec:complexity_analysis} provide correctness and complexity analyses.

\begin{figure*}[t]
\centering
\includegraphics[width=\textwidth]{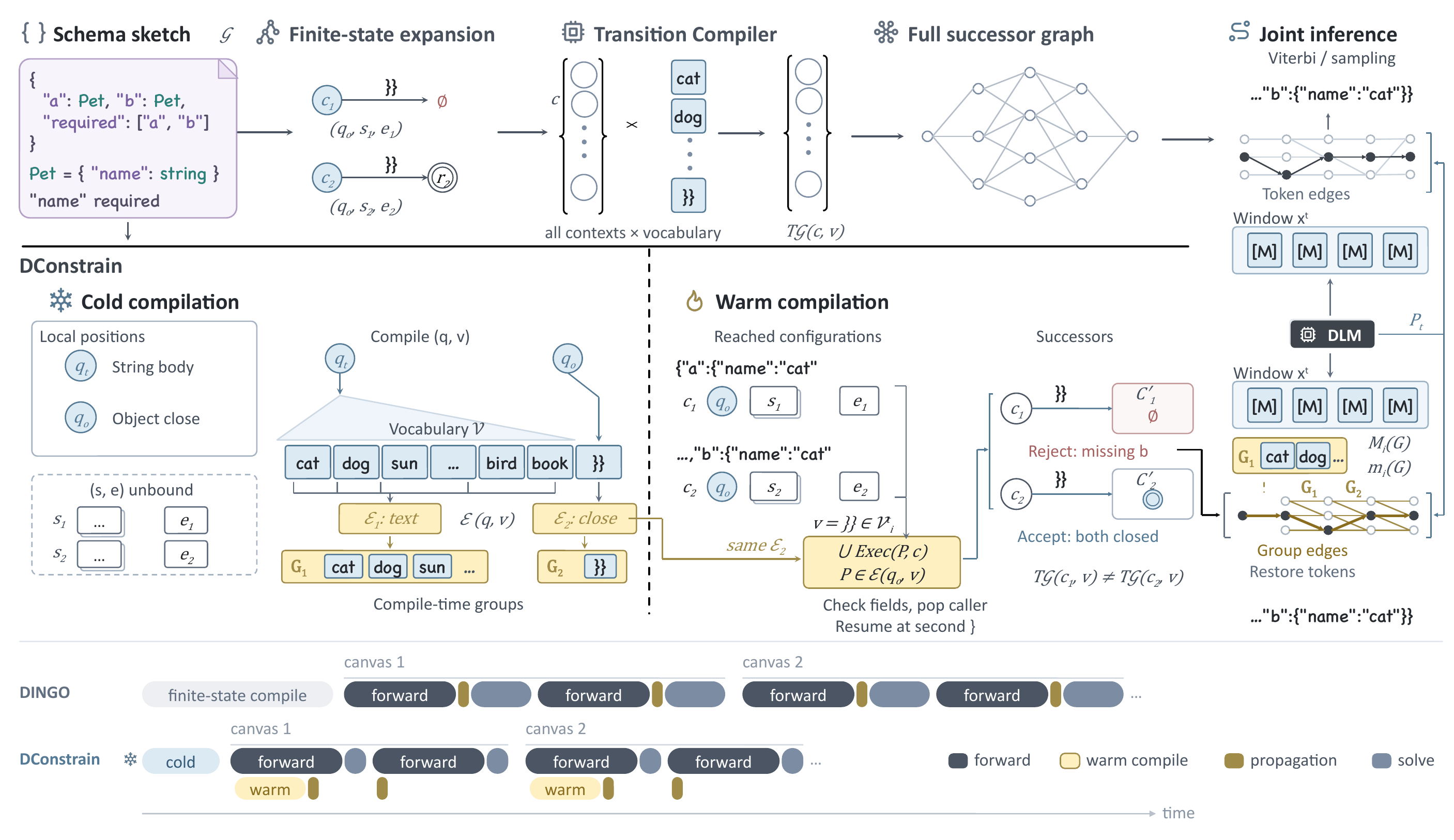}
\caption{Overview of DConstrain on a nested-object schema with required properties \texttt{a} and \texttt{b} sharing the same \texttt{Pet} rule. \textbf{Top:} A finite-state pipeline such as DINGO encodes the grammar position and runtime context together in each state and precomputes transitions across the full vocabulary. States $c_1$ and $c_2$ share the local position $q_o$ but carry their own stacks and field histories. The token \texttt{\}\}} is rejected from $c_1$, whose parent still requires \texttt{b}, but closes both objects from $c_2$. \textbf{Bottom:} Cold compilation takes a local grammar position $q$ and token $v$, leaving $(s,e)$ unbound, and produces context-parameterized program sets. The local-position legend identifies $q_t$ as the string body and $q_o$ as the child-object close. Tokens with identical canonical effects at each position form compile-time groups. For \texttt{\}\}}, the shared program set $\mathcal{E}_2$ defers parsing the second brace until the caller is known. Warm compilation binds the concrete stack and environment, returning an empty successor set for $c_1$ and an accepting configuration for $c_2$. \textbf{Right:} Joint inference uses group-level edges with sum or max score aggregation before restoring concrete tokens. \textbf{Timeline:} Cold compilation runs once offline. Warm compilation overlaps with the first forward pass of each canvas, and grammar propagation overlaps with every forward pass before the inference solve. DINGO runs propagation and solve sequentially after each forward pass. Graphs and block lengths are schematic.}
\label{fig:method_overview}
\end{figure*}

\subsection{Context-Parameterized Transition Compilation}
\label{sec:pushdown_factorization}

Given a target schema $\mathcal{G}$, DConstrain represents local grammar rules as a recursive transition network (RTN)~\citep{woods1970rtn}, keeping the call stack $s$ and parsing environment $e$ outside the RTN as runtime context. Specifically, we decompose the environment as $e=(d,\eta)$, where $d$ tracks the active container depth and $\eta$ tracks the choice environment (such as the written-field history), giving the full configuration $c=(q,s,d,\eta)$. Cold compilation specializes token consumption to each local grammar state and token pair $(q,v)$ while leaving $(s,d,\eta)$ unbound:
\begin{equation}
(q,v)
\xrightarrow{\operatorname{ColdCompile}}
\mathcal{E}(q,v)
=
\{P_k\}_{k=1}^{K_{q,v}}.
\label{eq:parametric_program_set}
\end{equation}
Each program $P_k=(I_{k,1};\ldots;I_{k,\ell_k})$ is an ordered symbolic execution alternative for consuming $v$ from $q$. Its guards may reject a concrete runtime context, and its continuation may depend on a caller selected by the incoming stack. Cold compilation evaluates byte matching and local grammar operations wherever their inputs are known, retaining context-dependent operations and the input and execution state needed to resume deferred parsing. The resulting programs take two forms: residual programs with a deferred caller-dependent continuation, and compact summaries with a statically determined target state.

When token consumption or token-boundary processing must pop a return frame from the incoming stack, the caller needed to continue parsing is unknown during cold compilation. The compiler resolves pushes and pops whose return destinations can be determined locally. At the first return that requires the incoming stack, it records a \emph{residual program} with the remaining token input and the current symbolic execution state. Warm compilation uses the concrete stack to resume the caller-dependent continuation. A shared program can therefore compute exact successors even when its final local state cannot be determined offline.

When a program does not require a caller from the incoming stack $s$, symbolic execution reduces it to a compact summary $P=(q',\rho,\Delta,m,\Omega)$. Here $q'$ is the resulting local grammar state, and $\rho$ is the sequence of new return frames pushed (and not yet popped) while consuming $v$. The integers $\Delta$ and $m$ record the net change and minimum intermediate change in container depth $d$. Finally, $\Omega$ is an ordered sequence of environment guards and updates on $\eta$, such as verifying that field \texttt{a} has not yet appeared and recording its presence. During warm compilation, binding a summary $P$ to a concrete runtime context $(s,d,\eta)$ evaluates directly to
\begin{equation}
\operatorname{Bind}(P;s,d,\eta)=
\begin{cases}
\{(q',\,s\mathbin{\|}\rho,\,d+\Delta,\,\Omega(\eta))\}, & d+m\ge 0,\ \Omega(\eta)\ne\bot,\\
\varnothing, & \text{otherwise},
\end{cases}
\label{eq:main_summary_binding}
\end{equation}
where $\mathbin{\|}$ appends frames to the stack and $\bot$ indicates a violated guard (see Appendix~\ref{sec:factorized_correctness} for the full specification with auxiliary metadata). For these summaries, binding evaluates the remaining guards and updates under $(s,d,\eta)$ and constructs the successor without recompiling grammar logic for that context. Together, summaries and residuals allow configurations with the same local state $q$ to reuse $\mathcal{E}(q,v)$ even when their successors differ.

The choice environment is likewise handled through guarded runtime operations. For an object with $k$ fields under arbitrary property order, $\eta$ stores the set of written fields as a runtime bitmask, while the compiled operator $\Omega$ encodes the guards for field uniqueness and required-field completion. A single program set $\mathcal{E}(q,v)$ is shared across the $2^k$ possible field histories, so cold compilation does not enumerate those subsets as local grammar states. Concrete histories remain runtime values, and program execution preserves the order and scope of their guards and updates.

\subsection{Warm Compilation}
\label{sec:on_demand_instantiation}

Warm compilation executes the remaining operations of a compiled program under a concrete runtime context. We first specify its token-level successor interface: for a reached configuration $c=(q,s,e)\in\mathcal{C}_{i-1}$ and a candidate token $v\in\mathcal{V}_i^t$ at position $i$, it returns
\begin{equation}
\widehat{\mathsf{T}}_{\mathcal{G}}(c,v)
=
\bigcup_{P\in\mathcal{E}(q,v)}
\operatorname{Exec}(P,c).
\label{eq:warm_instantiation}
\end{equation}
Here $\operatorname{Exec}(P,c)$ applies Eq.~\eqref{eq:main_summary_binding} to ordinary summaries, resumes residual programs under the concrete caller, and completes token-boundary processing. Failed guards contribute no successor. The resulting context-specific successors drive the layer-by-layer propagation in Eq.~\eqref{eq:canvas_grammar_graph}. The implementation evaluates this interface through the compile-time partitions introduced in Section~\ref{sec:token_effect_reduction}, using one representative token per reached configuration and active group. Transitions for unreachable contexts are never instantiated.

The following theorem establishes that the two-stage compiler and program execution preserve the reference successor relation, including caller-dependent continuations.

\begin{theorem}[Exact successor construction; Appendix~\ref{sec:factorized_correctness}]
\label{thm:main_exactness}
For every reached configuration $c$ and candidate token $v$, warm compilation returns
\begin{equation}
\widehat{\mathsf{T}}_{\mathcal{G}}(c,v)=\mathsf{T}_{\mathcal{G}}(c,v)
\label{eq:main_exactness}
\end{equation}
under the reference parser semantics. Consequently, running grammar propagation with $\widehat{\mathsf{T}}_{\mathcal{G}}$ reproduces the exact layers $\{\mathcal{C}_i\}$ and transitions $\{E_i\}$ of Eq.~\eqref{eq:canvas_grammar_graph} from the same initial set $\mathcal{C}_0$ and candidate sets $\{\mathcal{V}_i^t\}$. Under the same acceptance or continuation condition, both graphs admit the exact same token-labeled window language.
\end{theorem}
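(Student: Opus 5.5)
The plan is a simulation argument. I define a reference small-step semantics for the pushdown parser: a token $v$ is consumed byte by byte from a configuration $c=(q,s,d,\eta)$. Each step is one of four kinds: a local RTN move, a push of a return frame, a pop that resumes at the caller on top of $s$, or a guard/update on $(d,\eta)$. After the last byte comes token-boundary processing and removal of statically certified dead configurations. Then $\mathsf{T}_{\mathcal G}(c,v)$ is the set of final configurations over all nondeterministic runs. Cold compilation is symbolic execution of exactly these steps with $(s,d,\eta)$ left as parameters, so the goal is a lemma of the form: for every run $r$ of the reference parser on $(c,v)$ there is a program $P\in\mathcal E(q,v)$ with $c'\in\operatorname{Exec}(P,c)$, and conversely. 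Equation~\eqref{eq:main_exactness} then follows by taking the union in Eq.~\eqref{eq:warm_instantiation}.

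I would prove the lemma by induction on the number of consumed bytes and processed steps. The invariant relates a symbolic state to every concrete context it represents. A symbolic state is $(q_j,\rho_j,\Delta_j,m_j,\Omega_j)$ together with the remaining input. For any concrete $(s,d,\eta)$, instantiating it gives the concrete state reached after the same steps, which is $(q_j,s\|\rho_j,d+\Delta_j,\Omega_j(\eta))$. The instantiation is defined exactly when $d+m_j\ge0$ and $\Omega_j(\eta)\neq\bot$; when it is undefined, the concrete run has aborted at the same point.

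The cases go as follows.
\begin{itemize}
\item \textbf{Byte matching and local moves.} These are context-independent, so symbolic and concrete execution coincide.
\item \textbf{Pushes.} These append to $\rho_j$.
\item \textbf{Pops with $\rho_j$ nonempty.} These are resolved locally, because the frame is known.
\item \textbf{Depth changes.} These update $\Delta_j$ and $m_j=\min(m_j,\Delta_j)$. The running-minimum check is then equivalent to every intermediate depth staying nonnegative.
\item \textbf{Environment operations.} These are appended to $\Omega_j$ in order. Guards and updates on the bitmask compose sequentially, so evaluating the composed $\Omega_j(\eta)$ equals performing them one by one; this uses the paper's claim that order and scope are preserved.
\end{itemize}
At the end of the input, the invariant yields Eq.~\eqref{eq:main_summary_binding} for summaries. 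Nondeterministic branching in the reference parser corresponds to distinct programs $P_k$. Both directions of inclusion follow, provided cold compilation enumerates all symbolic branches, that is, it prunes a branch only when the guard is unsatisfiable for every context.

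The main obstacle is the residual case: a pop with $\rho_j$ empty, which needs the top of the incoming $s$. There the invariant has to say that the residual program stores the symbolic state and the remaining input at the moment of the first caller-dependent pop. Exec then pops the concrete frame and resumes the reference semantics from the resulting concrete configuration; I would either run the reference parser directly, or recursively apply the compiled programs of the caller's local state to the suffix. By the invariant, the state at the suspension point is exactly the concrete state the reference run reaches there. From that point the two executions coincide, either trivially or by an inner induction on the remaining input length, which is shorter. A similar point covers token-boundary processing and dead-state pruning. Pruning relies only on static certification that is sound for every context, so applying it after binding removes exactly the configurations the reference relation omits.

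The consequences follow by induction on $i$ in Eq.~\eqref{eq:canvas_grammar_graph}. With the same $\mathcal C_0$ and the same $\mathcal V_i^t$, equal successor sets give equal $E_i$, and hence equal $\mathcal C_i$. Identical layered graphs with the same terminal condition have identical sets of admissible token-labeled paths, so they admit the same window language.
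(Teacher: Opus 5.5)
Your proposal is essentially the paper's argument: your symbolic invariant (concrete stack $s\mathbin{\|}\rho$, binding succeeds iff $d+m\ge 0$ and $\Omega(\eta)\ne\bot$) is Lemma~\ref{lem:local_summary_preservation}, your resumption of residuals by induction on the remaining input is the strong induction on $|w|$ over arbitrary byte strings in Lemma~\ref{lem:stack_return_continuation}, and your layer induction is Theorem~\ref{thm:factorized_window_preservation}. The paper additionally proves points your idealized model (one configuration at a time, pruning only at the end, every branch enumerated) takes for granted: residuals are emitted only after at least one byte is consumed, with zero-byte returns handled by entry and caller closures, which is what makes your suffix strictly shorter; and a normalization-commutation lemma, origin tracking and $m$-dominance, which are needed because the real implementation prunes dead configurations at intermediate stages, merges states from different sources, and keeps only the largest $m$ per key.
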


\subsection{Token-Effect Grouping}
\label{sec:token_effect_reduction}

Parameterized programs share transition logic across runtime contexts, but evaluating candidate tokens separately at each reached configuration retains the vocabulary factor in Eq.~\eqref{eq:transition_query_count}. We extend token aggregation in finite-state inference~\citep{dang2026constrained} to context-parameterized programs: during cold compilation, tokens are grouped by their canonical compiled effects at each local state, before any concrete stack or environment is bound. For an unconstrained string rule, for example, many text tokens compile to the same continuation program (Figure~\ref{fig:method_overview}). We define the grouping relation at local state $q$ by
\begin{equation}
v_1\sim_q v_2
\quad\Longleftrightarrow\quad
\operatorname{Canon}\!\left(\mathcal{E}(q,v_1)\right)
=
\operatorname{Canon}\!\left(\mathcal{E}(q,v_2)\right).
\label{eq:token_effect_equivalence}
\end{equation}
Cold compilation records the partition $\Pi_q=\mathcal{V}/{\sim_q}$. Canonicalization retains instruction order, operands, stack and environment operations, and execution-relevant boundary and residual metadata. Canonical equality is a sufficient, not necessary, condition for identical successor sets under every admissible runtime context (Appendix~\ref{sec:factorized_correctness}). The same partition can therefore be reused across different stacks and environments, even though their concrete successor sets may differ.

The offline partition $\Pi_q$ is shared by all configurations with local state $q$. At position $i$, the candidate set selects its active groups, $\Pi_{q,i}^t=\{G\in\Pi_q: G\cap\mathcal{V}_i^t\neq\varnothing\}$. For each reached configuration $c=(q,s,e)$ and active group $G\in\Pi_{q,i}^t$, warm compilation evaluates the successor relation once using any representative token $v_G\in G\cap\mathcal{V}_i^t$:
\begin{equation}
\widehat{\mathsf{T}}_{\mathcal{G}}(c,G)
=
\widehat{\mathsf{T}}_{\mathcal{G}}(c,v_G),
\qquad
v_G\in G\cap\mathcal{V}_i^t.
\label{eq:grouped_transition_instantiation}
\end{equation}
Using these representative successor sets, warm compilation directly constructs one group-level edge $(c,G,c')$ for each returned successor $c'\in\widehat{\mathsf{T}}_{\mathcal{G}}(c,G)$, without first materializing the corresponding token-level edges. This yields the grouped transition layer
\begin{equation}
E_i^{\Pi}=\bigl\{(c,G,c')\;\big|\; c\in\mathcal{C}_{i-1},\ G\in\Pi_{q(c),i}^t,\ c'\in\widehat{\mathsf{T}}_{\mathcal{G}}(c,G)\bigr\}.
\label{eq:grouped_layer}
\end{equation}
As analyzed in Appendix~\ref{sec:complexity_analysis}, the number of successor evaluations now scales with active groups rather than individual tokens:
\begin{equation}
N_{\mathrm{inst}}=\sum_{i=1}^{H}\sum_{c\in\mathcal{C}_{i-1}}\bigl|\Pi_{q(c),i}^t\bigr|\;\le\;\min\Bigl\{N_{\mathrm{query}}^{\mathrm{naive}},\;R_t\max_q|\Pi_q|\Bigr\},
\qquad R_t=\sum_{i=1}^{H}|\mathcal{C}_{i-1}|.
\label{eq:main_instantiation_bound}
\end{equation}
Grouping reduces successor evaluations and the number of edges traversed during joint inference. The bound above concerns successor evaluations; objective-specific score aggregation still uses the individual probabilities of active group members, whose identities remain available for token restoration.

The compile-time partition is independent of model scores and of the inference objective. Joint inference assigns weights to its group-level edges using the current predictions $P_t$. For nonnegative token weights $p_i(v)$, define the objective-specific aggregates
\begin{equation}
M_i(G)=\sum_{v\in G\cap\mathcal{V}_i^t}p_i(v),
\qquad
m_i(G)=\max_{v\in G\cap\mathcal{V}_i^t}p_i(v).
\label{eq:grouped_inference_weights}
\end{equation}
Sum-product inference weights each group-level edge by $M_i(G)$, whereas max-product inference uses $m_i(G)$. For any fixed source configuration $c$, the active groups are disjoint and all tokens within each group have the same successor set. These objective-specific reductions therefore preserve the weighted transition operator between $c$ and every successor configuration $c'$ for $\oplus\in\{+,\max\}$:
\begin{equation}
A_i^{\oplus}(c,c')\;=\;\bigoplus_{v\in\mathcal{V}_i^t:\,c'\in\mathsf{T}_{\mathcal{G}}(c,v)}p_i(v)\;=\;\bigoplus_{G\in\Pi_{q(c),i}^t:\,c'\in\widehat{\mathsf{T}}_{\mathcal{G}}(c,G)}r_i^{\oplus}(G),
\qquad r_i^{+}=M_i,\quad r_i^{\max}=m_i .
\label{eq:main_grouped_operator}
\end{equation}
After selecting a group-level edge, DConstrain restores a token $v\in G\cap\mathcal{V}_i^t$ according to the same inference objective. Sum-product restoration is applied to selected groups with positive mass $M_i(G)>0$. The restoration rules are
\begin{equation}
\Pr(v\mid G)=\frac{p_i(v)}{M_i(G)}\ \ \text{(sum-product)},
\qquad
v\in\operatorname*{arg\,max}_{v'\in G\cap\mathcal{V}_i^t}p_i(v')\ \ \text{(max-product)}.
\label{eq:main_restoration}
\end{equation}

The following theorem shows that grouping preserves weighted inference.

\begin{theorem}[Grouping preserves inference; Appendix~\ref{sec:weighted_grouping}]
\label{thm:main_grouping}
Let $\Pi_q$ be the token-effect partitions of Eq.~\eqref{eq:token_effect_equivalence}, and assume a positive partition function and exact arithmetic. Under both chain and tree inference, grouped sum-product inference with the restoration rule in Eq.~\eqref{eq:main_restoration} preserves both the partition function and the exact token-labeled path distribution of ungrouped inference. Furthermore, grouped max-product inference attains the exact Viterbi optimum and recovers every optimal token sequence when all ties are retained.
\end{theorem}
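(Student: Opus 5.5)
The proof reduces both claims to the edge-level identity in Eq.~\eqref{eq:main_grouped_operator}, established once, and then pushes it through the semiring dynamic programs.

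\emph{Step 1: tokens in one active group share a successor set.} Fix a reached configuration $c=(q,s,e)$ and an active group $G\in\Pi_{q,i}^t$. For any $v_1,v_2\in G\cap\mathcal{V}_i^t$ we have $v_1\sim_q v_2$. By the sufficiency of canonical equality stated after Eq.~\eqref{eq:token_effect_equivalence}, $\widehat{\mathsf{T}}_{\mathcal{G}}(c,v_1)=\widehat{\mathsf{T}}_{\mathcal{G}}(c,v_2)$. Theorem~\ref{thm:main_exactness} turns this into $\mathsf{T}_{\mathcal{G}}(c,v_1)=\mathsf{T}_{\mathcal{G}}(c,v_2)$. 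So Eq.~\eqref{eq:grouped_transition_instantiation} does not depend on the representative chosen, and $c'\in\widehat{\mathsf{T}}_{\mathcal{G}}(c,G)$ holds iff $c'\in\mathsf{T}_{\mathcal{G}}(c,v)$ for every (equivalently, some) $v\in G\cap\mathcal{V}_i^t$.

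\emph{Step 2: the grouped operator equals the token-level operator.} Since $\Pi_q$ partitions $\mathcal{V}$, the active groups partition $\mathcal{V}_i^t$. The token-level sum defining $A_i^{\oplus}(c,c')$ can therefore be regrouped by $G$. Associativity and commutativity of $\oplus\in\{+,\max\}$ give Eq.~\eqref{eq:main_grouped_operator} with $r_i^{+}=M_i$ and $r_i^{\max}=m_i$.

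\emph{Step 3: chain inference.} The ungrouped partition function is $Z=\sum_{\text{admissible paths}}\prod_i p_i(v_i)$. Using distributivity of $\times$ over $\oplus$, it factors as the product of the layer matrices $A_i^{+}$ contracted with the initial set $\mathcal{C}_0$ and the admissible terminal set. Step~2 shows the grouped layer matrices are identical, so $Z$ is preserved; the same argument with $\max$ preserves the Viterbi value.

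\emph{Step 4: path distribution.} A grouped path $(c_0,G_1,c_1,\dots,G_H,c_H)$ is sampled by the forward-filtering/backward-sampling procedure with probability $\prod_i M_i(G_i)/Z$. Restoration by Eq.~\eqref{eq:main_restoration} then multiplies in $\prod_i p_i(v_i)/M_i(G_i)$, giving $\prod_i p_i(v_i)/Z$. Positivity of $Z$ makes every sampled group have $M_i(G_i)>0$, so restoration is well defined.

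Step~1 guarantees that the map from a token-labeled configuration path to its grouped path plus choice of tokens is a bijection onto token paths with $v_i\in G_i$. Each token path is therefore produced exactly once, with its ungrouped probability.

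\emph{Step 5: max-product.} Each optimal grouped path has weight $\prod_i m_i(G_i)$, which equals $\max$ over its restorations. Taking argmax tokens, and all of them when ties are kept, recovers every token sequence attaining the optimum. Conversely, any optimal token path maps to a grouped path of at least equal weight, hence an optimal one, and its tokens are argmax members there. Hence no optimal sequence is lost.

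\emph{Step 6: tree inference.} Messages are semiring sums over edges of the same local operators, so Steps~2--5 apply message by message. Restoration is performed per sampled edge.

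\emph{Main obstacle.} The delicate point is the bijection in Step~4. It needs exactness (Theorem~\ref{thm:main_exactness}) together with the fact that grouping is per source configuration $c$, not per successor pair. Otherwise a token could contribute to two groups leading to the same $c'$, or be double-counted. Partition disjointness at fixed $q(c)$ rules this out, and I would state it explicitly as a lemma before Step~3.
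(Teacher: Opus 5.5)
Your proposal is correct and follows essentially the same route as the paper. Both arguments derive the grouped operator identity from per-source disjointness of the active groups together with effect equivalence, observe that this preserves every chain and tree message, show that group selection followed by restoration reproduces $W(\pi)/Z$, and handle max-product by lifting optimal grouped paths through group maximizers. Two differences are worth noting: the paper phrases sampling as per-edge conditionals (and, for the tree sampler, per-leaf conditionals under the interval operators $K_{\ell:r}$) that telescope, rather than your path-level bijection; and in your max-product converse you should state explicitly that $Z>0$ forces a positive optimum, which is what makes every token on an optimal path a group maximizer.
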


\paragraph{Solver-agnostic inference interface.}
Because DConstrain preserves the exact successor relation $\mathsf{T}_{\mathcal{G}}(c,v)$ and both weighted transition operators, the same grouped graph representation supports different automata-based solvers. Appendix~\ref{app:tree_sampling} evaluates both Viterbi decoding and the tree-based sampling algorithm of \citet{dang2026constrained}.

\paragraph{Runtime integration.}
Because neither warm compilation nor grammar propagation depends on model logits, CPU workers execute both stages concurrently while the GPU computes the DLM forward pass. Warm compilation runs only during the first forward pass of each canvas. Subsequent denoising iterations on the same canvas only shrink the candidate sets $\mathcal{V}_i^t$ as positions are resolved, so their required successor queries are a subset of those already computed. In every iteration, grammar propagation builds $\mathcal{F}_t$ from these cached successors during the forward pass, and joint inference begins as soon as both $\mathcal{F}_t$ and $P_t$ are ready. Compact compressed sparse row (CSR) layouts and specialized GPU kernels further accelerate the joint inference solve. Appendix~\ref{app:runtime_integration} details the full compiler--runtime pipeline and the optimizations enabled in each experimental setting.

\section{Experiments}
\label{sec:experiments}
\paragraph{Workloads and models.}
We evaluate on BFCL~\citep{patil2025bfcl}, JSONSchemaBench~\citep{geng2025jsonschemabench}, and ComplexFuncBench~\citep{zhong2025complexfuncbench}. BFCL and ComplexFuncBench provide tool declarations for both compilation and task-level evaluation, while JSONSchemaBench stress-tests compilation with diverse and complex schemas. For end-to-end evaluation, we select two recent DLM families with different decoding granularities: LLaDA2.2~\citep{bie2026llada22} with a block size of 32 and DiffusionGemma~\citep{diffusiongemma2026} with a block size of 256. Both decode a long output block by block, as in block diffusion~\citep{arriola2025blockdiffusion}. To reflect realistic serving conditions, we run LLaDA2.2 in SGLang~\citep{zheng2024sglang} and DiffusionGemma in vLLM~\citep{kwon2023vllm} rather than in standalone research inference loops. Cold-compilation experiments need only the corresponding tokenizers.

\paragraph{Comparisons.}
For cold compilation, we compare with DINGO's DFA compiler~\citep{suresh2025dingo}, an NFA adaptation, and DINGO-Opt, our engineering-optimized version of DINGO, under matched schemas and tokenizers. The main BFCL runtime tables compare unconstrained Native-Opt decoding, the verification-based methods LAVE-Opt~\citep{zhang2026lave} and EPIC-Opt~\citep{jin2026epic}, DINGO-Opt, and DConstrain. Table~\ref{tab:runtime_latency_matched} compares DConstrain and DINGO-Opt on their common successful cases, including Any-order. The -Opt suffix identifies baseline implementations with our optimizations and adaptations to the decoding setup. Appendix~\ref{app:additional_results} adds alternative solvers and further baselines.

\paragraph{Cold-compilation targets.}
Table~\ref{tab:cold_compilation} compares compilers that prepare token-labeled successor construction for joint solving. Verification-based methods prepare a different object: a parser and auxiliary structures for deciding whether a proposed partial output admits a legal completion~\citep{mundler2026cfg,zhang2026lave,jin2026epic}. A Boolean verifier can return after finding one witness. A joint solver needs the compatible successor alternatives so that max- or sum-product inference can combine their model weights. DConstrain compiles the programs that construct these alternatives, leaving the inference algorithm downstream. Verifier preparation and successor compilation thus have different output contracts; Tables~\ref{tab:runtime_comparison} and~\ref{tab:runtime_common_success} compare their resulting decoding pipelines in completion, accuracy, and throughput.



\newcommand{\NA}{\multicolumn{1}{c}{\textcolor{gray!65}{\textemdash}}}
\begin{table*}[t]
\centering
\setlength{\abovecaptionskip}{0pt}
\setlength{\belowcaptionskip}{8pt}
\caption{Cold-compilation performance on BFCL, JSONSchemaBench (JSB), and ComplexFuncBench (CFB). Success@180s is the fraction of all attempted schemas compiled successfully within 180\,s; P50/P95 use each method's successful cases. DINGO-Opt$^{\ast}$ retains DINGO's finite-state strategy with the same engineering optimizations as DConstrain (Appendix~\ref{app:runtime_integration}). Speedup$^{\dagger}$ is the row method's mean latency divided by DConstrain's on paired successful cases; timeouts count only toward Success@180s.}
\label{tab:cold_compilation}
\small
\setlength{\tabcolsep}{6pt}
\renewcommand{\arraystretch}{1.10}
\begin{adjustbox}{max width=\textwidth}
\begin{tabular}{@{}lPN{3.2}N{3.2}c@{\hspace{14pt}}PN{3.2}N{3.2}c@{}}
\toprule
& \multicolumn{4}{c}{\textbf{LLaDA2.2}} & \multicolumn{4}{c}{\textbf{DiffusionGemma}} \\
\cmidrule(lr){2-5}\cmidrule(l){6-9}
\textbf{Method} & {\textbf{Success} (\%)} & {\textbf{P50} (s)} & {\textbf{P95} (s)} & {\textbf{Speedup}$^{\dagger}$} & {\textbf{Success} (\%)} & {\textbf{P50} (s)} & {\textbf{P95} (s)} & {\textbf{Speedup}$^{\dagger}$} \\
\midrule
\tgroup{9}{BFCL --- fixed order}
DINGO & 90.3 & 54.300 & 132.900 & 92.9$\times$ & 91.7 & 38.452 & 115.412 & 140.4$\times$ \\
NFA & 82.5 & 61.300 & 148.500 & 113.4$\times$ & 86.0 & 45.522 & 122.171 & 167.4$\times$ \\
DINGO-Opt$^{\ast}$ & \bfseries 100.0 & 2.210 & 5.640 & 3.92$\times$ & \bfseries 100.0 & 1.200 & 3.290 & 3.88$\times$ \\
\ours
\textbf{DConstrain} & \bfseries 100.0 & \bfseries 0.581 & \bfseries 1.481 & \NA & \bfseries 100.0 & \bfseries 0.314 & \bfseries 0.820 & \NA \\
\tgroup{9}{BFCL --- any order}
DINGO-Opt$^{\ast}$ & \multicolumn{1}{c}{71.30} & 5.85 & 12.25 & 9.43$\times$ & \multicolumn{1}{c}{85.81} & 3.76 & 14.16 & 10.39$\times$ \\
\ours
\textbf{DConstrain} & \bfseries 100.0 & \bfseries 0.66 & \bfseries 1.23 & \NA & \bfseries 100.0 & \bfseries 0.41 & \bfseries 0.90 & \NA \\
\tgroup{9}{JSB --- fixed order}
DINGO & 67.7 & 38.003 & 129.107 & 85.0$\times$ & 67.7 & 31.972 & 96.340 & 122.9$\times$ \\
NFA & 63.7 & 45.261 & 128.400 & 102.8$\times$ & 71.3 & 30.714 & 113.846 & 114.0$\times$ \\
DINGO-Opt$^{\ast}$ & 74.8 & 2.510 & 70.500 & 4.83$\times$ & 77.6 & 1.650 & 37.660 & 4.53$\times$ \\
\ours
\textbf{DConstrain} & \bfseries 81.4 & \bfseries 0.602 & \bfseries 14.137 & \NA & \bfseries 83.3 & \bfseries 0.367 & \bfseries 8.265 & \NA \\
\tgroup{9}{CFB --- fixed order}
DINGO & 34.8 & 115.180 & 159.237 & 121.9$\times$ & 44.7 & 84.733 & 124.044 & 174.6$\times$ \\
NFA & 32.5 & 131.287 & 175.143 & 132.0$\times$ & 74.0 & 116.627 & 171.694 & 221.9$\times$ \\
DINGO-Opt$^{\ast}$ & \bfseries 100.0 & 4.657 & 7.340 & 4.02$\times$ & \bfseries 100.0 & 2.210 & 3.020 & 3.96$\times$ \\
\ours
\textbf{DConstrain} & \bfseries 100.0 & \bfseries 1.170 & \bfseries 1.782 & \NA & \bfseries 100.0 & \bfseries 0.560 & \bfseries 0.758 & \NA \\
\tgroup{9}{CFB --- any order}
DINGO-Opt$^{\ast}$ & 9.9 & 7.63 & 16.69 & 8.12$\times$ & 13.7 & 2.90 & 9.40 & 3.54$\times$ \\
\ours
\textbf{DConstrain} & \bfseries 100.0 & \bfseries 0.99 & \bfseries 1.41 & \NA & \bfseries 100.0 & \bfseries 0.80 & \bfseries 1.16 & \NA \\
\bottomrule
\end{tabular}
\end{adjustbox}
\end{table*}
\subsection{Main Results}
\label{sec:compilation_results}

Table~\ref{tab:cold_compilation} evaluates cold compilation under fixed and arbitrary property order, with nesting depth limited to four for comparability with the finite-state baselines. The BFCL runtime evaluation uses two complementary cohorts. Table~\ref{tab:runtime_comparison} counts cold admission, runtime completion, strict-schema validity, and official semantic accuracy over all 2,495 cases; TPS and TPF use each row's completed cases. Table~\ref{tab:runtime_common_success} defines a common runtime-success cohort independently for each model and property-order condition by intersecting the four reported methods within that condition. Schema-invalid completions remain in each cohort. DINGO-Opt compiles 71.30\% of BFCL schemas with LLaDA2.2 and 85.81\% with DiffusionGemma under Any-order. Table~\ref{tab:runtime_comparison} reports its Any-order outcomes on the full pool. Including it in the intersection would substantially shrink the common-success cohorts in Table~\ref{tab:runtime_common_success}; Table~\ref{tab:runtime_latency_matched} compares DINGO-Opt and DConstrain on their paired successful cases. Each order condition has its own common-success cohort. Appendix~\ref{app:compilation_protocol} specifies the accounting rules.

\paragraph{Cold-compilation efficiency.}
DConstrain reduces compilation cost across both tokenizers and all datasets. Under fixed order and depth four, it is $3.88$--$4.83\times$ faster than DINGO-Opt on paired cases, even though DINGO-Opt already shares our C++ pipeline and CSR representation (Appendix~\ref{app:runtime_integration}). The gap widens under arbitrary property order: DConstrain keeps 100\% compilation coverage on BFCL and CFB, whereas DINGO-Opt loses substantial coverage. On the cases that both methods compile, DConstrain remains $3.54$--$10.39\times$ faster. On JSB, most remaining DConstrain failures with the LLaDA2.2 tokenizer come from JSON Schema features outside the supported fragment, such as \texttt{oneOf} exclusivity and general \texttt{allOf} intersection (13.7\% of all schemas); 4.8\% exceed compilation resource or construction budgets (Appendix~\ref{app:jsb_failure_analysis}). Mature engines also support only part of this benchmark: on its GitHub-Hard split, XGrammar and Guidance declare support for 69\% and 60\% of the schemas~\citep{geng2025jsonschemabench}.

\paragraph{Runtime completion and throughput.}
DConstrain completes all 2,495 cases with 100\% strict-schema validity under both models and both property-order settings (Table~\ref{tab:runtime_comparison}). On the common cohorts in Table~\ref{tab:runtime_common_success}, it has the highest TPS among the constrained methods in all four settings. Its TPS is $1.56$--$17.03\times$ that of LAVE-Opt and $13.15$--$30.21\times$ that of EPIC-Opt. Under Schema-order, it reduces overhead above Native-Opt by $10.14\times$ for LLaDA2.2 and $21.75\times$ for DiffusionGemma relative to DINGO-Opt. On the paired Any-order cohorts in Table~\ref{tab:runtime_latency_matched}, DConstrain reduces end-to-end latency by $6.65\times$ and $46.26\times$, and solve latency per forward by $26.40\times$ and $108.54\times$, respectively.

Table~\ref{tab:runtime_latency_matched} reports end-to-end and per-forward solver latency on the DConstrain/DINGO-Opt common-success cohorts. Warm compilation overlaps the model forward pass. In every plotted setting of Figure~\ref{fig:warm_compile_overlap}, at least 94\% of warm-compilation calls finish within the mean forward time. Appendix~\ref{app:tree_sampling} reports additional solver experiments.

\paragraph{Tokens per forward and GPU computation.}
TPF measures how many model-generated tokens are produced per generation forward, making it an important measure of GPU computation per output token in a DLM. Raising the token-acceptance threshold commits fewer tokens per step and can improve accuracy through additional refinement, reducing TPF. LAVE also uses additional model forwards for lookahead to reduce uncertainty~\citep{zhang2026lave}. Both approaches spend more GPU computation per generated token. DConstrain combines the highest Any-order semantic accuracy with higher TPF than the verification-based methods on both models: 81.60\% accuracy at 8.10 TPF for LLaDA2.2 and 81.56\% at 8.15 TPF for DiffusionGemma on the full pool. The common-success results show the same combination of high accuracy and high TPF. TPF counts non-stop model token IDs over all generation forwards for the selected cases; CPU-generated completion text is excluded.

\begin{figure*}[t]
\centering
\includegraphics[width=0.98\textwidth]{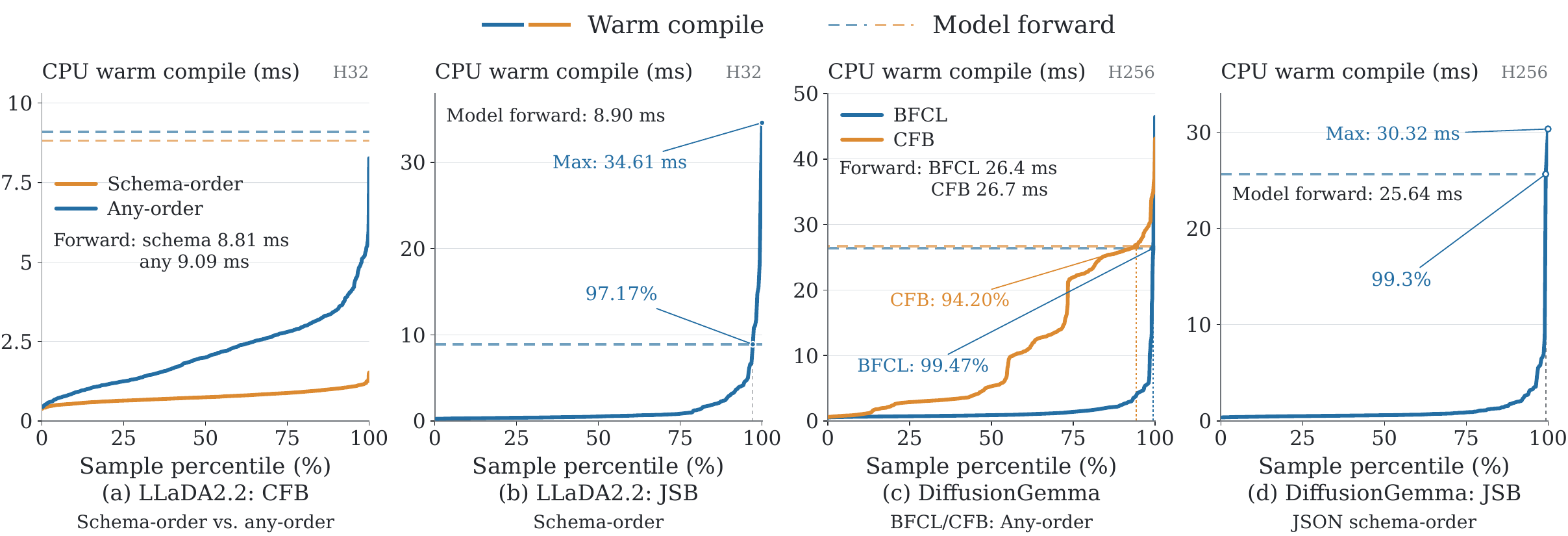}
\caption{Warm-compilation latency under asynchronous execution. CPU warm compilation overlaps with GPU model execution. Dashed lines indicate mean model-forward durations.}
\label{fig:warm_compile_overlap}
\end{figure*}

\begin{table*}[t]
\centering
\setlength{\abovecaptionskip}{0pt}
\setlength{\belowcaptionskip}{7pt}
\caption{BFCL outcomes over all 2,495 cases, reported separately for Schema-order and Any-order. $N$ counts each method's runtime completions. Cold admission, runtime completion (Run), strict-schema validity (Valid.), and official semantic accuracy (Acc.) use 2,495 as the denominator; TPS and TPF use each row's $N$ completed cases.}
\label{tab:runtime_comparison}
\small
\setlength{\tabcolsep}{1.8pt}
\renewcommand{\arraystretch}{1.16}
\begin{adjustbox}{max width=\textwidth}
\begin{tabular}{@{}lcccccccc@{\hspace{10pt}}cccccccc@{}}
\toprule
\multicolumn{17}{@{}l}{\textbf{Schema-order}} \\[3pt]
& \multicolumn{8}{c}{\textbf{DiffusionGemma}} & \multicolumn{8}{c}{\textbf{LLaDA2.2}} \\
\cmidrule(lr){2-9}\cmidrule(l){10-17}
Method & {$N$} & {\shortstack{Cold\\(\%)}} & {\shortstack{Run\\(\%)}} & {\shortstack{Valid.\\(\%)}} & {TPF} & {\shortstack{Acc.\\(\%)}} & {TPS} & {\shortstack{Overhead\\(\%) $\downarrow$}} & {$N$} & {\shortstack{Cold\\(\%)}} & {\shortstack{Run\\(\%)}} & {\shortstack{Valid.\\(\%)}} & {TPF} & {\shortstack{Acc.\\(\%)}} & {TPS} & {\shortstack{Overhead\\(\%) $\downarrow$}} \\
\midrule
Native-Opt & \multicolumn{1}{c}{\textbf{2495}} & \NA & \multicolumn{1}{c}{\textbf{100.00}} & \multicolumn{1}{c}{89.50} & \multicolumn{1}{c}{8.04} & \multicolumn{1}{c}{\textbf{76.39}} & \multicolumn{1}{c}{\textbf{165.14}} & \multicolumn{1}{c}{\textbf{0.0}} & \multicolumn{1}{c}{\textbf{2495}} & \NA & \multicolumn{1}{c}{\textbf{100.00}} & \multicolumn{1}{c}{94.87} & \multicolumn{1}{c}{\textbf{7.40}} & \multicolumn{1}{c}{\textbf{80.80}} & \multicolumn{1}{c}{\textbf{752.25}} & \multicolumn{1}{c}{\textbf{0.0}} \\
\midrule
\tgroup{17}{Verification-based}
LAVE-Opt & \multicolumn{1}{c}{2272} & \multicolumn{1}{c}{\textbf{100.00}} & \multicolumn{1}{c}{91.06} & \multicolumn{1}{c}{91.06} & \multicolumn{1}{c}{6.05} & \multicolumn{1}{c}{36.71\textsubscript{\,\textcolor{red!75!black}{$\downarrow$39.68}}} & \multicolumn{1}{c}{6.00} & \multicolumn{1}{c}{2652.3} & \multicolumn{1}{c}{2480} & \multicolumn{1}{c}{\textbf{100.00}} & \multicolumn{1}{c}{99.40} & \multicolumn{1}{c}{99.40} & \multicolumn{1}{c}{4.67} & \multicolumn{1}{c}{72.55\textsubscript{\,\textcolor{red!75!black}{$\downarrow$8.25}}} & \multicolumn{1}{c}{221.51} & \multicolumn{1}{c}{239.6} \\
EPIC-Opt & \multicolumn{1}{c}{2024} & \multicolumn{1}{c}{\textbf{100.00}} & \multicolumn{1}{c}{81.12} & \multicolumn{1}{c}{80.92} & \multicolumn{1}{c}{2.94} & \multicolumn{1}{c}{36.63\textsubscript{\,\textcolor{red!75!black}{$\downarrow$39.76}}} & \multicolumn{1}{c}{3.43} & \multicolumn{1}{c}{4714.6} & \multicolumn{1}{c}{2463} & \multicolumn{1}{c}{98.88} & \multicolumn{1}{c}{98.72} & \multicolumn{1}{c}{98.72} & \multicolumn{1}{c}{5.62} & \multicolumn{1}{c}{63.41\textsubscript{\,\textcolor{red!75!black}{$\downarrow$17.39}}} & \multicolumn{1}{c}{24.80} & \multicolumn{1}{c}{2933.3} \\
\midrule
\tgroup{17}{Automata-based}
DINGO-Opt & \multicolumn{1}{c}{\textbf{2495}} & \multicolumn{1}{c}{\textbf{100.00}} & \multicolumn{1}{c}{\textbf{100.00}} & \multicolumn{1}{c}{\textbf{100.00}} & \multicolumn{1}{c}{8.11} & \multicolumn{1}{c}{37.19\textsubscript{\,\textcolor{red!75!black}{$\downarrow$39.20}}} & \multicolumn{1}{c}{23.13} & \multicolumn{1}{c}{614.0} & \multicolumn{1}{c}{\textbf{2495}} & \multicolumn{1}{c}{\textbf{100.00}} & \multicolumn{1}{c}{\textbf{100.00}} & \multicolumn{1}{c}{\textbf{100.00}} & \multicolumn{1}{c}{6.86} & \multicolumn{1}{c}{72.63\textsubscript{\,\textcolor{red!75!black}{$\downarrow$8.17}}} & \multicolumn{1}{c}{101.32} & \multicolumn{1}{c}{642.4} \\
\ours \textbf{DConstrain} & \multicolumn{1}{c}{\textbf{2495}} & \multicolumn{1}{c}{\textbf{100.00}} & \multicolumn{1}{c}{\textbf{100.00}} & \multicolumn{1}{c}{\textbf{100.00}} & \multicolumn{1}{c}{\textbf{8.13}} & \multicolumn{1}{c}{37.27\textsubscript{\,\textcolor{red!75!black}{$\downarrow$39.12}}} & \multicolumn{1}{c}{130.47\textsubscript{\,\textcolor{spdgreen}{5.64$\times$}}} & \multicolumn{1}{c}{\textbf{26.6}\textsubscript{\,23.10$\times$}} & \multicolumn{1}{c}{\textbf{2495}} & \multicolumn{1}{c}{\textbf{100.00}} & \multicolumn{1}{c}{\textbf{100.00}} & \multicolumn{1}{c}{\textbf{100.00}} & \multicolumn{1}{c}{6.89} & \multicolumn{1}{c}{72.71\textsubscript{\,\textcolor{red!75!black}{$\downarrow$8.09}}} & \multicolumn{1}{c}{437.94\textsubscript{\,\textcolor{spdgreen}{4.32$\times$}}} & \multicolumn{1}{c}{\textbf{71.8}\textsubscript{\,8.95$\times$}} \\
\midrule
\addlinespace[4pt]
\multicolumn{17}{@{}l}{\textbf{Any-order}} \\[3pt]
& \multicolumn{8}{c}{\textbf{DiffusionGemma}} & \multicolumn{8}{c}{\textbf{LLaDA2.2}} \\
\cmidrule(lr){2-9}\cmidrule(l){10-17}
Method & {$N$} & {\shortstack{Cold\\(\%)}} & {\shortstack{Run\\(\%)}} & {\shortstack{Valid.\\(\%)}} & {TPF} & {\shortstack{Acc.\\(\%)}} & {TPS} & {\shortstack{Overhead\\(\%) $\downarrow$}} & {$N$} & {\shortstack{Cold\\(\%)}} & {\shortstack{Run\\(\%)}} & {\shortstack{Valid.\\(\%)}} & {TPF} & {\shortstack{Acc.\\(\%)}} & {TPS} & {\shortstack{Overhead\\(\%) $\downarrow$}} \\
\midrule
Native-Opt & \multicolumn{1}{c}{\textbf{2495}} & \NA & \multicolumn{1}{c}{\textbf{100.00}} & \multicolumn{1}{c}{90.34} & \multicolumn{1}{c}{8.14} & \multicolumn{1}{c}{76.55} & \multicolumn{1}{c}{\textbf{167.90}} & \multicolumn{1}{c}{\textbf{0.0}} & \multicolumn{1}{c}{\textbf{2495}} & \NA & \multicolumn{1}{c}{\textbf{100.00}} & \multicolumn{1}{c}{94.95} & \multicolumn{1}{c}{7.21} & \multicolumn{1}{c}{80.68} & \multicolumn{1}{c}{\textbf{736.38}} & \multicolumn{1}{c}{\textbf{0.0}} \\
\midrule
\tgroup{17}{Verification-based}
LAVE-Opt & \multicolumn{1}{c}{2408} & \multicolumn{1}{c}{97.07} & \multicolumn{1}{c}{96.51} & \multicolumn{1}{c}{96.47} & \multicolumn{1}{c}{7.71} & \multicolumn{1}{c}{79.20\textsubscript{\,\textcolor{spdgreen}{$\uparrow$2.65}}} & \multicolumn{1}{c}{9.50} & \multicolumn{1}{c}{1667.4} & \multicolumn{1}{c}{2420} & \multicolumn{1}{c}{97.07} & \multicolumn{1}{c}{96.99} & \multicolumn{1}{c}{96.99} & \multicolumn{1}{c}{5.74} & \multicolumn{1}{c}{79.24\textsubscript{\,\textcolor{red!75!black}{$\downarrow$1.44}}} & \multicolumn{1}{c}{313.71} & \multicolumn{1}{c}{134.7} \\
EPIC-Opt & \multicolumn{1}{c}{2212} & \multicolumn{1}{c}{92.38} & \multicolumn{1}{c}{88.66} & \multicolumn{1}{c}{88.18} & \multicolumn{1}{c}{7.44} & \multicolumn{1}{c}{74.71\textsubscript{\,\textcolor{red!75!black}{$\downarrow$1.84}}} & \multicolumn{1}{c}{6.28} & \multicolumn{1}{c}{2573.6} & \multicolumn{1}{c}{2151} & \multicolumn{1}{c}{86.41} & \multicolumn{1}{c}{86.21} & \multicolumn{1}{c}{86.21} & \multicolumn{1}{c}{6.09} & \multicolumn{1}{c}{65.05\textsubscript{\,\textcolor{red!75!black}{$\downarrow$15.63}}} & \multicolumn{1}{c}{37.35} & \multicolumn{1}{c}{1871.6} \\
\midrule
\tgroup{17}{Automata-based}
DINGO-Opt & \multicolumn{1}{c}{2141} & \multicolumn{1}{c}{85.81} & \multicolumn{1}{c}{85.81} & \multicolumn{1}{c}{85.81} & \multicolumn{1}{c}{8.13} & \multicolumn{1}{c}{69.85\textsubscript{\,\textcolor{red!75!black}{$\downarrow$6.70}}} & \multicolumn{1}{c}{2.96} & \multicolumn{1}{c}{5572.3} & \multicolumn{1}{c}{1779} & \multicolumn{1}{c}{71.30} & \multicolumn{1}{c}{71.30} & \multicolumn{1}{c}{71.30} & \multicolumn{1}{c}{\textbf{8.11}} & \multicolumn{1}{c}{59.30\textsubscript{\,\textcolor{red!75!black}{$\downarrow$21.38}}} & \multicolumn{1}{c}{42.39} & \multicolumn{1}{c}{1637.2} \\
\ours \textbf{DConstrain} & \multicolumn{1}{c}{\textbf{2495}} & \multicolumn{1}{c}{\textbf{100.00}} & \multicolumn{1}{c}{\textbf{100.00}} & \multicolumn{1}{c}{\textbf{100.00}} & \multicolumn{1}{c}{\textbf{8.15}} & \multicolumn{1}{c}{\textbf{81.56}\textsubscript{\,\textcolor{spdgreen}{$\uparrow$5.01}}} & \multicolumn{1}{c}{103.82\textsubscript{\,\textcolor{spdgreen}{35.07$\times$}}} & \multicolumn{1}{c}{\textbf{61.7}\textsubscript{\,90.28$\times$}} & \multicolumn{1}{c}{\textbf{2495}} & \multicolumn{1}{c}{\textbf{100.00}} & \multicolumn{1}{c}{\textbf{100.00}} & \multicolumn{1}{c}{\textbf{100.00}} & \multicolumn{1}{c}{8.10} & \multicolumn{1}{c}{\textbf{81.60}\textsubscript{\,\textcolor{spdgreen}{$\uparrow$0.92}}} & \multicolumn{1}{c}{386.70\textsubscript{\,\textcolor{spdgreen}{9.12$\times$}}} & \multicolumn{1}{c}{\textbf{90.4}\textsubscript{\,18.10$\times$}} \\
\bottomrule
\end{tabular}
\end{adjustbox}
\par\vspace{4pt}
\begin{minipage}{\textwidth}
\footnotesize\raggedright
Bold marks the best values within each model and order panel, including ties, and highlights DConstrain overhead. Acc. subscripts show percentage-point changes from Native-Opt: \textcolor{spdgreen}{$\uparrow$} indicates an increase and \textcolor{red!75!black}{$\downarrow$} a decrease. Overhead is $(\mathrm{TPS}_{\mathrm{Native\text{-}Opt}}/\mathrm{TPS}-1)\times100\%$. DConstrain TPS and Overhead subscripts give throughput speedups and overhead-reduction factors over DINGO-Opt in the same panel, respectively. Table~\ref{tab:runtime_latency_matched} reports the paired latency comparison. Schema-order fixes property order; Any-order permits arbitrary order. For Native-Opt, these identify prompt conditions. TPS is model tokens/s; TPF counts non-stop model token IDs per generation forward, excluding CPU completion text. TPS timing excludes prefill for LLaDA2.2 and includes it for DiffusionGemma. The -Opt suffix denotes our implementation optimizations and decoding adaptations for the baselines.
\end{minipage}
\end{table*}

\begin{table*}[t]
\centering
\setlength{\abovecaptionskip}{0pt}
\setlength{\belowcaptionskip}{7pt}
\caption{BFCL quality and throughput on common runtime-success cohorts, defined separately for each model and property-order condition. Within each panel, we intersect the successful-run sets of Native-Opt, LAVE-Opt, EPIC-Opt, and DConstrain. Schema-invalid completions are retained.}
\label{tab:runtime_common_success}
\small
\setlength{\tabcolsep}{2.5pt}
\renewcommand{\arraystretch}{1.16}
\begin{adjustbox}{max width=\textwidth}
\begin{tabular}{@{}lccccc@{\hspace{10pt}}ccccc@{}}
\toprule
\multicolumn{11}{@{}l}{\textbf{Schema-order}} \\[3pt]
& \multicolumn{5}{c}{\textbf{DiffusionGemma} ($N=1{,}874$)} & \multicolumn{5}{c}{\textbf{LLaDA2.2} ($N=2{,}448$)} \\
\cmidrule(lr){2-6}\cmidrule(l){7-11}
Method & {\shortstack{Valid.\\(\%)}} & {TPF} & {\shortstack{Acc.\\(\%)}} & {TPS} & {\shortstack{Overhead\\(\%) $\downarrow$}} & {\shortstack{Valid.\\(\%)}} & {TPF} & {\shortstack{Acc.\\(\%)}} & {TPS} & {\shortstack{Overhead\\(\%) $\downarrow$}} \\
\midrule
Native-Opt & \multicolumn{1}{c}{89.70} & \multicolumn{1}{c}{7.07} & \multicolumn{1}{c}{\textbf{78.50}} & \multicolumn{1}{c}{\textbf{145.71}} & \multicolumn{1}{c}{\textbf{0.0}} & \multicolumn{1}{c}{94.97} & \multicolumn{1}{c}{\textbf{7.42}} & \multicolumn{1}{c}{\textbf{81.04}} & \multicolumn{1}{c}{\textbf{754.40}} & \multicolumn{1}{c}{\textbf{0.0}} \\
\midrule
\tgroup{11}{Verification-based}
LAVE-Opt & \multicolumn{1}{c}{\textbf{100.00}} & \multicolumn{1}{c}{5.89} & \multicolumn{1}{c}{48.19\textsubscript{\,\textcolor{red!75!black}{$\downarrow$30.31}}} & \multicolumn{1}{c}{6.76} & \multicolumn{1}{c}{2055.5} & \multicolumn{1}{c}{\textbf{100.00}} & \multicolumn{1}{c}{4.67} & \multicolumn{1}{c}{72.74\textsubscript{\,\textcolor{red!75!black}{$\downarrow$8.30}}} & \multicolumn{1}{c}{221.32} & \multicolumn{1}{c}{240.9} \\
EPIC-Opt & \multicolumn{1}{c}{99.73} & \multicolumn{1}{c}{3.19} & \multicolumn{1}{c}{48.72\textsubscript{\,\textcolor{red!75!black}{$\downarrow$29.78}}} & \multicolumn{1}{c}{3.81} & \multicolumn{1}{c}{3724.4} & \multicolumn{1}{c}{\textbf{100.00}} & \multicolumn{1}{c}{5.63} & \multicolumn{1}{c}{64.65\textsubscript{\,\textcolor{red!75!black}{$\downarrow$16.39}}} & \multicolumn{1}{c}{25.00} & \multicolumn{1}{c}{2917.6} \\
\midrule
\tgroup{11}{Automata-based}
DINGO-Opt & \multicolumn{1}{c}{\textbf{100.00}} & \multicolumn{1}{c}{\textbf{7.13}} & \multicolumn{1}{c}{48.88\textsubscript{\,\textcolor{red!75!black}{$\downarrow$29.62}}} & \multicolumn{1}{c}{21.48} & \multicolumn{1}{c}{578.4} & \multicolumn{1}{c}{\textbf{100.00}} & \multicolumn{1}{c}{6.95} & \multicolumn{1}{c}{72.99\textsubscript{\,\textcolor{red!75!black}{$\downarrow$8.05}}} & \multicolumn{1}{c}{93.03} & \multicolumn{1}{c}{710.9} \\
\ours \textbf{DConstrain} & \multicolumn{1}{c}{\textbf{100.00}} & \multicolumn{1}{c}{\textbf{7.13}} & \multicolumn{1}{c}{48.88\textsubscript{\,\textcolor{red!75!black}{$\downarrow$29.62}}} & \multicolumn{1}{c}{115.10\textsubscript{\,\textcolor{spdgreen}{5.36$\times$}}} & \multicolumn{1}{c}{\textbf{26.6}\textsubscript{\,21.75$\times$}} & \multicolumn{1}{c}{\textbf{100.00}} & \multicolumn{1}{c}{6.95} & \multicolumn{1}{c}{72.99\textsubscript{\,\textcolor{red!75!black}{$\downarrow$8.05}}} & \multicolumn{1}{c}{443.47\textsubscript{\,\textcolor{spdgreen}{4.77$\times$}}} & \multicolumn{1}{c}{\textbf{70.1}\textsubscript{\,10.14$\times$}} \\
\midrule
\addlinespace[4pt]
\multicolumn{11}{@{}l}{\textbf{Any-order}} \\[3pt]
& \multicolumn{5}{c}{\textbf{DiffusionGemma} ($N=2{,}201$)} & \multicolumn{5}{c}{\textbf{LLaDA2.2} ($N=2{,}149$)} \\
\cmidrule(lr){2-6}\cmidrule(l){7-11}
Method & {\shortstack{Valid.\\(\%)}} & {TPF} & {\shortstack{Acc.\\(\%)}} & {TPS} & {\shortstack{Overhead\\(\%) $\downarrow$}} & {\shortstack{Valid.\\(\%)}} & {TPF} & {\shortstack{Acc.\\(\%)}} & {TPS} & {\shortstack{Overhead\\(\%) $\downarrow$}} \\
\midrule
Native-Opt & \multicolumn{1}{c}{91.46} & \multicolumn{1}{c}{\textbf{7.81}} & \multicolumn{1}{c}{79.46} & \multicolumn{1}{c}{\textbf{165.63}} & \multicolumn{1}{c}{\textbf{0.0}} & \multicolumn{1}{c}{95.53} & \multicolumn{1}{c}{7.41} & \multicolumn{1}{c}{82.78} & \multicolumn{1}{c}{\textbf{756.59}} & \multicolumn{1}{c}{\textbf{0.0}} \\
\midrule
\tgroup{11}{Verification-based}
LAVE-Opt & \multicolumn{1}{c}{\textbf{100.00}} & \multicolumn{1}{c}{7.43} & \multicolumn{1}{c}{83.55\textsubscript{\,\textcolor{spdgreen}{$\uparrow$4.09}}} & \multicolumn{1}{c}{9.92} & \multicolumn{1}{c}{1569.7} & \multicolumn{1}{c}{\textbf{100.00}} & \multicolumn{1}{c}{5.75} & \multicolumn{1}{c}{82.60\textsubscript{\,\textcolor{red!75!black}{$\downarrow$0.18}}} & \multicolumn{1}{c}{315.63} & \multicolumn{1}{c}{139.7} \\
EPIC-Opt & \multicolumn{1}{c}{99.45} & \multicolumn{1}{c}{7.51} & \multicolumn{1}{c}{84.55\textsubscript{\,\textcolor{spdgreen}{$\uparrow$5.09}}} & \multicolumn{1}{c}{6.30} & \multicolumn{1}{c}{2529.0} & \multicolumn{1}{c}{\textbf{100.00}} & \multicolumn{1}{c}{6.09} & \multicolumn{1}{c}{75.48\textsubscript{\,\textcolor{red!75!black}{$\downarrow$7.30}}} & \multicolumn{1}{c}{37.34} & \multicolumn{1}{c}{1926.2} \\
\midrule
\tgroup{11}{Automata-based}
\ours \textbf{DConstrain} & \multicolumn{1}{c}{\textbf{100.00}} & \multicolumn{1}{c}{\textbf{7.81}} & \multicolumn{1}{c}{\textbf{84.87}\textsubscript{\,\textcolor{spdgreen}{$\uparrow$5.41}}} & \multicolumn{1}{c}{110.50} & \multicolumn{1}{c}{\textbf{49.9}} & \multicolumn{1}{c}{\textbf{100.00}} & \multicolumn{1}{c}{\textbf{8.22}} & \multicolumn{1}{c}{\textbf{83.15}\textsubscript{\,\textcolor{spdgreen}{$\uparrow$0.37}}} & \multicolumn{1}{c}{490.98} & \multicolumn{1}{c}{\textbf{54.1}} \\
\bottomrule
\end{tabular}
\end{adjustbox}
\par\vspace{4pt}
\begin{minipage}{\textwidth}
\footnotesize\raggedright
Bold marks the best values within each model and order panel, including ties, and highlights DConstrain overhead. Acc. subscripts show percentage-point changes from Native-Opt: \textcolor{spdgreen}{$\uparrow$} indicates an increase and \textcolor{red!75!black}{$\downarrow$} a decrease. Overhead is $(\mathrm{TPS}_{\mathrm{Native\text{-}Opt}}/\mathrm{TPS}-1)\times100\%$. Under Schema-order, DConstrain TPS and Overhead subscripts give throughput speedups and overhead-reduction factors over DINGO-Opt, respectively. The Any-order comparison with DINGO-Opt appears in Table~\ref{tab:runtime_latency_matched}. TPS/TPF accounting follows Table~\ref{tab:runtime_comparison}. The -Opt suffix denotes our implementation optimizations and decoding adaptations for the baselines.
\end{minipage}
\end{table*}

\subsection{Ablation Study}
\label{sec:ablation}
\begin{table}[t]
\centering
\setlength{\abovecaptionskip}{0pt}
\setlength{\belowcaptionskip}{10pt}
\caption{Compiler ablations on BFCL with arbitrary property order. Values are means; gray numbers are ratios to the full system.}
\label{tab:compiler_ablation}
\small
\setlength{\tabcolsep}{5pt}
\renewcommand{\arraystretch}{1.15}
\begin{tabular}{@{}lccc@{}}
\toprule
Variant & Cold (s) $\downarrow$ & Solve (ms) $\downarrow$ & Memory (MiB) $\downarrow$ \\
\midrule
\ours \textbf{Full (any order)} & \bfseries 0.72 & \bfseries 3.72 & \bfseries 5.94 \\
w/o Two-Stage Env & 10.52\rat{14.61} & 3.83\rat{1.03} & 20.97\rat{3.53} \\
w/o Local-State Reuse & 1.87\rat{2.60} & 3.75\rat{1.01} & 12.98\rat{2.19} \\
w/o Token Grouping & 0.96\rat{1.33} & 25.32\rat{6.81} & 24.42\rat{4.11} \\
\bottomrule
\end{tabular}
\end{table}

Table~\ref{tab:compiler_ablation} isolates three mechanisms on BFCL with the LLaDA2.2 tokenizer and arbitrary property order. \emph{w/o Two-Stage Env} binds parsing-environment values during cold compilation instead of deferring them to runtime. \emph{w/o Local-State Reuse} disables memoization of transition programs for repeated occurrences of the same local grammar state but still keeps the factorized runtime environment and continuation stack. \emph{w/o Token Grouping} processes tokens with equivalent effects independently. All variants share the same grammar semantics and decoding setup.

The three mechanisms remove different forms of redundancy. Deferring environment specialization gives the largest cold-compilation benefit: eager binding raises mean compilation latency to $14.61\times$ that of the full system. Local-state reuse avoids recompiling identical local transition logic. Token-effect grouping mainly reduces vocabulary-level work in downstream joint inference, and removing it raises mean solve latency to $6.81\times$ that of the full system. A pushdown interpreter also avoids finite-state expansion, so Appendix~\ref{app:direct_pda} further compares DConstrain with Direct-PDA-Opt, which interprets the parser at every reached configuration. Even with token grouping before interpretation, it is $6.43\times$ slower end to end on 300 BFCL cases, because it re-executes the shared transition logic for every configuration.

\subsection{Analysis}
\label{sec:analysis}

\paragraph{Property order changes task accuracy.}
Schema-order is a substantive decoding restriction for DLMs. On the full 2,495-case BFCL pool in Table~\ref{tab:runtime_comparison}, allowing arbitrary property order raises DConstrain's semantic accuracy from 72.71\% to 81.60\% for LLaDA2.2 and from 37.27\% to 81.56\% for DiffusionGemma: gains of 8.89 and 44.29 percentage points, respectively, with 100\% schema validity throughout. Native-Opt accuracy changes by only $-0.12$ and $+0.16$ points under the corresponding prompt conditions. The much larger constrained-decoding gains indicate that fixing the field order excludes schema-valid outputs that the models prefer. These comparisons use the same full pool; the separately selected cohorts in Table~\ref{tab:runtime_common_success} support within-order comparisons. The higher LLaDA2.2 TPS in the Any-order common panel reflects its easier case composition: the shared cohort contains 2,149 cases, compared with 2,448 under Schema-order.

\paragraph{Verification, sampling bias, and recovery.}
A verifier answers whether a proposed update leaves some legal completion. Weighted joint inference additionally accounts for the model probability mass of compatible continuations. For a fixed canvas graph, the sampling objective assigns each accepting token-labeled path $\pi$ probability proportional to $\prod_i p_i(v_i)$. An existential completion test supplies the support of this objective, but not the continuation masses needed to sample from it. Exact rejection of complete assignments from a fixed proposal samples that proposal conditioned on validity; local token acceptance, batch selection, and recovery introduce different update rules.

Consider an unambiguous two-position grammar accepting only $aa$ and $bb$, with $p_1(a)=p_2(a)=0.9$ and $p_1(b)=p_2(b)=0.1$. Weighted joint sampling selects $aa$ with probability $0.81/(0.81+0.01)=98.78\%$. A policy that samples the first token and commits any feasible choice selects $aa$ with probability $90\%$, because the second token is then forced. Every completed output is valid, yet the distributions differ. The missing factor is the probability mass of legal continuations, which a Boolean verifier replaces by an existence decision. The same distinction affects MAP decoding: selecting high-confidence feasible proposals does not jointly maximize the product of token probabilities.

Finite witness search adds another source of selection bias. LAVE tests model-sampled lookahead completions and uses cached-prefix recovery~\citep{zhang2026lave}. If a proposal $u$ has proposal probability $q(u)$ and witness-search success probability $a(u)$, its probability among accepted proposals is proportional to $q(u)a(u)$. Thus a feasible proposal can be rejected when its valid witnesses are missed, and acceptance depends on the lookahead distribution and search budget. EPIC uses confidence-based subset selection and shrinking before exact verification; on exhausting the rejection budget, it completes the output using a witness that is not sampled from the model distribution~\citep{jin2026epic}. These policies favor different proposals and continuations from those selected by weighted joint inference.

Verification also pays for rejected work. Independent proposals can conflict across positions, so an unsuccessful check triggers resampling, a smaller batch, or recovery. Parallel batch verification reduces the number of sequential checks, but a failed batch still requires additional selection and validation. This cost depends on how often proposals are accepted, whereas automata-based inference resolves compatibility within the weighted graph. Native DLM refinement can revise temporarily incompatible assignments. Maintaining feasibility of committed updates restricts these denoising trajectories, and proposal filtering and recovery further change which trajectories are followed. On LLaDA2.2's Any-order common cohort, Native-Opt reaches 82.78\% semantic accuracy, LAVE-Opt and EPIC-Opt reach 82.60\% and 75.48\%, and DConstrain reaches 83.15\%. The verification-based methods obtain 100\% schema validity on this cohort while losing semantic accuracy relative to Native-Opt.

\paragraph{Property order has opposite runtime effects.}
DINGO-Opt's solve latency per forward rises from 32.45 to 74.18\,ms for LLaDA2.2 and from 216.72 to 3195.39\,ms for DiffusionGemma when arbitrary order is allowed (Table~\ref{tab:runtime_latency_matched}). Enumerating field histories enlarges the finite-state graph, increasing the transitions processed by Viterbi inference; Figure~\ref{fig:ana} and Appendix~\ref{app:tree_sampling} show the associated state-space and solver costs. LAVE-Opt and EPIC-Opt show the opposite trend: LLaDA2.2 overhead falls from 240.9\% to 139.7\% and from 2917.6\% to 1926.2\%, respectively; DiffusionGemma shows the same direction. In our integrations, reusable schema-specific verification structures are prepared during cold compilation, and both orders use the same online verification pipeline. A broader accepted language admits more proposals and reduces repeated resampling and verification per committed token. TPF increases reflect this change: LAVE-Opt rises from 4.67 to 5.75 on LLaDA2.2, and EPIC-Opt from 3.19 to 7.51 on DiffusionGemma.

\paragraph{Efficient joint inference under arbitrary order.}
DConstrain makes arbitrary-order joint inference practical. Under Any-order, its overhead is 54.1\% for LLaDA2.2 and 49.9\% for DiffusionGemma on the common cohorts. It achieves the highest TPS among the evaluated constrained methods in every model--order setting and the highest Any-order semantic accuracy on the common cohorts, at 83.15\% and 84.87\%. On the full pool, it reaches 81.60\% and 81.56\% with complete coverage and 100\% schema validity. The paired comparison in Table~\ref{tab:runtime_latency_matched} shows $6.65\times$ and $46.26\times$ lower end-to-end latency than DINGO-Opt under Any-order. The following analyses connect these results to the compiled representation and its scaling with runtime context.

\paragraph{Factorized representation.}
Table~\ref{tab:compiled_representation_llada} compares the stored representations. Finite-state compilation materializes context-expanded states and token-level transitions, whereas DConstrain stores the local RTN structure and symbolic transition programs. Its median recorded payload is 4.74--5.11\,MB across the three datasets, compared with 36.86--90.61\,MB for DINGO. The runtime experiments measure the cost of deferring work to warm compilation.

\begin{table*}[t]
\centering
\setlength{\abovecaptionskip}{0pt}
\setlength{\belowcaptionskip}{10pt}
\caption{Compiled representation statistics with the LLaDA2.2 tokenizer.}
\label{tab:compiled_representation_llada}
\small
\setlength{\tabcolsep}{4pt}
\renewcommand{\arraystretch}{1.15}
\begin{tabular}{@{}llN{3.2}N{4.2}N{4.2}N{3.2}N{3.2}@{}}
\toprule
Dataset & Method & {States} & \multicolumn{2}{c}{Edges ($10^3$)} & \multicolumn{2}{c}{Recorded payload (MB)} \\
\cmidrule(lr){4-5}\cmidrule(l){6-7}
& & {Mean} & {P50} & {P95} & {P50} & {P95} \\
\midrule
BFCL & DINGO & 228.74 & 524.767 & 1226.268 & 54.20 & 114.91 \\
& NFA & 233.74 & 352.707 & 1052.077 & 31.95 & 93.55 \\
\ours & \textbf{DConstrain} & 184.00 & 3.450 & 14.632 & \bfseries 4.74 & \bfseries 6.31 \\
\midrule
JSONSchemaBench & DINGO & 182.42 & 350.973 & 1224.937 & 36.86 & 118.83 \\
& NFA & 214.94 & 353.617 & 1054.279 & 36.99 & 130.63 \\
\ours & \textbf{DConstrain} & 163.09 & 3.603 & 264.375 & \bfseries 5.03 & \bfseries 31.35 \\
\midrule
ComplexFuncBench & DINGO & 404.39 & 1053.460 & 1405.177 & 90.61 & 107.51 \\
& NFA & 409.39 & 1053.276 & 1405.165 & 121.60 & 139.51 \\
\ours & \textbf{DConstrain} & 309.13 & 13.077 & 20.621 & \bfseries 5.11 & \bfseries 5.94 \\
\bottomrule
\end{tabular}
\end{table*}

\begin{figure}[!t]
\centering
\includegraphics[width=\columnwidth]{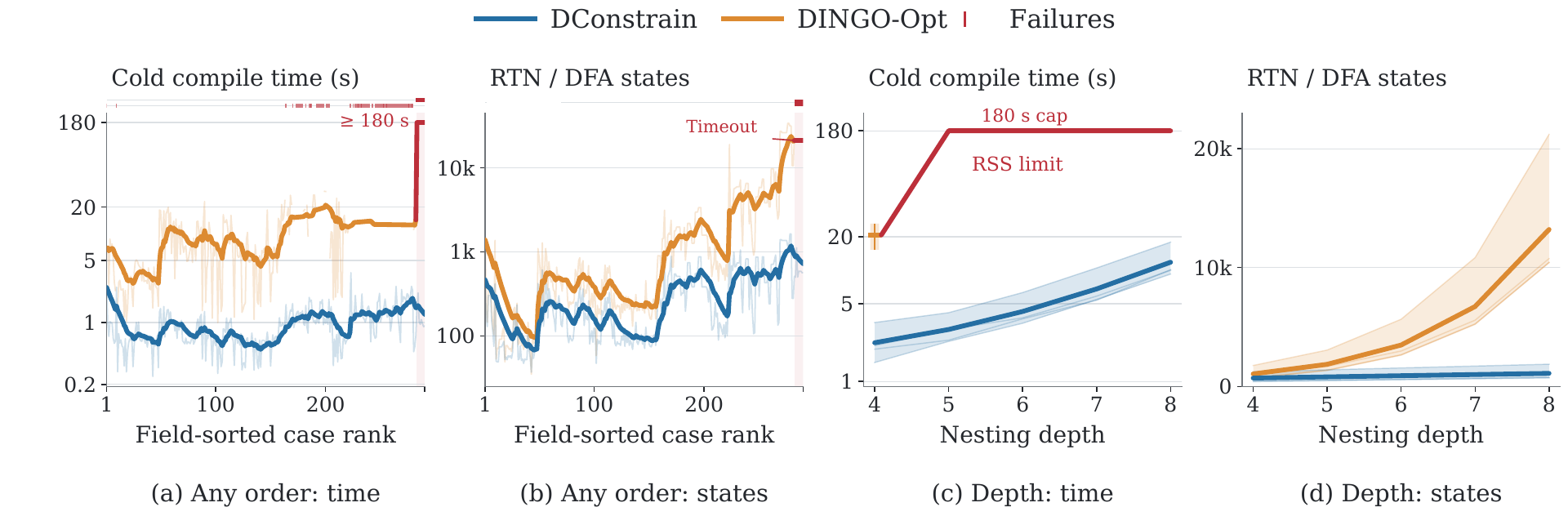}
\caption{Scaling with runtime-context complexity on a fixed 300-case BFCL subset. (a,b) Arbitrary-order objects are ranked by maximum parameter-object field count; (c,d) nesting depth varies from 4 to 8. }
\label{fig:ana}
\end{figure}

\paragraph{Scaling with property order and nesting depth.}
Figure~\ref{fig:ana} varies the two runtime-context dimensions: property order enlarges the parsing environment, and nesting enlarges the continuation stack. As either grows, DINGO-Opt increasingly hits the time or memory limit and fails all evaluated cases at depths 5--8, while DConstrain still compiles them, consistent with Table~\ref{tab:cold_compilation}.

\section{Conclusion}
\label{sec:conclusion}
We presented DConstrain, a compiler for constrained decoding in diffusion language models. Finite-state compilers expand runtime contexts into states. For unordered properties, this blow-up is exponential even for context-free grammars, so existing exact decoders fix the property order. DConstrain instead compiles reusable context-parameterized transition programs, which keep the stack and the field history as runtime parameters and still give joint inference exact successors. Warm compilation and grammar propagation overlap the model forward pass, and token-effect grouping shrinks the graph while preserving the ungrouped graph's Viterbi optimum and token-labeled path distribution. Together, these choices substantially improve compilation coverage and latency and reduce constrained-decoding overhead while preserving schema validity and task accuracy. They also make arbitrary property order practical, which further improves task accuracy over fixed-order constraints.

\bibliography{iclr2027_conference}
\bibliographystyle{iclr2027_conference}

\clearpage
\appendix
\section{Additional Background and Theoretical Analysis}
\label{app:theory}

This appendix formalizes the parser interface used in
Section~\ref{sec:preliminaries}, establishes correctness of the compiler
in Section~\ref{sec:methodology}, establishes weighted inference preservation under token-effect grouping, and analyzes compilation and graph-construction costs.

\subsection{Grammar Automata and Parser Configurations}
\label{app:grammar_background}

Let $\mathcal G$ define an output language $\mathcal L_{\mathcal G}$.
Finite-state constraints admit an automaton
$\mathcal M_{\mathcal G}=(Q,\Sigma,\delta_{\mathrm{FA}},q_0,F)$, with transitions
$\delta_{\mathrm{FA}}:Q\times\Sigma\to 2^Q$; a DFA has at most one successor per
state--symbol pair, whereas an NFA may have several. For nested grammars,
a pushdown representation augments local control with a runtime
stack~\citep{dong2025xgrammar}. Its logical configuration is
\begin{equation}
c=(q,s,e),
\label{eq:parser_configuration}
\end{equation}
where $s$ stores continuations and $e$ contains parsing variables.
Guards, stack operations, and environment updates allow the same local
transition logic to operate across different contexts.

Token-level transitions consume tokenizer tokens rather than individual
grammar symbols~\citep{willard2023guided,dong2025xgrammar}. Let
$\mathsf T^{\mathrm{raw}}_{\mathcal G}(c,v)$ contain all successors after
consuming $v$, including required stack, environment, and token-boundary
operations. We use the normalized relation
\begin{equation}
\mathsf T_{\mathcal G}(c,v)
:=\mathsf T^N_{\mathcal G}(c,v)
=N\!\left(\mathsf T^{\mathrm{raw}}_{\mathcal G}(c,v)\right),
\label{eq:token_transition_relation}
\end{equation}
where $N$ removes statically certified dead configurations, as defined
below. Relations extend to frontiers by union over their configurations.
For finite automata, $c$ reduces to a state $q$.

\subsection{Diffusion Language Models}
\label{app:dlm_background}

A partially resolved window
$\boldsymbol x^t\in(\mathcal V\cup\{\mathtt{[MASK]}\})^H$
follows a fixed prefix $\boldsymbol u$. At denoising iteration $t$, the
model predicts distributions at all positions~\citep{sahoo2024mdlm,nie2025llada}:
\begin{equation}
\begin{aligned}
P_t&=\operatorname{Model}_{\theta}(\boldsymbol u,\boldsymbol x^t),\\
P_t[i,v]&=p_{\theta}(y_i=v\mid\boldsymbol u,\boldsymbol x^t).
\end{aligned}
\label{eq:dlm_prediction}
\end{equation}
A decoding policy selects updates for the next iteration. Multiple
unresolved positions permit joint grammar-constrained token
selection~\citep{suresh2025dingo,dang2026constrained}. Let
$\mathcal V_i^t$ be the admitted tokens at position $i$; fixed positions
can use singleton sets. All comparisons below use identical candidate
sets, prefixes, protocols, and boundary conditions.

\subsection{Correctness of Factorized Compilation}
\label{sec:factorized_correctness}

We expose $e=(d,\eta)$, where $d$ is container depth and $\eta$ is the
choice environment, and write $c=(q,s,d,\eta)$. Execution states also
retain input, cursor, and protocol phase. A token-complete state has
finished all required boundary processing; an exhausted cursor alone is
insufficient. We assume agreement only between corresponding primitive
parser operations. All results use set semantics and concern completed
computations; resource exhaustion is an error, not an empty result.
Byte execution, compilation, and assembly are defined for every finite
$w\in\mathbb B^*$ over the byte alphabet $\mathbb B$. Token compilation
specializes $w=\operatorname{bytes}(v)$ with its initialization/finalization.
Suffixes need not encode vocabulary tokens and retain the enclosing
metadata and phase without fresh token initialization.

\paragraph{Static normalization.}
The abstract RTN soundly overapproximates concrete execution by relaxing
environment guards while preserving call/return structure. For the set
$\operatorname{Ret}(q)$ of finite abstract derivations returning from $q$'s
current rule, define
\begin{equation}
D(q)=\min_{\pi\in\operatorname{Ret}(q)}|\pi|_{\mathrm{bytes}},
\qquad \min\varnothing=\infty.
\label{eq:static_distance_definition}
\end{equation}
Rule exits, including complete acceptance as a distinguished root exit,
are productive terminals of distance zero. Consequently,
\begin{equation}
\begin{aligned}
D(q)&\le D(q') &&\text{for zero-byte edges},\\
D(q)&\le 1+D(q') &&\text{for byte edges},\\
D(q)&\le D(a)+D(r) &&\text{for calls to $a$ returning to $r$},\\
D(q)&=0 &&\text{at rule exits}.
\end{aligned}
\label{eq:static_distance_constraints}
\end{equation}
Min-plus relaxation starts with zero at terminals and infinity elsewhere.
Each finite value has a derivation witness, and saturation covers every
finite return, computing Eq.~\eqref{eq:static_distance_definition}.
Thus $D(q)=\infty$ excludes any concrete return; finite distance need not
imply concrete feasibility.
Define
\begin{equation}
P(c)=[D(q)<\infty]\land\bigwedge_{r\in s}[D(r)<\infty],
\qquad N(X)=\{c\in X:P(c)\},
\label{eq:productivity_normalization}
\end{equation}
Complete-output accepting configurations satisfy $P$ by their terminal
distance and empty stack. Protocol steps obey the same abstract control
rules and do not bypass required returns. Infinity is used only after the
distance computation completes.

\begin{lemma}[Normalization safety]
\label{lem:normalization_commutation}
For raw byte execution $\delta_{\mathrm{raw}}$ with its prescribed
zero-byte closures,
\begin{equation}
N\!\left(\delta_{\mathrm{raw}}(N(X),w)\right)
=N\!\left(\delta_{\mathrm{raw}}(X,w)\right).
\label{eq:normalization_commutation}
\end{equation}
The same property holds for closures and complete token execution,
including normalization at intermediate stages.
\end{lemma}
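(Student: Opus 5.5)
The plan is to show the two inclusions $N(\delta_{\mathrm{raw}}(N(X),w))\subseteq N(\delta_{\mathrm{raw}}(X,w))$ and $\supseteq$ separately, using only two facts about the productivity predicate $P$. The first is \emph{monotonicity of raw execution}: $\delta_{\mathrm{raw}}$ has set semantics and is defined by union over configurations, so $\delta_{\mathrm{raw}}(Y,w)=\bigcup_{c\in Y}\delta_{\mathrm{raw}}(\{c\},w)$. Since $N(X)\subseteq X$, the inclusion $\subseteq$ is immediate, and applying $N$ to both sides preserves it.

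The substance is the reverse inclusion. Take $c\in X$ with $P(c)$ false and any $c'\in\delta_{\mathrm{raw}}(\{c\},w)$. We must show $P(c')$ is false, so that $c'$ is removed by the outer $N$ anyway. This reduces to a \emph{backward closure} property: if $c'$ is reachable from $c$ by raw execution (byte steps, zero-byte closures, calls, returns and protocol steps) and $P(c')$ holds, then $P(c)$ holds. I would prove it for a single primitive step and then induct on the length of the execution, which handles arbitrary $w$, closures, and intermediate normalization. For the single step I would split into cases according to the constraints in Eq.~\eqref{eq:static_distance_constraints}, with $c=(q,s,\ldots)$.
\begin{itemize}
\item \textbf{Zero-byte or byte edge $q\to q'$, stack unchanged.} We have $D(q)\le D(q')+1<\infty$, and the stack entries are the same in $c$ and $c'$.
\item \textbf{Call to $a$ pushing return point $r$.} Here $c'$ has local state $a$ and stack $s\|r$. $P(c')$ gives $D(a),D(r)<\infty$, so $D(q)\le D(a)+D(r)<\infty$, and the entries of $s$ are finite by $P(c')$.
\item \textbf{Return from a rule exit, popping $r$.} Here $c'=(r,s',\ldots)$ with $s=s'\|r$. $D(q)=0$ at exits, and finiteness of $D(r)$ and of the entries of $s'$ comes from $P(c')$.
\item \textbf{Protocol steps.} By assumption these obey the same abstract control rules, so they fall under one of the cases above.
\end{itemize}
Environment guards and depth are irrelevant, because $P$ reads only local states and stack return points, and the abstract RTN relaxes guards. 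Any concrete step is therefore an abstract step, and the constraint inequalities still apply. This uses soundness of the overapproximation together with the fact that the min-plus saturation actually attains the constraints as a fixpoint.

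The main obstacle I expect is the call case's interaction with returns. I need to make sure that $D(r)$ for a pushed return point really is the distance from the continuation state, and that frames already on $s$ keep their finiteness obligations. That means fixing the convention that stack frames are return states $r$ whose $D(r)$ measures the remaining derivation of the caller's rule. I would then check that Eq.~\eqref{eq:static_distance_constraints} was stated for exactly that convention.

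Once the one-step lemma holds, I would induct over the step sequence to get closure under multi-step raw execution. The statement for closures, complete token execution, and interleaved normalization then follows by applying the two-inclusion argument at each stage and composing.
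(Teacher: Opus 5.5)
Your proposal is correct and follows essentially the paper's own proof. The paper likewise proves the one-step backward property $c\to c'\land P(c')\Rightarrow P(c)$ by the same case split: distance inequalities for local and byte edges, finiteness of the callee and pushed return for calls, and exit distance zero together with finite popped and remaining frames for returns. It then concludes that every execution ending in a retained configuration passes only through retained configurations, and obtains the other inclusion from $N(X)\subseteq X$. The one cosmetic difference is that you handle intermediate normalization by composing stagewise commutations, whereas the paper uses the path-level observation directly; given compositionality of raw execution, the two are equivalent.
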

\begin{proof}
Every primitive step satisfies
$c\to c'\land P(c')\Rightarrow P(c)$.
For local and byte edges this follows from the distance inequalities.
For calls, productivity of the successor implies finite distances for
the callee and pushed return address. For returns, the source is an exit,
and the popped address and remaining stack have finite distances.
Consequently, every finite execution ending in a retained configuration
contains only retained configurations. Such a path survives initial and
intermediate normalization; the opposite inclusion follows from
$N(X)\subseteq X$.
\end{proof}

Write $\delta_N(X,w)=N(\delta_{\mathrm{raw}}(X,w))$.
Unlike the raw relation, this semantics excludes token endpoints that
are statically known never to complete the output.

\paragraph{Representations and admissibility.}
A parameterized bundle $B=(\mathbf a,\mathbf R,\eta_0)$ contains atoms
$a_i=(q_i,d_i,\Omega_i)$ and their own stack relations $R_i$.
Its logical interpretation is
\begin{equation}
\mathcal D(B)=\bigcup_i
\{(q_i,s,d_i,\Omega_i(\eta_0)):
  s\in R_i,\ \Omega_i(\eta_0)\ne\bot\}.
\label{eq:bundle_decoding}
\end{equation}
Here $\Omega_i$ is an ordered program of partial choice operations.
A public representation is \emph{admissible} when its atom--stack--program
associations are preserved; control, frames, operands, and cursor/protocol
metadata are well formed in their binding scope; and its decoded frontier
is normalized and in entry-closure form. Initialization creates aligned
atoms and applies closure and $N$. The transfers below preserve these
invariants; internal seeds carry their specified suspended phases.

Updates modify the current environment; a non-committing guard obeys
\begin{equation}
\mathsf{check}_{\{U_j\}}(\eta)=
\begin{cases}
\eta,&\exists j:\ U_j(\eta)\ne\bot,\\
\bot,&\text{otherwise}.
\end{cases}
\label{eq:noncommitting_guard}
\end{equation}
Each trial starts from the same current environment. Ordered composition
satisfies $(U;V)(\eta)=V(U(\eta))$, including failure.

\paragraph{Source-indexed decoding.}
At each token query's entry, label every source $c$ by its immutable origin
$o=c$. Source selectors and stack transformations denote a relation
$\mathcal R(\widehat Y)$ between origins and current configurations. Define
\begin{equation}
\mathcal D_o(\widehat Y)
=\{c':(o,c')\in\mathcal R(\widehat Y)\},
\qquad
\mathcal D(\widehat Y)=\bigcup_o\mathcal D_o(\widehat Y).
\label{eq:source_indexed_decoding}
\end{equation}
Initially $\mathcal R=\{(c,c):c\in\mathcal D(\widehat X)\}$.
Transfers keep the origin while changing the current configuration.
Stack operations act pointwise on the selected atom's input stack, not
just its index. Merging unions these relations; shared suffixes compose
them through the same intermediate configuration, never through a
Cartesian product. Origins are proof annotations on existing selectors
and stack maps, fixed within a query and reset at the next query.

\paragraph{Independent byte execution.}
The byte executor matches source atoms, appends call frames, partitions
returns by their actual top frame, and appends choice operations in order.
Its closure explores all prescribed zero-byte alternatives, deduplicates
full current states with their source associations, and applies $N$ to
each atom's own stack relation. It does not invoke local-effect assembly.

\begin{lemma}[Source-indexed byte simulation]
\label{lem:byte_execution_preservation}
For every origin $o$ and byte string $w$, the specified byte executor with
matching closure conventions satisfies
\begin{equation}
\mathcal D_o\!\left(\widehat\delta_{\mathrm{bytes}}(\widehat Y,w)\right)
=\delta_N\!\left(\mathcal D_o(\widehat Y),w\right).
\label{eq:parameterized_byte_simulation}
\end{equation}
Closure and token-finalization satisfy the analogous equations, and full
public operations return admissible representations.
\end{lemma}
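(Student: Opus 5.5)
We prove Lemma~\ref{lem:byte_execution_preservation} by induction on the length of $w$, after first establishing a one-byte (and zero-byte closure) simulation step. The invariant carried through the induction is the source-indexed equation itself: for every origin $o$, $\mathcal D_o(\widehat Y)$ equals the normalized reference frontier reached from $o$ on the bytes consumed so far. The base case $w=\varepsilon$ reduces to closure. By construction, $\mathcal R$ is initialized as the identity relation on $\mathcal D(\widehat X)$, and the closure explores exactly the prescribed zero-byte alternatives. Lemma~\ref{lem:normalization_commutation} then lets us apply $N$ either to each atom's stack relation during closure or only at the end without changing the result. For the inductive step we write $w=w'b$. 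We use $\delta_N(X,w'b)=\delta_N(\delta_N(X,w'),b)$, which again follows from Lemma~\ref{lem:normalization_commutation} because intermediate normalization is harmless. It therefore suffices to show that one byte step followed by closure commutes with $\mathcal D_o$.

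For the single-byte step we split by primitive operation and check each case pointwise on decoded configurations, using Eq.~\eqref{eq:bundle_decoding} and Eq.~\eqref{eq:source_indexed_decoding}.
\begin{enumerate}
\item \emph{Byte matches and local edges.} These change only the atom's $q$. Its stack relation and program $\Omega_i$ are untouched, so the decoded successors coincide with those of the reference.
\item \emph{Calls.} Appending a frame to every stack in $R_i$ is exactly the reference push applied pointwise.
\item \emph{Returns.} The executor partitions $R_i$ by actual top frame. Each class is sent to its own caller state with the frame popped. The union over classes is therefore the reference pop, and no two callers are conflated.
\item \emph{Choice operations.} Appending $U$ to $\Omega_i$ gives $(\Omega_i;U)(\eta_0)=U(\Omega_i(\eta_0))$ by ordered composition, with $\bot$ propagating. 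Non-committing guards follow Eq.~\eqref{eq:noncommitting_guard}, and each trial restarts from the same environment, matching the reference check.
\end{enumerate}
Throughout, the origin label is preserved. Merging unions relations, and shared suffixes compose through a common intermediate configuration, so no spurious origin--successor pair arises from a Cartesian product.

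We expect the main obstacle to be exactly this last point: showing that deduplication and sharing in the closure respect source association. The executor deduplicates \emph{full} current states together with their source associations. We would prove this is sound by showing that two representations with the same current state and the same origin set generate identical future relations, since all later transfers depend only on the current state. It is also complete, since every reference path from $o$ is reproduced by some chain of transfers carrying the label $o$. We would make both directions explicit as a bisimulation between labelled reference executions and executor transfers.

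The finalization claim follows the same case analysis applied to the token-boundary protocol steps. Those steps obey the same abstract control rules, so the argument carries over. Admissibility is checked operation by operation:
\begin{enumerate}
\item atom--stack--program associations are never separated;
\item frames and operands stay within their binding scope;
\item every public operation ends with closure and $N$, so its output is in normalized entry-closure form.
\end{enumerate}
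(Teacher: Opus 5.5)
Your overall route matches the paper's. You fix an origin, use primitive agreement to get the same byte, stack and choice successors with the origin retained, and induct over bytes using Lemma~\ref{lem:normalization_commutation} per endpoint. You then reuse the argument for finalization and restore admissibility by exit closure and $N$. The gap is at the step you flag yourself.

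The closure merges identical \emph{current} states and unions their origin sets, sharing one future relation. So the case that matters is not two copies with the same origin set. It is a state already expanded with origins $O_1$ that is reached again from an origin $o_2\notin O_1$. Soundness of that merge follows from your observation that successors depend only on the current configuration, so no origin in the union gains a non-reference successor. Completeness does not follow from asserting that ``every reference path from $o$ is reproduced by some chain of transfers carrying $o$.'' With deduplication, the chain for $o_2$ is cut exactly at the revisit, unless $o_2$ is propagated to the successors already generated from that state. That requires either re-propagation when the origin set grows, or composition through the shared relation.

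The paper's proof supplies exactly this step: newly attached origins receive those successors by propagation or reuse of the shared relation, and saturation then covers every origin's finite closure paths. To repair your bisimulation, state two invariants at the fixpoint. For completeness: if $(o,c)$ is retained and $c\to c'$ is a prescribed step, then $(o,c')$ is retained; then induct on the length of the reference path. For soundness: every retained pair has a witnessing reference path from $o$. Together these give the per-origin equality.

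Your case list also omits one transfer the paper's proof treats: extracting an identical, equally bound choice prefix $U$ into $\eta_0$. This needs $(U;V_i)(\eta_0)=V_i(U(\eta_0))$ together with preservation of origin associations. Both follow immediately from the ordered-composition law you already use.
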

\begin{proof}
Fix $o$. Primitive agreement gives the same byte, stack, and choice
successors for each of its configurations, retaining $o$. Extracting an
identical, equally bound prefix $U$ preserves
$V_i(U(\eta_0))=(U;V_i)(\eta_0)$ and the origin associations.
Merging identical current states unions their origins and shares their
future relation. Newly attached origins receive those successors by
propagation or reuse of that relation; saturation therefore covers every
origin's finite closure paths. Normalization acts per endpoint, so
Lemma~\ref{lem:normalization_commutation} applies to each origin.
Induction over bytes, followed by the same finalization argument, proves
the equation. Transfers preserve scopes and phases; public exit closure
and $N$ restore admissibility.
\end{proof}

\paragraph{Local summaries and worklist propagation.}
For arbitrary $w\in\mathbb B^*$, cold compilation scans bytes and expands
local closures using items $h=(a,\rho,\Delta,m,\Omega,\nu)$.
Here $a$ includes control and execution phase, $\rho$ is the unmatched
local return stack, $\Delta$ is net depth change, $m$ is its minimum
prefix change including zero, and $\nu$ retains additional used metadata.
Known calls and local returns modify $\rho$; choice operations append to
$\Omega$. Initially $\rho=\Omega=\varepsilon$ and $\Delta=m=0$.
A depth change $\zeta$ updates
\begin{equation}
(\Delta,m)\leftarrow
(\Delta+\zeta,\min\{m,\Delta+\zeta\}).
\label{eq:depth_summary_update}
\end{equation}
Every explicit alternative is explored. A return with empty $\rho$
emits a residual before accessing the unknown incoming stack.
An ordinary summary $P=(q',\rho,\Delta,m,\Omega,\nu)$ binds as
\begin{equation}
\operatorname{Bind}(P;s,d,\eta)=
\begin{cases}
\{(q',s\mathbin{\|}\rho,d+\Delta,\Omega(\eta))\},
 &d+m\ge0,\ \Omega(\eta)\ne\bot,\\
\varnothing,&\text{otherwise}.
\end{cases}
\label{eq:summary_binding}
\end{equation}
Frames are appended bottom-to-top. Additional checks retain their primitive
tests and operands; a depth-dependent check other than underflow also
records its relative evaluation offset, or sufficient trace statistics,
in keyed metadata $\nu$. Binding evaluates these tests and preserves the
source selector and input-stack argument. Structural steps have no
unrecorded choice effects; execution phase determines remaining closure.

The worklist keys by $(a,\rho,\Delta,\Omega,\nu)$, excluding only $m$.
Each new key or strict increase of its best $m$ is enqueued. A popped item
is skipped only when its $m$ is stale; otherwise all successors are explored.
Ordinary and residual output tables also retain the largest $m$.

\begin{lemma}[Summary and worklist preservation]
\label{lem:local_summary_preservation}
\label{lem:worklist_preservation}
For every concrete input binding, saturation preserves its local reference
endpoints, cut before the first unknown-input-stack return, and attributes
them to that same source.
\end{lemma}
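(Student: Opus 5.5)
The plan is to prove Lemma~\ref{lem:local_summary_preservation} by a simulation argument between cold-compilation items and concrete reference executions. The argument is organized as an induction over the length of the byte prefix and of the zero-byte closure paths.

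Fix a concrete input binding $(s,d,\eta)$ and a source configuration $o=(q,s,d,\eta)$. Call a reference path from $o$ \emph{local} if it never pops a frame belonging to $s$, i.e.\ it is cut before the first return that requires the incoming stack. I will define an abstraction map $\alpha$ from each such path $\pi$ to an item $\alpha(\pi)=(a,\rho,\Delta,m,\Omega,\nu)$ with the following components:
\begin{itemize}
\item $a$ is the current control and phase;
\item $\rho$ is the sequence of frames pushed along $\pi$ and not yet popped;
\item $\Delta$ is the net depth change along $\pi$, and $m$ the minimum prefix depth change, including $0$;
\item $\Omega$ is the ordered list of choice operations applied along $\pi$;
\item $\nu$ records the depth-relative check offsets.
\end{itemize}
The invariant to establish is that the current concrete configuration at the end of $\pi$ equals $(a,\, s\mathbin{\|}\rho,\, d+\Delta,\, \Omega(\eta))$, and that $\pi$ is executable iff $d+m\ge0$, $\Omega(\eta)\ne\bot$, and every $\nu$-recorded check passes. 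This is exactly the content of Eq.~\eqref{eq:summary_binding} at endpoints.

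Soundness and completeness of a single step are proved by cases on the primitive operation:
\begin{itemize}
\item \textbf{Byte and local edges} leave $\rho$ and $\Delta$ unchanged.
\item \textbf{Calls} append to $\rho$.
\item \textbf{Returns with nonempty $\rho$} pop the top of $\rho$. This agrees with popping the top of $s\mathbin{\|}\rho$, since frames are appended bottom-to-top.
\item \textbf{Returns with empty $\rho$} are precisely the cut points at which a residual is emitted, so the path leaves the local regime there.
\item \textbf{Depth changes} follow Eq.~\eqref{eq:depth_summary_update}, and $d+m\ge0$ is equivalent to no intermediate underflow.
\item \textbf{Choice operations} compose as $(U;V)(\eta)=V(U(\eta))$, so appending them to $\Omega$ preserves failure exactly. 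Non-committing guards are handled by Eq.~\eqref{eq:noncommitting_guard}.
\end{itemize}
Because the compiler explores every explicit alternative, each reference step from a path has a matching item successor, and conversely every item successor corresponds to a concrete step under any binding satisfying its accumulated conditions. Origin attribution is immediate, since item execution never touches $s$ except through the binding. Lemma~\ref{lem:byte_execution_preservation} supplies the primitive agreement, and Lemma~\ref{lem:normalization_commutation} lets normalization be applied at endpoints without losing or adding paths.

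The main obstacle is the worklist, which keys items by $(a,\rho,\Delta,\Omega,\nu)$ and keeps only the largest $m$. I must show this dominance pruning loses no endpoint.
\begin{itemize}
\item First, two items with the same key behave identically in the future except for their underflow condition.
\item Second, the condition $d+m\ge0$ is monotone in $m$: any binding admitting the smaller-$m$ item also admits the larger-$m$ one. The two items then share the same future successors, and their future minima combine via $\min\{m,\Delta+m_{\text{future}}\}$, which is also monotone in $m$.
\item Finally, since an item is re-enqueued whenever its best $m$ strictly increases and is skipped only when stale, at saturation each key has been expanded with its maximal $m$.
\end{itemize}
Every concrete local path therefore has an endpoint item, with the same key and some $m'\ge m$, in the output tables, and the binding accepts the path exactly when it accepts that item. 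The remaining care point is termination and finiteness. Keys may be unbounded through $\rho$ or $\Omega$, but the lemma only concerns completed computations, and resource exhaustion is treated as an error rather than an empty result.
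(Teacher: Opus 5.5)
This is correct and follows the paper's proof essentially step for step. You use per-trace summaries with stack $s\mathbin{\|}\rho$, $\Delta$ and $m$ as the final and minimal cumulative depth change, and ordered $\Omega$ with recorded offsets; then dominance among equal keys via monotonicity of $\min\{m,\Delta+m_U\}$, soundness from witness traces, completeness from retained-or-dominated prefixes plus re-enqueuing, and sourcewise attribution with normalization at the end. One slip: a path's acceptance implies acceptance of its dominating item (with larger $m'$) but not conversely, so your ``exactly when'' should be that implication plus the item's own witness trace, which has the same key and hence the same endpoint.
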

\begin{proof}
Along a local trace the stack is $s\mathbin{\|}\rho$. If $h_j$ denotes
its cumulative depth change, then $\Delta=h_k$ and $m=\min_j h_j$;
the underflow checks succeed exactly when $d+m\ge0$. Ordered interpretation
gives precisely $\Omega(\eta)$; other retained tests use their recorded
operands and offsets. Each summary therefore has the same endpoint and
failure domain as its witnessing trace, for that fixed input binding.

For equal keys and $m_1\le m_2$, the first success domain is contained in
the second, and outputs agree. A common suffix with summary
$(\Delta_U,m_U)$ yields
\begin{equation}
(\Delta,m);(\Delta_U,m_U)
=(\Delta+\Delta_U,\min\{m,\Delta+m_U\}),
\label{eq:depth_summary_composition}
\end{equation}
which is monotone in $m$. Dominance therefore persists through successors.
Every retained item has a trace witness, giving soundness. Conversely,
each prefix of any finite trace is retained or dominated by a retained
item. Re-enqueuing improvements propagates this coverage to outputs at
saturation, giving completeness. The inclusion and propagation argument
holds for each input separately, and binding never changes its origin.
Applying $N$ preserves this sourcewise equality.
\end{proof}

\paragraph{Residual resolution.}
Whole-token replay consumes the original token from its original source,
without first applying the compiled prefix. Source-indexed byte simulation
gives that source's full normalized successor set; union with ordinary
results already contained in that set does not change it or its origin.

Suffix restoration instead applies the prefix once, pops the actual
source-stack return frame once, and continues in that caller with its
retained environment, input, and protocol phase. The compiler's initial
residual set is empty; new records appear only after a byte is consumed,
with cursor equal to its zero-based index plus one. Propagation preserves
that cursor, and empty input generates only the identity effect. Hence
compiler-generated residuals satisfy
\begin{equation}
1\le j\le |w|,\qquad |w[j:]|<|w|.
\label{eq:generated_residual_progress}
\end{equation}
Entry and caller closures handle zero-byte returns independently.

\begin{lemma}[Suffix assembly preservation]
\label{lem:stack_return_continuation}
For every finite byte string $w$, compiler-generated effects, and origin
$o$, assembly with its required closures satisfies
\begin{equation}
\mathcal D_o\!\left(\widehat A^{\mathrm{cl}}(\widehat Y,w)\right)
=\delta_N\!\left(\mathcal D_o(\widehat Y),w\right).
\label{eq:suffix_assembly_preservation}
\end{equation}
\end{lemma}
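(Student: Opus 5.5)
We argue by strong induction on the remaining input length $|w|$, proving the equation simultaneously for every origin $o$ and every admissible representation $\widehat Y$. The progress bound Eq.~\eqref{eq:generated_residual_progress} supplies the well-founded order: every compiler-generated residual resumes on a suffix $w[j:]$ with $1\le j\le|w|$, hence strictly shorter than $w$. For the base case $|w|=0$, the compiler emits only the identity effect. Assembly then reduces to the required entry and caller closures, so the claim is the closure case of Lemma~\ref{lem:byte_execution_preservation}.

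For the inductive step, fix $o$ and split the assembled output into two parts. The first part consists of ordinary summaries. By Lemma~\ref{lem:local_summary_preservation}, binding these summaries to the concrete context $(s,d,\eta)$ of each source gives, attributed to $o$, exactly those normalized endpoints of $\delta_N(\mathcal D_o(\widehat Y),w)$ whose witnessing traces never pop a frame of the incoming stack. The second part consists of residuals. Each residual records the summarized prefix trace up to the first return with empty $\rho$. Suffix restoration applies that prefix once, pops the actual top frame $r$ of the source stack once, and resumes in caller $r$ with the retained environment, cursor, and protocol phase.

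The key observation is that every reference trace for $w$ from a source in $\mathcal D_o(\widehat Y)$ either performs no incoming-stack return, and is then covered by the first part, or factors uniquely at its first incoming-stack return. That return splits it into a local prefix, which is matched by some residual through worklist completeness, and a continuation on $w[j:]$ from the popped configuration. The popped configuration is again an admissible, origin-labeled representation, and $|w[j:]|<|w|$. The induction hypothesis therefore gives $\mathcal D_o(\widehat A^{\mathrm{cl}}(\cdot,w[j:]))=\delta_N(\cdot,w[j:])$ for it. Composing this with the prefix uses shared-suffix composition through the same intermediate configuration, never a Cartesian product, so origins are preserved. Lemma~\ref{lem:normalization_commutation} then lets us normalize at the cut point without changing the result. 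Taking the union over both parts gives $\supseteq$. The reverse inclusion $\subseteq$ holds because every retained summary or residual has a trace witness, which gives soundness.

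We expect the main obstacle to be the cut point itself. Two things must be checked there. First, the environment and depth state handed to the caller must equal the reference state after the prefix: $\Omega$ is applied in order, $d+\Delta$ is correct, and the underflow check $d+m\ge 0$ is evaluated before the pop. Second, the protocol phase, cursor, and any pending zero-byte closures must be carried over, not reinitialized, since fresh token initialization on the suffix would be wrong. We would first prove a small factorization lemma stating that reference execution on $w$ equals the prefix trace followed by reference execution on $w[j:]$ from the popped configuration with phase retained. This follows from the determinism of primitive agreement and the convention that suffixes keep the enclosing metadata. The factorization lemma is exactly what the inductive step needs.
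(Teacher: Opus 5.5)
Your proposal is correct and follows essentially the same route as the paper: strong induction on $|w|$ using the residual progress bound, the identity effect plus closure simulation for the base case, summary/worklist preservation for ordinary endpoints and first residual cuts, resumption in the actual caller, composition through the same intermediate configuration, and a final union with normalization commutation. One point to tighten is the induction hypothesis. The paper quantifies it over inputs with compatible input and protocol phase, which covers suspended internal seeds, whereas you quantify only over public admissible representations. That matters because the caller configuration after the pop is generally not in entry-closure form, and when $j=|w|$ the empty suffix returns such a seed, whose pending returns and checks are discharged by the enclosing closure.
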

\begin{proof}
Fix $o$ and induct strongly on $|w|$ over all finite byte strings, with
compatible input and protocol phases. Empty input uses the identity effect
and source-indexed closure simulation. Otherwise, summary and worklist
preservation cover every ordinary endpoint and first residual cut from
this origin. At a cut, prefix binding, the actual return, and caller
closure recover its reference continuation without changing $o$.
The suffix is shorter by Eq.~\eqref{eq:generated_residual_progress}, so
induction applies whether or not it is a tokenizer token. Its results are
composed with the prefix through the same caller configuration.
An empty suffix can return an intermediate seed; the enclosing closure
still processes its pending returns and checks. Union over alternatives
and normalization commutation prove the claim for $o$.
\end{proof}

\paragraph{Token effects and transition preservation.}
Let $\mathcal E(q,v)$ be the token specialization of the byte-string
compiler, including its prescribed initialization/finalization and residual
modes. Define
\begin{equation}
\operatorname{Canon}(\mathcal E)
=\operatorname{Ser}(\operatorname{Red}(\mathcal E)),
\label{eq:structural_canonicalization}
\end{equation}
where $\operatorname{Red}$ applies the proved duplicate and dominance
reductions, and $\operatorname{Ser}$ completely encodes the resulting
alternatives. It retains ordered operations, bindings, and all
\emph{ordinary and residual} token-boundary, initialization/finalization,
and protocol information used by execution. Residuals additionally retain
input, cursor, source bindings, and replay mode. Whole-token selectors
retain token identity. Interned operands are compared
within the same immutable registry and binding scope; hashing must resolve
full key equality. Let $\Pi_q$ partition tokens by these keys.

\begin{lemma}[Token-effect preservation]
\label{lem:token_effect_preservation}
For fixed $q$ and scope, equal canonical effects imply identical normalized
successor sets for each admissible concrete source, with the same source
attribution.
\end{lemma}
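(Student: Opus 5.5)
The approach is to show that the successor set computed from a token's compiled effect depends on the token only through $\operatorname{Canon}(\mathcal E(q,v))$. Then the known equality of that computed set with the reference normalized relation finishes the proof.

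Fix $q$, a binding scope, and an admissible concrete source $o=c=(q,s,d,\eta)$. Let $v_1\sim_q v_2$. First, invoke Lemma~\ref{lem:stack_return_continuation} (together with Lemmas~\ref{lem:local_summary_preservation} and~\ref{lem:byte_execution_preservation} for the summary and whole-token replay paths). This gives, for each $v_j$, $\mathcal D_o(\widehat{\mathsf T}(c,v_j))=\delta_N(\{c\},\operatorname{bytes}(v_j))$ with the prescribed initialization and finalization, i.e.\ $\mathsf T_{\mathcal G}(c,v_j)$, attributed to $o$. It therefore suffices to show that warm execution $\operatorname{Exec}$ applied to $\mathcal E(q,v_1)$ and to $\mathcal E(q,v_2)$ from source $o$ produces the same origin-indexed relation.

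Second, I would argue this in two substeps.

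\emph{(a) Reduction invariance.} $\operatorname{Red}$ only removes duplicates and $m$-dominated items. By the dominance argument in the proof of Lemma~\ref{lem:local_summary_preservation}, a dominated summary has the same output and a contained success domain. Binding $\operatorname{Red}(\mathcal E)$ therefore yields the same union as binding $\mathcal E$, for every concrete $(s,d,\eta)$.

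\emph{(b) Execution reads only serialized data.} I would go case by case through $\operatorname{Exec}$.
\begin{itemize}
\item For ordinary summaries, Eq.~\eqref{eq:summary_binding} reads only $(q',\rho,\Delta,m,\Omega,\nu)$ and the context.
\item For suffix-restoration residuals, execution reads the retained suffix input, cursor, execution state, and phase, plus the concrete top frame of $s$.
\item For whole-token replay residuals, execution reads the token identity, and $\operatorname{Ser}$ retains it.
\item Boundary, initialization, and finalization steps read protocol metadata that $\operatorname{Ser}$ encodes.
\end{itemize}
Since $\operatorname{Ser}$ is injective on these fields, and interned operands are compared in one immutable registry with full key equality, equal keys give literally equal inputs to a deterministic-set-valued execution. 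Equal canonical effects thus induce equal relations $\mathcal R$ from origin $o$, hence equal $\mathcal D_o$.

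The main obstacle is substep (b) for residuals. Execution there continues from a suffix $w[j{:}]$ that is not a vocabulary token, so I must check that the executor depends on nothing outside the serialized residual record. In particular, it must not consult the original $v$ or its full byte string, except in replay mode, where identity is retained. I would handle this by pointing to the requirement that suffixes "retain the enclosing metadata and phase without fresh token initialization." Everything needed to resume is then stored in the record. Combined with the progress bound Eq.~\eqref{eq:generated_residual_progress}, an induction on suffix length shows that the two tokens' residual continuations coincide. Finally, normalization $N$ acts pointwise on endpoints (Lemma~\ref{lem:normalization_commutation}), so equality survives normalization with origins unchanged.
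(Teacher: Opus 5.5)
Your proposal is correct and follows essentially the same route as the paper. Duplicate and dominance reductions preserve the bound union by the summary/worklist argument. Structural equality of the serialized effects then forces identical execution and source attribution for each fixed source; this covers ordinary bindings, residual records, token identity for replay, and boundary/finalization metadata.

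Your extra step, tying each side to $\mathsf T_{\mathcal G}(c,v_j)$ through the simulation lemmas, is harmless; the paper defers that link to Theorem~\ref{thm:factorized_token_preservation}. The suffix-length induction is more than the equality needs, since identical residual records already determine identical continuations.
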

\begin{proof}
The reductions preserve unions by the summary and worklist lemma.
Structural equality identifies the same ordinary bindings, residual
computations, and finalization for a fixed source. Their completed outcomes
agree and retain that source. This is a sufficient equivalence test,
not a complete semantic-equivalence procedure.
\end{proof}

\begin{theorem}[Sourcewise normalized token-transition preservation]
\label{thm:factorized_token_preservation}
For an admissible public source $\widehat X$ and compiler-generated effects,
completed factorized token execution returns an admissible representation
and, for every $c\in\mathcal D(\widehat X)$, satisfies
\begin{equation}
\boxed{
\mathcal D_c\!\left(\widehat{\mathsf T}(\widehat X,v)\right)
=\mathsf T_{\mathcal G}(c,v).
}
\label{eq:sourcewise_token_transition_preservation}
\end{equation}
Consequently,
\begin{equation}
\mathcal D\!\left(\widehat{\mathsf T}(\widehat X,v)\right)
=\mathsf T_{\mathcal G}\!\left(\mathcal D(\widehat X),v\right).
\label{eq:token_transition_preservation}
\end{equation}
\end{theorem}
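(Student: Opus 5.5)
The plan is to reduce the boxed identity to the byte-level lemmas already established, treating a token as its byte string $w=\operatorname{bytes}(v)$ together with its prescribed initialization and finalization. Fix an admissible public source $\widehat X$, a token $v$, and an origin $c\in\mathcal D(\widehat X)$. The factorized execution $\widehat{\mathsf T}(\widehat X,v)$ will be split into three phases:
\begin{itemize}
\item token initialization, together with the entry closure;
\item assembly of the compiled effect set $\mathcal E(q,v)$ for each atom's local state $q$, meaning binding of ordinary summaries and resolution of residuals;
\item token finalization, meaning boundary processing, exit closure, and a final application of $N$.
\end{itemize}
The reference side $\mathsf T_{\mathcal G}(c,v)=N(\mathsf T^{\mathrm{raw}}_{\mathcal G}(c,v))$ will be decomposed into the same three phases, using Lemma~\ref{lem:normalization_commutation} to insert or remove intermediate normalizations freely. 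The proof then shows sourcewise equality phase by phase and composes the results through the same intermediate configurations. Source-indexed decoding guarantees this composition is relational, not a Cartesian product, so origin attribution is preserved.

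\textbf{Initialization and finalization.} For initialization I would apply the closure clause of Lemma~\ref{lem:byte_execution_preservation}. For finalization I would apply its token-finalization clause. Both give $\mathcal D_c$-equalities with the normalized reference, and both return admissible representations.

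\textbf{Byte consumption.} For the middle phase I would invoke Lemma~\ref{lem:stack_return_continuation} with $w=\operatorname{bytes}(v)$. Ordinary summaries bound by Eq.~\eqref{eq:summary_binding} are covered by Lemma~\ref{lem:local_summary_preservation}. Residuals come in two modes, and I would treat each separately:
\begin{itemize}
\item \emph{Suffix restoration.} These are covered by the strong induction inside the assembly lemma, whose termination is secured by Eq.~\eqref{eq:generated_residual_progress}.
\item \emph{Whole-token replay.} Here source-indexed byte simulation yields the full normalized successor set of origin $c$. Taking its union with the ordinary results, which are already contained in that set, changes nothing.
\end{itemize}

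\textbf{Grouped execution.} If the implementation evaluates $v$ through a group representative as in Eq.~\eqref{eq:grouped_transition_instantiation}, Lemma~\ref{lem:token_effect_preservation} lets me replace $\mathcal E(q,v)$ by $\mathcal E(q,v_G)$ without changing any sourcewise successor set.

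\textbf{Main obstacle.} I expect the hardest step to be the interface between phases, specifically matching the execution phase and cursor metadata. Assembly is stated for arbitrary byte strings with compatible protocol phases, but the token specialization attaches initialization and finalization only at the outermost level. Suffixes must inherit the enclosing phase rather than being re-initialized. An empty suffix may return an intermediate seed whose pending returns and checks the enclosing finalization closure must still discharge. To handle this, I would first state precisely that the phase reached after token initialization is exactly the entry phase assumed by the assembly lemma, and that the exit phase of assembly is exactly the input phase of finalization. Under those conditions the three equalities chain, giving Eq.~\eqref{eq:sourcewise_token_transition_preservation}.

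\textbf{Aggregate statement.} Eq.~\eqref{eq:token_transition_preservation} then follows by taking the union over $c$. This uses $\mathcal D=\bigcup_o\mathcal D_o$ and the fact that $\mathsf T_{\mathcal G}$ extends to frontiers by union. Admissibility of the output follows because every public operation in the chain returns an admissible representation.
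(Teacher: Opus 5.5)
Your proposal is correct and follows essentially the paper's proof. You fix the origin $c$, use source-indexed byte simulation for whole-token replay, summary binding with worklist propagation and suffix induction for local-effect execution, the per-source token-effect lemma for grouping, public exit closure for admissibility, and union over origins for the aggregate identity. Your explicit three-phase split and phase-matching check spell out what the paper compresses into ``both preserve $c$ through closure, normalization, and finalization.'' One small correction: whole-token replay starts from the original source and already includes initialization and finalization, so it belongs beside your middle phase, not inside it.
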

\begin{proof}
Fix the entry origin $c$. Source-indexed byte simulation proves replay;
pointwise summary binding, worklist propagation, and suffix induction prove
local-effect execution. Both preserve $c$ through closure, normalization,
and finalization. Grouping reuses the same per-source relation. Public
exit closure restores the established representation invariants. Union
over sources gives the aggregate equation.
\end{proof}

\paragraph{Window and complete-output languages.}
For normalized $C_0$, allowed sets $\mathbf A=(A_0,\ldots,A_{H-1})$,
and $Y\in\{N,\mathrm{fac}\}$, define
\begin{equation}
\begin{aligned}
C_0^Y&=C_0,\\
C_{i+1}^Y&=\bigcup_{c\in C_i^Y}\ \bigcup_{v\in A_i}\mathsf T^Y(c,v),\\
E_i^Y&=\{(c,v,c'):c\in C_i^Y,\ v\in A_i,\ c'\in\mathsf T^Y(c,v)\}.
\end{aligned}
\label{eq:layered_frontier_definition}
\end{equation}
Here $\mathsf T^N=\mathsf T_{\mathcal G}$ and
$\mathsf T^{\mathrm{fac}}(c,v)
=\mathcal D_c(\widehat{\mathsf T}(\widehat X_i,v))$
for a representation of $C_i^{\mathrm{fac}}$. Stored edges use these
source--successor pairs, not Cartesian products of frontiers; graph nodes
retain their position.
Let $\mathcal L_{H,B_H}^Y$ contain the token sequences labeling paths from
$C_0$ to a configuration satisfying the common normalized boundary
predicate $B_H$.

\begin{theorem}[Normalized window-language preservation]
\label{thm:factorized_window_preservation}
Complete on-demand construction preserves $C_i^{\mathrm{fac}}=C_i^N$,
$E_i^{\mathrm{fac}}=E_i^N$, and
\begin{equation}
\boxed{
\mathcal L_{H,B_H}^{\mathrm{fac}}(C_0;\mathbf A)
=\mathcal L_{H,B_H}^{N}(C_0;\mathbf A).
}
\label{eq:window_language_preservation}
\end{equation}
\end{theorem}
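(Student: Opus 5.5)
The plan is an induction on the layer index $i$, with Theorem~\ref{thm:factorized_token_preservation} supplying the single-step equality. The claim to carry through the induction is $C_i^{\mathrm{fac}}=C_i^N$ together with $E_i^{\mathrm{fac}}=E_i^N$. Once both are established for every $i<H$, the language equality should follow from a path-correspondence argument.

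\emph{Base case.} We have $C_0^{\mathrm{fac}}=C_0=C_0^N$ by definition. Since $C_0$ is normalized, initialization yields an admissible public representation $\widehat X_0$ with $\mathcal D(\widehat X_0)=C_0$, so the hypotheses of Theorem~\ref{thm:factorized_token_preservation} hold at layer $0$.

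\emph{Inductive step.} Assume $\widehat X_i$ is admissible and $\mathcal D(\widehat X_i)=C_i^{\mathrm{fac}}=C_i^N$.
\begin{itemize}
\item For each $c\in C_i^N$ and $v\in A_i$, the sourcewise identity \eqref{eq:sourcewise_token_transition_preservation} gives $\mathsf T^{\mathrm{fac}}(c,v)=\mathcal D_c(\widehat{\mathsf T}(\widehat X_i,v))=\mathsf T_{\mathcal G}(c,v)=\mathsf T^N(c,v)$.
\item Substituting this into \eqref{eq:layered_frontier_definition} shows that the edge sets $E_i^{\mathrm{fac}}$ and $E_i^N$ range over the same triples $(c,v,c')$. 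This uses the convention that stored edges are source--successor pairs tagged by origin, not Cartesian products of frontiers. Source indexing matters precisely here: the aggregate equation \eqref{eq:token_transition_preservation} alone would give equal frontiers but could not exclude spurious pairs $(c,v,c')$ in which $c'$ is a successor of some other source.
\item Taking unions over $c$ and $v$ gives $C_{i+1}^{\mathrm{fac}}=C_{i+1}^N$.
\item To continue the induction, I need an admissible representation $\widehat X_{i+1}$ of $C_{i+1}^{\mathrm{fac}}$. Theorem~\ref{thm:factorized_token_preservation} states that each completed token execution returns an admissible representation. Merging these outputs over $v\in A_i$ requires admissibility to be preserved by union. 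I would justify this from the bundle semantics \eqref{eq:bundle_decoding}: merging unions the atom--stack relations. After merging, origins are reset to the new layer's configurations, as the source-indexed decoding paragraph prescribes.
\end{itemize}

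\emph{Window languages.} With the layers and edges equal, the two layered graphs are identical as labeled DAGs, including node positions. Both languages are defined by the same predicate $B_H$ applied to layer-$H$ configurations, and these configurations coincide. Hence a token sequence labels a path from $C_0$ to a $B_H$-satisfying node in one graph exactly when it does in the other, which gives \eqref{eq:window_language_preservation}. I would also note that $B_H$ is evaluated on decoded configurations, not on representations, so representation choices cannot affect acceptance.

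\emph{Expected obstacle.} The main difficulty is the admissibility-after-merge step, together with the ``complete'' hypothesis. The on-demand construction must query every pair $(c,v)$ with $c\in C_i$ and $v\in A_i$; in practice it may use group representatives. To cover the grouped implementation, I would invoke Lemma~\ref{lem:token_effect_preservation}: a representative $v_G$ yields the same sourcewise successor set as every $v\in G\cap A_i$. Restoring the token-level edges $(c,v,c')$ for all group members therefore reproduces $E_i^N$ exactly.
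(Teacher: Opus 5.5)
Your proposal is correct and follows essentially the same route as the paper: induction on position, using the sourcewise token-transition equality to obtain identical triples $(c,v,c')$ and next frontiers, then token-effect preservation to rebuild the edges from group representatives, and finally equal labeled paths under the common predicate $B_H$ to get the language equality. Your explicit treatment of admissibility after merging the outputs over $v\in A_i$, and of the initial representation of $C_0$, only spells out what the paper compresses into ``output admissibility permits the next query.''
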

\begin{proof}
Induct on position. Equal frontiers and
Eq.~\eqref{eq:sourcewise_token_transition_preservation} give the same
successors for each pair $(c,v)$, hence identical triples $(c,v,c')$ and
next frontiers. Output admissibility permits the next query. Effect groups
reconstruct these same edges when every admitted token label is retained.
Equal labeled paths and the same boundary predicate give equal window
languages.
\end{proof}

For a fixed token sequence, normalization commutation also gives
$X_H^N=N(X_H^{\mathrm{raw}})$. Thus raw-to-normalized boundary agreement
holds whenever $B_H^{\mathrm{raw}}(X)=\overline B_H(N(X))$.
This extra condition is unnecessary when comparing factorized execution
directly with the normalized reference.

Let $\mathcal L_{\mathrm{complete}}^Y$ contain finite token sequences
reaching complete-output acceptance under semantics $Y$.

\begin{theorem}[Complete-output language preservation]
\label{thm:normalization_complete_language}
$\mathcal L_{\mathrm{complete}}^N
=\mathcal L_{\mathrm{complete}}^{\mathrm{raw}}$.
Factorized acceptance agrees with both on completed queries.
\end{theorem}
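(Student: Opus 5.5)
Two inclusions have to be shown: $\mathcal L_{\mathrm{complete}}^N\subseteq\mathcal L_{\mathrm{complete}}^{\mathrm{raw}}$ and the converse. The factorized claim then follows from the earlier token-level results.

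\textbf{Normalized into raw.} Fix a finite token sequence $v_1\cdots v_n$ and the normalized initial set $C_0$. I will work with frontiers. Write $X_k^{\mathrm{raw}}$ for the set obtained by iterating $\mathsf T^{\mathrm{raw}}_{\mathcal G}$, and $X_k^N$ for the set obtained by iterating $\mathsf T^N_{\mathcal G}=N\circ\mathsf T^{\mathrm{raw}}_{\mathcal G}$. By Lemma~\ref{lem:normalization_commutation}, extended to complete token execution with intermediate normalization, induction on $k$ gives $X_k^N=N(X_k^{\mathrm{raw}})$. The inductive step is
\[
X_{k+1}^N=N(\mathsf T^{\mathrm{raw}}(N(X_k^{\mathrm{raw}}),v_{k+1}))=N(\mathsf T^{\mathrm{raw}}(X_k^{\mathrm{raw}},v_{k+1})).
\]
Since $N(X)\subseteq X$, any accepting configuration in $X_n^N$ also lies in $X_n^{\mathrm{raw}}$. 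This gives the inclusion.

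\textbf{Raw into normalized.} Here I use the remark that complete-output accepting configurations satisfy $P$: they have terminal distance zero and an empty stack. So if $X_n^{\mathrm{raw}}$ contains an accepting $c$, then $c\in N(X_n^{\mathrm{raw}})=X_n^N$. Acceptance is therefore preserved. This is exactly the raw-to-normalized boundary condition $B_H^{\mathrm{raw}}(X)=\overline B_H(N(X))$ for the acceptance predicate, because acceptance is existential over the frontier and accepting members survive $N$.

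\textbf{Factorized acceptance.} For a completed query sequence, Theorem~\ref{thm:factorized_token_preservation}, iterated through admissible representations (its output is admissible and so can serve as the next source), gives decoded frontiers equal to $X_k^N$ at every step. This is the same induction as in Theorem~\ref{thm:factorized_window_preservation}, taken with singleton allowed sets. Factorized acceptance evaluates the same predicate on the decoded frontier, so it agrees with normalized acceptance, and hence with raw acceptance. The restriction to completed queries is needed because resource exhaustion is treated as an error rather than an empty result.

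\textbf{Main obstacle.} I expect the delicate point to be justifying that accepting configurations satisfy $P$ in the specific sense used by $N$. This needs $D(q)=0$ at the distinguished root exit and the empty stack making the conjunction over $s$ vacuous. It also needs acceptance to be checked after token finalization and closure, so that no pending protocol phase remains.

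I will also check that the initial set $C_0$ is the same under both semantics. If the raw semantics starts from an unnormalized set, the commutation lemma applied once at step zero absorbs this difference.
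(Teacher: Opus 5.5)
Your proof is correct and follows the paper's argument. The paper's three steps are: (i) accepting configurations satisfy $P$, so raw accepting runs survive normalization; (ii) normalization only deletes alternatives, which gives the other inclusion; and (iii) the sourcewise token theorem, iterated over admissible representations, gives factorized agreement. You package the paper's backward-closure step in (i) as the frontier identity $X_k^N=N(X_k^{\mathrm{raw}})$ from Lemma~\ref{lem:normalization_commutation}, which is itself proved by that same backward closure, so the route is the same. For the normalized-into-raw direction the commutation identity is more than you need: $N(X)\subseteq X$ and monotonicity of $\mathsf T^{\mathrm{raw}}_{\mathcal G}$ already give $X_k^N\subseteq X_k^{\mathrm{raw}}$ by induction.
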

\begin{proof}
Every complete accepting configuration satisfies $P$, so backward closure
ensures that every raw accepting path survives normalization. Normalization
only deletes alternatives, proving the reverse inclusion. The token theorem
then establishes factorized agreement. The same argument preserves the
existence of a finite legal completion, but not observations of dead threads.
\end{proof}

These statements preserve logical transitions and token languages, not
derivation multiplicities, arbitrary path weights, or resource-truncated
results.

\subsection{Weighted Inference under Token-Effect Grouping}
\label{sec:weighted_grouping}

Fix one denoising iteration and a finite layered window graph with initial
configuration $c_0$, horizon $H$, terminal predicate $B_H$, and fixed
nonnegative token weights $p_i(v)$. Forbidden tokens have zero weight.
All comparisons use the same normalized, sourcewise transition relation
$\mathsf T_{\mathcal G}$ and count each logical edge $(i,c,v,c')$ once.
Let $\Pi_q$ partition $\mathcal V$ and assume effect equivalence:
\begin{equation}
\mathsf T_{\mathcal G}(c,v)=\mathsf T_{\mathcal G}(c,G),
\qquad v\in G\in\Pi_{q(c)}.
\label{eq:weighted_effect_equivalence}
\end{equation}
Positions, endpoints, and group membership are retained. An accepting
path $\pi$ has weight $W(\pi)=\prod_{i=1}^H p_i(v_i)$; assume
$Z=\sum_{\pi\text{ accepting}}W(\pi)>0$. The following equalities use
exact arithmetic.

Define
\begin{equation}
M_i(G)=\sum_{v\in G}p_i(v),\qquad
m_i(G)=\max_{v\in G}p_i(v).
\label{eq:grouped_sum_max_weights}
\end{equation}
For $\oplus\in\{+,\max\}$, let $r_i^+=M_i$ and
$r_i^{\max}=m_i$. Disjoint group membership and
Eq.~\eqref{eq:weighted_effect_equivalence} give
\begin{equation}
\begin{aligned}
A_i^{\oplus}(c,c')
&:=\mathop{\bigoplus}_{v:\,c'\in\mathsf T_{\mathcal G}(c,v)}p_i(v)\\
&=\mathop{\bigoplus}_{G:\,c'\in\mathsf T_{\mathcal G}(c,G)}r_i^{\oplus}(G).
\end{aligned}
\label{eq:grouped_operator_equality}
\end{equation}
Both cases use multiplication to compose edge weights, with empty
reductions equal to zero. Thus grouping preserves the entire sum-product
or max-product transition operator, not only its support.

\begin{lemma}[Sum-product inference preservation]
\label{lem:grouped_sum_product}
Group-sum reduction with conditional token restoration preserves $Z$,
the full token-labeled path distribution, and all token marginals under
chain or tree inference using the conditionals below.
\end{lemma}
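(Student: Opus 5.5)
The plan is to reduce everything to the operator identity in Eq.~\eqref{eq:grouped_operator_equality} and then read off each claim from a path-level bijection. For chain inference, I will define the ungrouped forward messages $\alpha_0(c_0)=1$ and
\[
\alpha_i(c')=\sum_{c}\alpha_{i-1}(c)\,A_i^{+}(c,c'),
\]
together with the backward messages $\beta_H(c)=[B_H(c)]$ and $\beta_{i-1}(c)=\sum_{c'}A_i^{+}(c,c')\beta_i(c')$. Because Eq.~\eqref{eq:grouped_operator_equality} says the grouped operator built from $M_i$ equals $A_i^{+}$ entrywise, a short induction on $i$ shows that the grouped messages coincide with the ungrouped ones. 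In particular,
\[
Z=\sum_{c}\alpha_H(c)\,\beta_H(c)
\]
is preserved. The tree case is handled the same way. Each edge factor of the tree is an instance of $A_i^{+}$, which is preserved, and the sum-product recursion uses only these factors together with products and sums at nodes. An induction over the elimination order therefore gives equal messages and equal $Z$.

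Next, for the path distribution, I will use the factorization of the grouped path probability. The grouped sampler draws a configuration path and a group sequence $(c_0,G_1,c_1,\dots,G_H,c_H)$ with probability $\prod_i M_i(G_i)/Z$, which is well defined since $Z>0$. Restoration then draws each $v_i$ with probability $p_i(v_i)/M_i(G_i)$. Multiplying the two, the $M_i$ factors cancel, so the joint probability of $(c_0,v_1,c_1,\dots,v_H,c_H)$ equals $\prod_i p_i(v_i)/Z$. Here each $v_i\in G_i$, and $c_i\in\mathsf T_{\mathcal G}(c_{i-1},G_i)=\mathsf T_{\mathcal G}(c_{i-1},v_i)$ by Eq.~\eqref{eq:weighted_effect_equivalence}. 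The key point is that the map from token-labeled edges $(c,v,c')$ to pairs $(c,G(v),c')$ with $v$ retained is a bijection. This holds because the groups are disjoint, so each $v$ determines its group $G(v)$ uniquely, and effect equivalence makes edge existence independent of the chosen representative. Consequently the grouped-then-restored law equals the ungrouped law $W(\pi)/Z$ on full labeled paths. Token marginals follow by summing. Zero-mass groups are never selected, so restricting restoration to groups with $M_i(G)>0$ costs nothing.

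For max-product, I will take the same recursion with $\max$ in place of $+$ and $A_i^{\max}$, which gives equality of Viterbi messages and hence of the optimum. To recover all optimal sequences, I will argue as follows. A grouped path $(G_i)$ is optimal if and only if $\prod_i m_i(G_i)$ attains the optimum. Any token path restoring $v_i\in\arg\max_{G_i} p_i$ has the same weight, so it is optimal. Conversely, take any optimal token path. If some $v_i$ were not an argmax of its group, replacing it by an argmax token would keep the path valid by effect equivalence and strictly increase the weight, unless that weight is $0$. This contradicts optimality, so every optimal token path arises from restoring an optimal group path with all ties retained.

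The step I expect to be the main obstacle is making the bijection claim rigorous in the tree setting, and in the presence of nondeterminism where a single $(c,v)$ has several successors. The concern is that grouping must not merge distinct successor edges or double count them. The fix is to make sure the statement ``count each logical edge $(i,c,v,c')$ once'' is matched by having grouped edges carry their successor $c'$ explicitly. In the tree case, restoration is applied per edge, conditioned on the already sampled endpoint configurations, so that the $M_i$ cancellation happens edge by edge.
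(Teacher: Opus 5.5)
Your proposal is correct and is essentially the paper's argument. Eq.~\eqref{eq:grouped_operator_equality} preserves every backward and interval message, and hence $Z$; in the tree case these messages are the products $K_{\ell:r}=A^+_{\ell+1}\cdots A^+_r$, preserved by associativity rather than by an elimination order. The factor $M_i(G)$ from group selection then cancels against the restoration probability $p_i(v)/M_i(G)$, giving $W(\pi)/Z$ on accepting paths and therefore all token marginals.

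The only difference is bookkeeping. The paper cancels edge by edge, showing that group selection times restoration equals the expanded conditional $p_i(v)\beta_i(c')/\beta_{i-1}(c)$, or $p_i(v)\mathbf1[b\in\mathsf T_{\mathcal G}(a,v)]/A_i^+(a,b)$ at a tree leaf. You cancel on whole paths after asserting the grouped law $\prod_i M_i(G_i)\,B_H(c_H)/Z$, which should be justified by the same telescoping of $\beta_i$ or of the interval normalizers. Your max-product paragraph belongs to the next lemma and is not needed here.
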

\begin{proof}
Equation~\eqref{eq:grouped_operator_equality} preserves every forward,
backward, and interval message. Set $\beta_H=B_H$ and
$\beta_{i-1}=A_i^+\beta_i$. Chain sampling selects a valid group edge and
then a token using
\begin{equation}
\Pr(G,c'\mid c)=\frac{M_i(G)\beta_i(c')}{\beta_{i-1}(c)},
\qquad
\Pr(v\mid G,c,c')=\frac{p_i(v)}{M_i(G)}.
\label{eq:grouped_chain_sampling}
\end{equation}
Their product is $p_i(v)\beta_i(c')/\beta_{i-1}(c)$, exactly the expanded
edge conditional. Multiplication along a path cancels the messages and
gives $W(\pi)/Z$.

For tree inference, write $K_{\ell:r}=A_{\ell+1}^+\cdots A_r^+$.
Sample the terminal state with probability
$K_{0:H}(c_0,b)B_H(b)/Z$ and an interval midpoint by
\begin{equation}
\Pr(C_k=z\mid C_\ell=a,C_r=b)
=\frac{K_{\ell:k}(a,z)K_{k:r}(z,b)}{K_{\ell:r}(a,b)}.
\label{eq:grouped_tree_midpoint}
\end{equation}
At leaf $i$ with endpoints $a,b$, choose $G$ with probability
$M_i(G)\mathbf1[b\in\mathsf T_{\mathcal G}(a,G)]/A_i^+(a,b)$,
then restore $v$ with probability $p_i(v)/M_i(G)$.
Associativity preserves the interval operators; cancellation of the
interval and leaf normalizers again gives $W(\pi)/Z$.
Only positive-mass events are sampled. Equality of full path distributions
implies equality of all token marginals.
\end{proof}

\begin{theorem}[Equivalence to Dang--Ermon sampling]
\label{thm:dang_grouping_equivalence}
Consider Algorithm~1 of \citet{dang2026constrained}, expressed in
the equivalent state-space formulation of their Appendix~A.2.
Use the same finite token-labeled graph, initial and terminal
factors, and logical edge-counting convention as above, with
per-token factors
\begin{equation}
g_i(c,v,c')
=
p_i(v)\mathbf1[c'\in\mathsf T_{\mathcal G}(c,v)].
\label{eq:dang_matched_token_factors}
\end{equation}
Under Lemma~\ref{lem:grouped_sum_product}, the grouped tree sampler
has the same partition function and output-token distribution
as this instance of Algorithm~1.
\end{theorem}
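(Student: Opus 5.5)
The approach is to show that both samplers, run on the same finite layered graph with the same factors, induce the same distribution over complete token-labeled paths. Their output-token distributions then coincide, since the output tokens are a deterministic function of the labeled path. I would first fix the instance of Algorithm~1 of \citet{dang2026constrained} in its state-space form. Its per-position transition matrix is obtained by summing the token factors $g_i(c,v,c')$ over $v$, which gives exactly $A_i^+(c,c')=\sum_{v:\,c'\in\mathsf T_{\mathcal G}(c,v)}p_i(v)$. The edge-counting convention guarantees that each logical edge $(i,c,v,c')$ contributes once, so parallel derivations cannot inflate the weight. With the same initial factor at $c_0$ and terminal factor $B_H$, their partition function is $\mathbf 1_{c_0}^\top A_1^+\cdots A_H^+ B_H$. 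This equals $Z=\sum_\pi W(\pi)$ by expanding the matrix product into a sum over labeled paths. By Eq.~\eqref{eq:grouped_operator_equality}, the grouped operators coincide with the $A_i^+$, so the grouped sampler has the same $Z$. This settles the first claim.

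Next I would write out the distribution produced by Algorithm~1. In state-space form it proceeds in three stages. It samples the terminal state proportionally to $K_{0:H}(c_0,b)B_H(b)$. It then recursively samples interval midpoints with the conditional of Eq.~\eqref{eq:grouped_tree_midpoint}, built from products of the same matrices. Finally, at each leaf it samples a token $v$ with probability $g_i(a,v,b)/A_i^+(a,b)$. Multiplying these factors along any accepting path telescopes: every interval normalizer $K_{\ell:r}(a,b)$ cancels against the product of its two children's numerators, and each leaf normalizer $A_i^+(a,b)$ cancels against the corresponding $K_{i-1:i}$. What remains is $W(\pi)/Z$. The proof of Lemma~\ref{lem:grouped_sum_product} shows that the grouped tree sampler also yields $W(\pi)/Z$ on the same support. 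Equal path distributions give equal token distributions.

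The main obstacle is making sure that Algorithm~1's recursion order and split points are matched by the grouped sampler. If Dang--Ermon split intervals at a different midpoint rule, the intermediate conditionals would differ even though the final joint law is the same. I would avoid depending on this by arguing only at the level of the joint law. Any hierarchical decomposition that uses exact interval kernels $K_{\ell:r}$ telescopes to $W(\pi)/Z$, regardless of which midpoints are chosen.

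A second point to check is the leaf step. The grouped sampler chooses a group $G$ and then restores $v$. Marginalizing over $G$, the probability of $v$ is
\[
\frac{M_i(G)\,\mathbf 1[b\in\mathsf T_{\mathcal G}(a,G)]}{A_i^+(a,b)}\cdot\frac{p_i(v)}{M_i(G)}=\frac{g_i(a,v,b)}{A_i^+(a,b)}.
\]
This uses effect equivalence, Eq.~\eqref{eq:weighted_effect_equivalence}, and the fact that the groups are disjoint. The marginal therefore matches Algorithm~1's leaf conditional exactly. Zero-mass groups are never sampled, so no division by zero arises.
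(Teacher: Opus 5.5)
Your proof is correct and takes essentially the paper's route. Both identify Dang--Ermon's state-space matrix $\mathsf M_i(c,c')=\sum_v g_i(c,v,c')$ with $A_i^+(c,c')$, use the grouped-operator identity to carry the partition function and interval kernels over to the grouped sampler, and verify the same leaf identity $\frac{M_i(G)\mathbf 1[b\in\mathsf T_{\mathcal G}(a,G)]}{A_i^+(a,b)}\cdot\frac{p_i(v)}{M_i(G)}=\frac{g_i(a,v,b)}{A_i^+(a,b)}$. The one difference is the closing step: you telescope both samplers to $W(\pi)/Z$, whereas the paper inducts over the sampling tree with stage-by-stage matched conditionals. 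Your version is therefore insensitive to Algorithm~1's split rule, while the paper's also shows that every intermediate conditional agrees.
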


\begin{proof}
Their state-space transition matrix is
\begin{equation}
\mathsf M_i(c,c')
=
\sum_v g_i(c,v,c')
=
A_i^+(c,c').
\label{eq:dang_operator_identification}
\end{equation}
Equation~\eqref{eq:grouped_operator_equality} preserves this matrix
under group-sum reduction. Thus all bottom-up interval messages,
the root boundary distribution, and the conditional midpoint
distributions are unchanged.

At a leaf with endpoints $c,c'$, selecting a group and restoring
$v\in G$ gives
\begin{equation}
\frac{M_i(G)\mathbf1[c'\in\mathsf T_{\mathcal G}(c,G)]}
     {A_i^+(c,c')}
\frac{p_i(v)}{M_i(G)}
=
\frac{g_i(c,v,c')}{A_i^+(c,c')},
\label{eq:dang_leaf_equivalence}
\end{equation}
which is the same leaf token conditional. Only positive-mass
events are sampled. Induction over the sampling tree therefore
gives the same state-space joint distribution and hence the same
output-token distribution. The identical interval operators and
boundary factors also give the same partition function.
\end{proof}

\begin{lemma}[Max-product inference preservation]
\label{lem:grouped_max_product}
Group-max reduction preserves the Viterbi optimum under chain or tree
composition. All maximizing token sequences can be recovered when all
tied token and state-path choices are retained.
\end{lemma}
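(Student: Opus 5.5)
The plan mirrors the sum-product argument of Lemma~\ref{lem:grouped_sum_product}, carried out in the $(\max,\times)$ semiring. The starting point is Eq.~\eqref{eq:grouped_operator_equality} with $\oplus=\max$: for every layer $i$ and pair $(c,c')$, the grouped max-operator $A_i^{\max}(c,c')$, computed from group edges weighted by $m_i(G)$, equals the ungrouped max over tokens $v$ with $c'\in\mathsf T_{\mathcal G}(c,v)$. This holds because groups active at $c$ are disjoint and share the successor set by Eq.~\eqref{eq:weighted_effect_equivalence}. Since $\max$ distributes over multiplication by nonnegative reals, $(\max,\times)$ is a commutative semiring on $[0,\infty)$. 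Chain Viterbi is then the recursion $\beta_H=B_H$, $\beta_{i-1}=A_i^{\max}\beta_i$ in that semiring. Tree composition evaluates the interval products $K_{\ell:r}=A_{\ell+1}^{\max}\cdots A_r^{\max}$ in some bracketing, and associativity makes every bracketing equal. The two constructions therefore receive identical operators, so all messages and the root value $\max_b K_{0:H}(c_0,b)B_H(b)$ coincide.

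Next I would check that this root value is the Viterbi optimum $\max_{\pi}W(\pi)$ over accepting token-labeled paths. Expanding the semiring product gives a max over state sequences $c_0,\dots,c_H$ of $\prod_i A_i^{\max}(c_{i-1},c_i)\,B_H(c_H)$. Substituting the token-level definition of $A_i^{\max}$ and exchanging max with product, using nonnegativity, turns this into a max over token-labeled paths. The grouped and ungrouped optima are therefore equal.

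For recovery of all maximizers, take any ungrouped optimal path $(c_{i-1},v_i,c_i)_i$. Let $G_i$ be the group containing $v_i$ at $q(c_{i-1})$. Optimality forces $p_i(v_i)=A_i^{\max}(c_{i-1},c_i)$, since otherwise replacing $v_i$ by the maximizing token on the same edge would give a strictly larger weight whenever the rest of the product is positive. It follows that $m_i(G_i)=p_i(v_i)$, so the group edge $(c_{i-1},G_i,c_i)$ is a tied argmax in the grouped backtrace, and $v_i\in\arg\max_{v\in G_i}p_i(v)$ is retained by the restoration rule in Eq.~\eqref{eq:main_restoration}. Conversely, any grouped maximizer followed by max-restoration yields a token path of weight $\prod_i m_i(G_i)B_H=$ optimum. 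That path is accepting because $c_i\in\mathsf T_{\mathcal G}(c_{i-1},v)$ for every $v\in G_i$.

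The main obstacle is the degenerate case where the optimum is $0$, or where some other factor of the product vanishes, so that optimality no longer forces each local factor to be maximal. I would handle it by proving the recovery claim under the positive-optimum hypothesis, matching the positivity assumption of Theorem~\ref{thm:main_grouping}. When the optimum is $0$, every accepting path is optimal, and full tie retention is interpreted as retaining every group edge and every group member, which trivially recovers every path.
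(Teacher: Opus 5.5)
Your proposal is correct and follows essentially the same route as the paper. The max case of Eq.~\eqref{eq:grouped_operator_equality} preserves every chain and tree max-product message, and a grouped state path lifts via $\max_{v_i\in G_i}\prod_i p_i(v_i)=\prod_i m_i(G_i)$, with each token replacement feasible by effect equivalence, so every positive-weight optimum is recovered when ties are retained; you simply spell out the two inclusions in more detail. Your separate zero-optimum case is vacuous here, because the standing assumption $Z>0$ already guarantees an accepting path of positive weight and hence a positive optimum.
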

\begin{proof}
Equation~\eqref{eq:grouped_operator_equality} preserves every max-product
message. For a fixed grouped state path,
\begin{equation}
\max_{v_i\in G_i}\prod_{i=1}^H p_i(v_i)
=\prod_{i=1}^H m_i(G_i).
\label{eq:grouped_viterbi_restoration}
\end{equation}
Every token replacement is feasible by effect equivalence. Hence each
optimal grouped path lifts to an optimal token path by choosing group
maximizers, and every positive-weight expanded optimum is recovered this
way. Retaining all ties preserves the full maximizing sequence set.
A single argmax and backpointer return one optimum; identical selected
outputs additionally require the same tie-breaking rule.
\end{proof}

\paragraph{Parsing ambiguity and the probability measure.}
Let $n(y)$ be the number of accepting paths labeled by token sequence $y$
in the chosen token-level graph, and let $w(y)=\prod_i p_i(y_i)$.
Summing path weights gives
\begin{equation}
Q_{\mathrm{path}}(y)=\frac{n(y)w(y)}{\sum_z n(z)w(z)},
\qquad
Q_{\mathrm{seq}}(y)=
\frac{\mathbf1[n(y)>0]w(y)}{\sum_{z:n(z)>0}w(z)}.
\label{eq:path_sequence_measures}
\end{equation}
The second distribution counts each legal sequence once. Cancelling
positive $w(y)$ shows that the two distributions coincide if and only if
$n(y)$ is constant on the positive-weight legal support; unambiguity is
sufficient. A constant $n(y)=k$ gives the same normalized distribution
but scales its partition function by $k$. For example, equal-weight
sequences with path counts two and one have path probabilities $2/3$ and
$1/3$, rather than $1/2$ each. Thus language equivalence alone does not
establish sum-product equivalence across different parser representations.
Grouping preserves the chosen graph's measure, not an assumed absence
of parsing ambiguity.

These results require the stated weighted conditionals; greedy selection
by group mass is generally not equivalent. Extra token-dependent scores
require corresponding weighted reductions and token-restoration rules.
Distributional equality does not imply identical outputs under the same
random seed or floating-point reduction order.

\subsection{Complexity Analysis}
\label{sec:complexity_analysis}

We separate compilation and grammar propagation from model inference,
token-probability processing, and joint inference.

\paragraph{Context expansion.}
Let $Q_{\mathrm{loc}}$ be the local grammar states, $g>1$ the number of
possible continuation-frame symbols, $d_{\max}$ the stack-depth bound,
and $B$ the maximum number of remaining context assignments per
local-state--stack pair. Then
\begin{equation}
S_{d_{\max}}\le |Q_{\mathrm{loc}}|B\sum_{\ell=0}^{d_{\max}}g^\ell
=O(|Q_{\mathrm{loc}}|B g^{d_{\max}}).
\label{eq:bounded_context_expansion}
\end{equation}
The stack factor is $d_{\max}+1$ for $g=1$. The bound need not be attained.
For bounded-depth brackets with $g$ types, however, each opening-only
prefix has a distinct matching completion. A minimal complete DFA needs
one state per stack plus a sink:
\begin{equation}
S_{\mathrm{DFA}}^{\min}=1+\sum_{\ell=0}^{d_{\max}}g^\ell
=\Theta(g^{d_{\max}}).
\label{eq:bounded_nesting_lower_bound}
\end{equation}
This is a worst-case DFA representation gap, not a universal runtime bound.

\paragraph{Unordered properties.}
We prove Proposition~\ref{prop:unordered_properties}. Fix $k\ge1$ and distinct Unicode letters $a_1,\ldots,a_k$. Let $\mathcal G_k$ be the object schema with properties $a_1,\ldots,a_k$, all required, no additional properties, and the value schema \texttt{\{"const":0\}} for every property. Let $L_k$ be the set of its serializations, and write $P_i$ for the pair \texttt{"}$a_i$\texttt{":0}.

\emph{DFA bound.} For $S\subseteq\{1,\ldots,k\}$, let $u_S$ open the object and write the pairs $P_i$ with $i\in S$ in increasing order of $i$, separated by commas. When $\varnothing\ne S\ne\{1,\ldots,k\}$, $u_S$ also ends with a comma. Let $w_S$ write the remaining pairs in the same way and close the object, so that $u_Sw_S\in L_k$. Now take $S\ne T$ and assume without loss of generality that some $i\in S\setminus T$ exists. The string $u_Tw_S$ never writes $P_i$, because $i\notin T$ and $w_S$ writes only properties outside $S$. Since $a_i$ is required, $u_Tw_S\notin L_k$. The $2^k$ prefixes are therefore pairwise inequivalent, and by the Myhill--Nerode theorem~\citep{hopcroft2007automata} every DFA for $L_k$ has at least $2^k$ states.

\emph{Grammar bound.} Let $G$ be a context-free grammar for $L_k$, and let $s$ be its size, i.e., the total length of its productions. Let $R_k$ be the regular language of strings that open an object, write exactly $k$ pairs from $\{P_1,\ldots,P_k\}$ separated by commas, and close the object. A DFA for $R_k$ has $O(k)$ states. Every property is required and $R_k$ allows exactly $k$ pairs, so $L_k\cap R_k$ contains exactly the strings that write each $P_i$ once, in some order. The product construction for intersecting a context-free language with a regular language~\citep{hopcroft2007automata} gives a grammar for $L_k\cap R_k$ whose size is polynomial in $s$ and $k$. Let $h$ be the homomorphism that keeps each $a_i$ and erases every other character. Replacing every terminal of a grammar by its image under $h$ does not enlarge the grammar, and $h(L_k\cap R_k)$ is the set of all permutations of $a_1,\ldots,a_k$. Every context-free grammar for this set has size $2^{\Omega(k)}$~\citep{ellul2005regular}. Hence $\operatorname{poly}(s,k)\ge2^{\Omega(k)}$, and therefore $s=2^{\Omega(k)}$.

\emph{Factorized representation.} Under the parsing-environment factorization of Section~\ref{sec:pushdown_factorization}, the object needs local grammar states only for its delimiters, property names, separators, and values, so $O(k)$ of them suffice. The environment $\eta$ stores the set of properties already written, a $k$-bit value. Programs that complete a property name carry the guard ``$a_i\notin\eta$'' and the update ``add $a_i$ to $\eta$''. The program that closes the object carries the guard ``$\eta$ contains every required property''. All $2^k$ field histories share these programs.

\paragraph{Entry and instantiation counts.}
A dense context-expanded table has $S_{d_{\max}}|\mathcal V|$ entries;
factorized compilation has at most $|Q_{\mathrm{loc}}||\mathcal V|$
local-state--token entries. These are counts, not constant-time operations.
For reachable frontiers $\mathcal C_{i-1}$, define
\begin{equation}
\begin{aligned}
R_t&=\sum_{i=1}^H|\mathcal C_{i-1}|,\\
\Pi_{q,i}^t&=\{G\in\Pi_q:G\cap\mathcal V_i^t\ne\varnothing\},
\qquad K=\max_q|\Pi_q|,\\
N_{\mathrm{inst}}
&=\sum_{i=1}^H\sum_{c\in\mathcal C_{i-1}}|\Pi_{q(c),i}^t|
\le\min\{N_{\mathrm{query}},R_tK\},
\end{aligned}
\label{eq:effect_instantiation_complexity}
\end{equation}
where $N_{\mathrm{query}}=\sum_i\sum_{c\in\mathcal C_{i-1}}|\mathcal V_i^t|$.
Repeated configurations are counted per position. With full-vocabulary
candidates, the ungrouped count is $R_t|\mathcal V|$.

\paragraph{Construction cost.}
Let $T_{\mathrm{setup}}$ include shared schema lowering and static-distance
analysis. Let $C(q,v)$ include a cold entry's queue processing, improvement
propagation, and effect/key construction. Let $W_j$ include a top-level
instantiation's binding, all residual descendants, closures, suffix
compilation, and successor creation. For cold pairs
$\mathcal P_{\mathrm{cold}}\subseteq Q_{\mathrm{loc}}\times\mathcal V$,
actual top-level queries $|\mathcal J_{\mathrm{inst}}|\le N_{\mathrm{inst}}$,
and additional indexing and label/graph work $A_t$,
\begin{equation}
T_{\mathrm{construct}}=O\!\left(
T_{\mathrm{setup}}
+\sum_{(q,v)\in\mathcal P_{\mathrm{cold}}}C(q,v)
+\sum_{j\in\mathcal J_{\mathrm{inst}}}W_j+A_t
\right).
\label{eq:actual_construction_work}
\end{equation}
Each shared construction is charged once; recursive work included in
$W_j$ is not counted again. If $C(q,v)\le\tau_c$ and $W_j\le\tau_w$, then
\begin{equation}
T_{\mathrm{construct}}=O\!\left(
T_{\mathrm{setup}}+\tau_c|Q_{\mathrm{loc}}||\mathcal V|
+\tau_wR_tK+A_t\right).
\label{eq:combined_compilation_complexity}
\end{equation}
Neither per-entry cost is assumed constant for arbitrary grammars.
Grouping also does not remove the work needed to discover effects or
emit individual token labels.

\paragraph{Scope.}
For an $L$-byte entry, if every depth prefix is at least $-\kappa L$,
then $-\kappa L\le m\le0$ and each complete key admits at most
$\kappa L+1$ retained values, including its first admission.
This bounds re-expansion per key, not the number or size of keys.
Similarly, residual progress bounds a recursive chain by $L$ strict
decreases, not the total work of its branching tree. Local stacks and
pending programs may still grow, $R_t$ may approach $HS_{d_{\max}}$,
and $K$ may approach $|\mathcal V|$.
Cold assets are reusable across requests with the same grammar and binding
scope; graph reuse additionally requires compatible candidate sets and
boundary/protocol conditions. The analysis therefore characterizes avoided
replication without claiming an unconditional exponential decoding speedup.

\section{Additional Experimental Details}
\label{app:experimental_details}

This section provides additional implementation and evaluation settings for
the experiments in Section~\ref{sec:experiments}.

\subsection{Implementation and Decoding Settings}
\label{sec:detailed_settings}

Inference uses NVIDIA H20-3e GPUs with approximately 140\,GiB of visible
memory per device. Both models run in BF16 with model CUDA Graphs enabled.
LLaDA2.2~\citep{bie2026llada22} uses SGLang~\citep{zheng2024sglang} with
tensor parallelism across four GPUs, while
DiffusionGemma~\citep{diffusiongemma2026} uses
vLLM~\citep{kwon2023vllm} 0.28.0 with one GPU per model instance.
DConstrain implements cold and warm compilation in C++ and
joint inference in CUDA.

Cold-compilation experiments run on an Intel Xeon Platinum 8575C CPU. Each schema compilation is single-threaded, with up to 64 schemas compiled concurrently. Cold timing starts after tokenizer and service initialization and ends when the reusable compiled
artifact is ready for online use. Warm compilation and model inference are excluded from cold-compilation time.

\paragraph{LLaDA2.2.}
The adaptive runtime comparison uses 32-token physical blocks, with the first block's available positions determined by the prompt suffix. Generation is greedy with seed zero. MASK-to-token updates use a confidence
threshold of $0.5$, falling back to the highest-confidence MASK position when none qualifies. The token-to-token threshold is $0.0$; DELETE and SPLIT are disabled.  A block is committed after all MASK positions are filled and token
assignments stabilize. Generation terminates at an accepted EOS or the 2,048-token output limit, with bounded update/refinement rounds; budget exhaustion is counted as failure.

\paragraph{DiffusionGemma.}
Generation uses 256-token canvases, seed zero, and the native entropy-bound sampler. The entropy budget is $0.1$, and temperature decreases linearly from $0.8$ to $0.4$. Gumbel-max candidates are retained at entropy-selected positions while the remaining positions are re-noised, with native soft
self-conditioning preserved. Generation uses at most 48 denoising iterations per canvas and terminates
at EOS/EOT or the 2,048-token output limit.
\subsection{Compilation and Evaluation Settings}
\label{app:compilation_protocol}

The BFCL pool contains 2,495 cases, obtained by screening all \texttt{required} cases in BFCL and excluding cases whose schemas admit no valid instance. The JSONSchemaBench (JSB) pool contains all 5,754 training schemas, and the ComplexFuncBench (CFB) pool contains all 1,000 benchmark cases.

The main cold-compilation experiments and ablation study use the full corresponding pools. The main BFCL runtime tables report both all-attempted and common runtime-success cohorts, as defined below. Unless otherwise specified, other analyses and appendix experiments use fixed subsets of 300 BFCL cases and 300 JSB schemas, and the full 1,000-case CFB pool, as applicable. Table~\ref{tab:runtime_latency_matched} uses cases successfully completed by both DINGO-Opt and DConstrain within the designated pool. Failed compilation cases are not replaced.

For cold compilation, the nominal timeout is 180\,s per schema. Success rates include all attempted cases, while compilation latency statistics use each method's successfully compiled cases. Reported speedups divide the baseline's arithmetic mean latency by DConstrain's over paired successful cases. Timeouts and failures affect success rates but are excluded from these latency ratios.

\paragraph{BFCL completion and quality cohorts.}
In Table~\ref{tab:runtime_comparison}, cold-stage admission, runtime completion, strict-schema validity, and official BFCL semantic accuracy all use the full denominator of 2,495. $N$ counts runtime completions regardless of schema validity; TPS and TPF use those $N$ cases. Native-Opt has no constraint-compilation stage. DConstrain completes all 2,495 cases with 100\% cold admission, runtime completion, and strict-schema validity under both models and both order settings.

For Table~\ref{tab:runtime_common_success}, we define four independent cohorts, one for each combination of model and property-order condition, by intersecting the runtime-success sets of Native-Opt, LAVE-Opt, EPIC-Opt, and DConstrain. Schema-invalid completions are retained. The Schema-order and Any-order cohorts contain 1,874 and 2,201 cases for DiffusionGemma, and 2,448 and 2,149 cases for LLaDA2.2, respectively. DINGO-Opt completes all Schema-order cases; its validity, TPF, and semantic accuracy on the Schema-order common cohorts match DConstrain. Under Any-order, its cold-compilation success is 85.81\% for DiffusionGemma and 71.30\% for LLaDA2.2. Table~\ref{tab:runtime_comparison} reports its Any-order outcomes over the full pool. Including DINGO-Opt would substantially reduce the common-success cohorts in Table~\ref{tab:runtime_common_success}, so Table~\ref{tab:runtime_latency_matched} compares it with DConstrain on their paired successful cases. In Table~\ref{tab:runtime_comparison}, Schema-order DINGO-Opt has TPF 8.11 for DiffusionGemma and 6.86 for LLaDA2.2, semantic accuracy 37.19\% and 72.63\%, and TPS 23.13 and 101.32, respectively.

Schema validity is evaluated independently of solver acceptance and semantic correctness. The two main BFCL runtime tables report strict-schema validity and the official semantic evaluator separately; semantic accuracy is not filtered by schema validity. Other reported joint-correctness metrics additionally require schema validity. JSONSchemaBench validates outputs against the original schema. Timeouts and exhausted decoding budgets count as runtime failures; schema-invalid completions count against validity but remain completed runs.

\paragraph{Throughput accounting.}
For the main BFCL tables, TPF is the number of model non-stop token IDs divided by all generation forwards for the selected cases. CPU-generated completion text is excluded from model-token counts. The Overhead columns report inverse-TPS overhead over Native-Opt in the same model--order panel, $(\mathrm{TPS}_{\mathrm{Native\text{-}Opt}}/\mathrm{TPS}_{\mathrm{method}}-1)\times100\%$. In Table~\ref{tab:runtime_comparison} and the Schema-order panel of Table~\ref{tab:runtime_common_success}, DConstrain TPS subscripts give $\mathrm{TPS}_{\mathrm{DConstrain}}/\mathrm{TPS}_{\mathrm{DINGO\text{-}Opt}}$ in the same table and panel; its bold Overhead entries show the reduction factor $h_{\mathrm{DINGO}}/h_{\mathrm{DConstrain}}$, where $h=\mathrm{TPS}_{\mathrm{Native\text{-}Opt}}/\mathrm{TPS}-1$. Schema and Any denote schema-order and arbitrary-order settings; for Native-Opt, they identify the prompt condition.

\paragraph{Runtime timing.}
End-to-end latency is measured by wall-clock time at the request level, while GPU joint-inference stages are measured with CUDA events. Warm compilation measures the request-local successor-construction work, which overlaps with model execution; these intervals are therefore not additive. Cold compilation and artifact loading are excluded from runtime latency. Runtime latency statistics in Table~\ref{tab:runtime_latency_matched} are arithmetic means over the paired cases successfully completed by both compared methods.

\paragraph{Paired latency comparison.}
Table~\ref{tab:runtime_latency_matched} reports BFCL and JSB latency on the cases successfully completed by both DINGO-Opt and DConstrain in each dataset--order setting. E2E is the mean request latency. Solve/forward is the mean joint-inference latency per generation forward. On BFCL under Any-order, DConstrain reduces end-to-end latency by $6.65\times$ for LLaDA2.2 and $46.26\times$ for DiffusionGemma, and reduces solve latency per forward by $26.40\times$ and $108.54\times$, respectively.

\begin{table*}[t]
\centering
\setlength{\abovecaptionskip}{0pt}
\setlength{\belowcaptionskip}{8pt}
\caption{Paired runtime comparison on BFCL and JSONSchemaBench (JSB). Each setting uses the common successful cases of DINGO-Opt and DConstrain. Native-Opt is unconstrained decoding. E2E is the mean request latency; Solve/forward is the mean joint-inference latency per generation forward. Subscripts give DConstrain's speedup over DINGO-Opt.}
\label{tab:runtime_latency_matched}
\small
\setlength{\tabcolsep}{5.5pt}
\renewcommand{\arraystretch}{1.12}
\begin{adjustbox}{max width=\textwidth}
\begin{tabular}{@{}lccc@{\hspace{14pt}}ccc@{}}
\toprule
& \multicolumn{3}{c}{\textbf{LLaDA2.2}} & \multicolumn{3}{c}{\textbf{DiffusionGemma}} \\
\cmidrule(r){2-4}\cmidrule(l){5-7}
\textbf{Method} & Valid. & E2E & Solve/forward & Valid. & E2E & Solve/forward \\
& {\footnotesize\color{gray}(\%)\,$\uparrow$} & {\footnotesize\color{gray}(ms)\,$\downarrow$} & {\footnotesize\color{gray}(ms)\,$\downarrow$} & {\footnotesize\color{gray}(\%)\,$\uparrow$} & {\footnotesize\color{gray}(ms)\,$\downarrow$} & {\footnotesize\color{gray}(ms)\,$\downarrow$} \\
\midrule
\tgroup{7}{BFCL --- fixed order}
Native-Opt & 94.87 & 85.42 & \NA & 89.50 & 1948.23 & \NA \\
DINGO-Opt & \textbf{100.0} & 264.23 & 32.45 & \textbf{100.0} & 12874.32 & 216.72 \\
\ours \textbf{DConstrain} & \textbf{100.0} & \spd{115.21}{2.29} & \spd{3.16}{10.27} & \textbf{100.0} & \spd{2521.42}{5.11} & \spd{9.95}{21.78} \\
\tgroup{7}{BFCL --- any order}
DINGO-Opt & \textbf{100.0} & 813.20 & 74.18 & \textbf{100.0} & 145529.00 & 3195.39 \\
\ours \textbf{DConstrain} & \textbf{100.0} & \spd{122.35}{6.65} & \spd{2.81}{26.40} & \textbf{100.0} & \spd{3145.64}{46.26} & \spd{29.44}{108.54} \\
\tgroup{7}{JSB --- fixed order}
Native-Opt & 73.7 & 169.42 & \NA & 55.3 & 1894.82 & \NA \\
DINGO-Opt & \textbf{100.0} & 2282.87 & 172.57 & \textbf{100.0} & 54323.33 & 853.25 \\
\ours \textbf{DConstrain} & \textbf{100.0} & \spd{327.61}{6.97} & \spd{11.78}{14.65} & \textbf{100.0} & \spd{5388.21}{10.08} & \spd{34.17}{24.97} \\
\bottomrule
\end{tabular}
\end{adjustbox}
\end{table*}

\subsection{Engineering Optimizations}
\label{app:runtime_integration}

We complement DConstrain's compilation design with engineering optimizations that reduce execution overhead and graph-storage cost.

\paragraph{Native compilation pipeline.}
Cold and warm compilation execute in a native C++ pipeline, reducing interpreter overhead in grammar processing and transition instantiation. Reusable schema artifacts are stored in packed native arrays, avoiding per-edge Python objects and their allocation overhead.

\paragraph{Compact sparse graph storage.}
We store canvas-specific grammar graphs in compressed sparse row (CSR) format, using row offsets and contiguous edge arrays. This representation stores only existing transitions rather than allocating dense graph structures. CUDA joint-inference kernels consume the CSR representation directly, connecting grammar propagation with GPU execution.

DINGO-Opt adopts the same C++ pipeline and CSR representation while retaining finite-state compilation. Direct-PDA-Opt uses edge merging, shared token-membership storage, and grouped GPU reductions while interpreting reached parser configurations on demand, without compiling symbolic transition programs. Its + TG variant additionally applies conservative token grouping before interpretation (Appendix~\ref{app:direct_pda}).

\paragraph{Asynchronous execution.}
Native CPU workers run warm compilation during the first forward pass
of each canvas and grammar propagation during every forward pass.
Neither stage reads the model scores.
The scheduling interface supports both eager execution and model
CUDA Graph replay.
Data transfer and readiness tracking connect warm compilation to
GPU joint inference, which begins when both graph data and model scores
are available.
Score-dependent operations therefore remain on the critical path.
This overlap limits exposed warm-compilation latency, while workspace
reuse reduces repeated setup costs.
We measure end-to-end latency directly rather than summing overlapping
components, and report compilation and exposed costs in
Section~\ref{sec:compilation_results}.

\section{Additional Analyses of Constraint Semantics and Runtime}
\subsection{Warm Compilation Scaling and Runtime Overlap}
\label{app:warm_scaling}

DConstrain avoids materializing context-expanded token transitions during
cold compilation, but exact joint inference still requires the runtime
contexts that are reachable from the current decoding canvas.
Warm compilation instantiates context-specific successors on demand from shared transition programs.
This section examines whether the resulting deferred work becomes a
runtime bottleneck as parsing-environment and stack complexity increase,
and how this work is overlapped with model computation during execution.

\paragraph{Scaling with runtime-context complexity.}
Figure~\ref{fig:warm_scaling} complements the cold-compilation scaling
analysis in Figure~\ref{fig:ana} by measuring the corresponding
online cost.
We consider the two sources of runtime-context growth targeted by the
factorized representation.
For arbitrary-order objects, increasing the number of parameter fields
increases the number of possible parsing-environment configurations.
For nested structures, increasing nesting depth increases the diversity of
continuation-stack contexts that may be reached during grammar propagation.

On the CFB any-order sweep in Figure~\ref{fig:warm_scaling}(a), median CPU
warm-compilation time per 32-token block increases gradually with the
maximum number of parameter fields per object.
Even at the largest field counts, the median remains well below the
corresponding mean model-forward time of 9.09\,ms.
Thus, supporting arbitrary schema-valid property order introduces
additional online work, but the measured warm-compilation cost remains
small relative to the model computation with which it can overlap.

Figures~\ref{fig:warm_scaling}(b) and
\ref{fig:warm_scaling}(c) examine nesting depth on BFCL.
As depth increases from four to eight, warm-compilation work increases,
consistent with a larger set of reachable stack contexts.
The increase is visible both in the total warm-compilation work accumulated
within a request and in the mean cost of an individual warm-compilation
call.
At depth eight, the mean per-call CPU cost remains below the approximately
9\,ms mean model-forward time, while end-to-end request latency remains
comparatively stable across the depth sweep.
Across the depth sweep, mean warm-compilation cost stays below the mean model-forward time, and end-to-end latency remains comparatively stable.

\begin{figure*}[t]
\centering
\includegraphics[width=\textwidth]{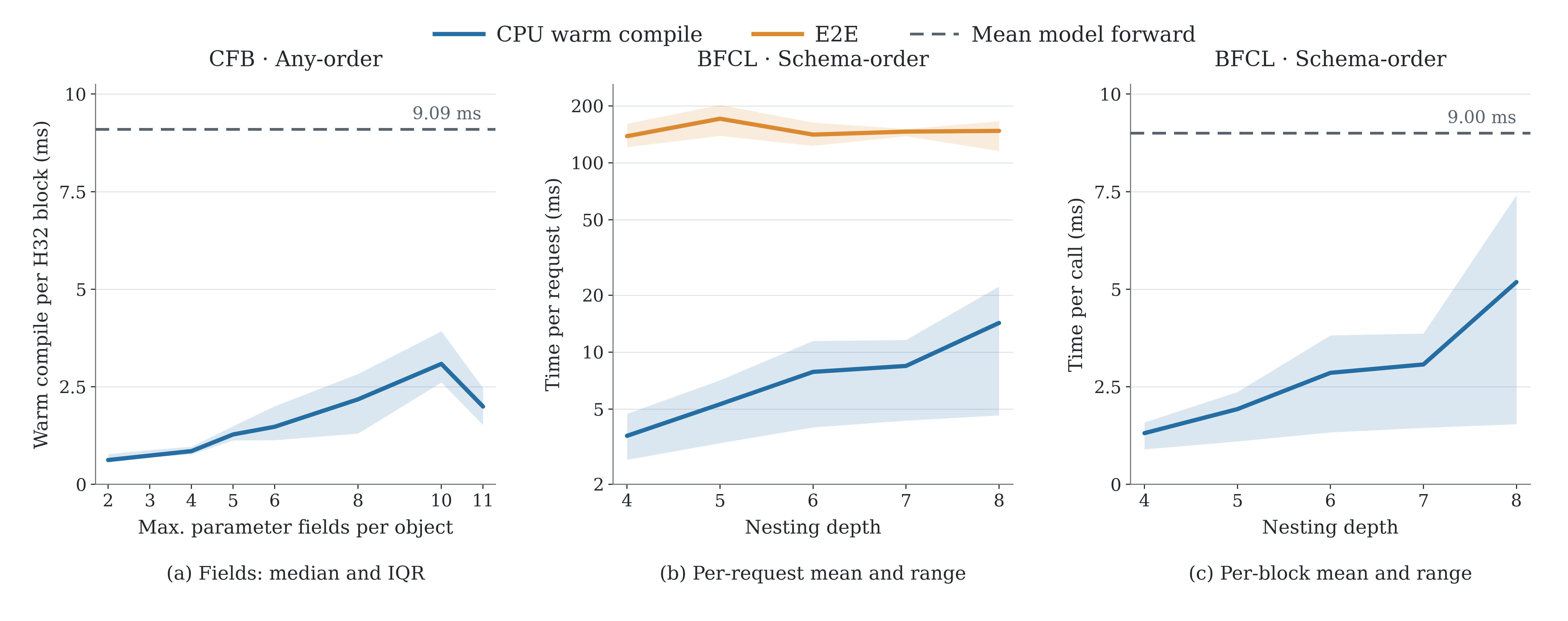}
\caption{
Warm-compilation scaling with runtime-context complexity.
(a) CPU warm-compilation time per 32-token block on CFB with arbitrary
property order, grouped by the maximum number of parameter fields per
object; the line and band report the median and interquartile range.
(b) Per-request end-to-end latency and accumulated CPU warm-compilation
time on BFCL as nesting depth increases.
(c) Mean CPU warm-compilation time per call over the same depth sweep.
Bands in (b,c) show the observed range across requests.
Dashed lines denote the corresponding mean model-forward time.
}
\label{fig:warm_scaling}
\end{figure*}

\paragraph{Warm compilation during asynchronous execution.}
The scaling experiments above characterize how online work changes with
specific sources of context complexity.
Figure~\ref{fig:warm_compile_overlap} in the main text reports warm-compilation timings from actual asynchronous execution across the broader evaluation workloads. Native CPU workers run score-independent warm compilation concurrently with GPU model forward computation. Joint inference starts when both the graph and model scores are ready, so only warm compilation remaining at that point is exposed on the decoding critical path. The figure sorts the measured CPU warm-compilation times and shows the corresponding mean model-forward duration as a reference; the percentage annotations summarize the fraction of warm-compilation calls below that duration.

For LLaDA2.2 on CFB, arbitrary property order increases warm-compilation
cost relative to schema order, as expected from the additional
parsing-environment alternatives, while the distribution remains below
the corresponding mean forward time over the plotted samples.
On JSONSchemaBench, which contains substantially more heterogeneous
schemas, the distribution has a longer tail: 97.17\% of warm-compilation
calls remain below the 8.90\,ms mean model-forward duration, while the
largest observed call reaches 34.61\,ms.

The same pattern holds with the substantially larger 256-token canvases of
DiffusionGemma.
On BFCL and CFB with arbitrary property order, 99.47\% and 94.20\% of
warm-compilation calls, respectively, are below the corresponding mean
model-forward durations of 26.4 and 26.7\,ms.
On JSONSchemaBench, 99.3\% are below the 25.64\,ms mean forward duration,
with a maximum observed warm-compilation time of 30.32\,ms.

\paragraph{Implications for deferred instantiation.}
Together, Figures~\ref{fig:warm_scaling} and~\ref{fig:warm_compile_overlap} clarify
the trade-off introduced by two-stage compilation.
DConstrain shares compiled transition logic across the runtime contexts reached during grammar propagation.
It performs
context-specific instantiation only when the corresponding configurations
become reachable.
The resulting online work grows with runtime-context complexity, as
expected, but remains modest relative to model-forward computation over the
evaluated regimes and is overlapped with model execution by the asynchronous schedule. The request-level means in Table~\ref{tab:runtime_latency_matched} account for this overlap directly.

\section{Additional Experimental Results}
\label{app:additional_results}

These experiments complement the main runtime and compilation results in Section~\ref{sec:experiments}. We examine direct pushdown interpretation, tree-parallelized automata inference, compiled representations, and compilation coverage.

\subsection{Comparison with Direct Pushdown Interpretation}
\label{app:direct_pda}

The ablations in Section~\ref{sec:ablation} vary reuse within DConstrain. Here we additionally evaluate a baseline that computes $(c,v)\mapsto\mathsf T_{\mathcal G}(c,v)$ by directly interpreting the pushdown parser at reached runtime configurations, without materializing a context-expanded DFA or compiling symbolic transition programs. This comparison tests whether on-demand interpretation alone can provide efficient successor construction.

\paragraph{Baseline variants.}
\emph{Direct-PDA-Opt} already merges output edges with common destinations and uses shared token-membership storage and grouped GPU reductions. \emph{Direct-PDA-Opt + TG} additionally groups tokens \emph{before interpretation}: bytes are equivalent only when their raw transitions agree at every local RTN node, and token-trie paths with identical byte-class sequences are merged. It also reuses unions of token memberships. Calls, returns, and environment updates still execute in the interpreter, and all original token identities and model scores are retained. This conservative grouping uses no symbolic transition programs and differs from DConstrain's local token-effect grouping.

\paragraph{Evaluation setting.}
We evaluate all variants on the same fixed set of 300 BFCL cases, with LLaDA2.2, BF16, tensor parallelism of four, batch size one, and 32-token blocks. All variants use the same name-first tool-call format, arbitrary parameter-field order, a nesting-depth limit of four, prompt, and MAP decoding. These controlled runs execute graph construction and model forward serially, with a fresh runtime per request. E2E spans the first prefill through runtime and parser cleanup after generation; cold preparation is excluded. Figure~\ref{fig:direct_pda_baseline} uses the original paired runs. Figure~\ref{fig:direct_pda_grouping} reruns DConstrain and the grouped baseline in the same service on the same 300 cases. All 300 cases produce identical output tokens across variants and satisfy their schemas.

\paragraph{Runtime comparison.}
Pre-interpretation grouping substantially reduces the direct interpreter's online cost, but a large gap remains. In the grouped paired run, mean E2E is 878.10\,ms for Direct-PDA-Opt + TG and 136.63\,ms for DConstrain, a $6.43\times$ speedup. The warm-compilation curves show that repeated runtime interpretation remains expensive even after conservative grouping. Together with the compiler ablations, these results support compiling and reusing transition programs as a contribution beyond simply avoiding DFA expansion.

\begin{figure*}[t]
\centering
\includegraphics[width=\textwidth]{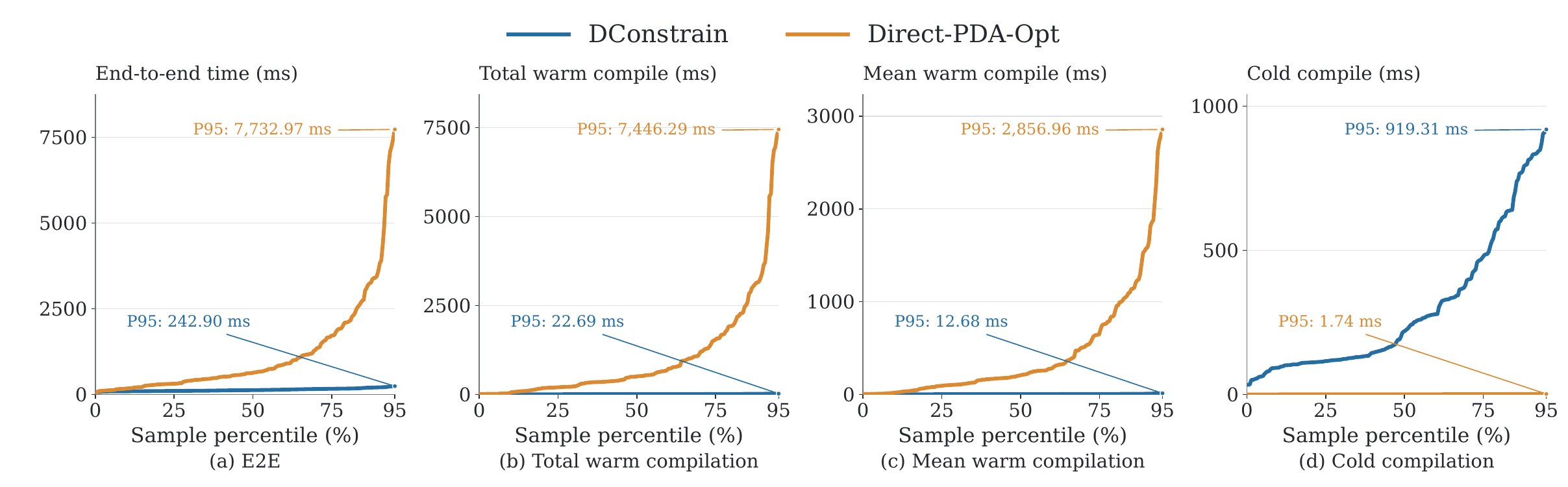}
\caption{DConstrain versus Direct-PDA-Opt without pre-interpretation token grouping on 300 BFCL cases with LLaDA2.2. Panels report (a) complete-request E2E, (b) accumulated warm-compilation time per request, (c) each request's mean time per warm-compilation call, and (d) cold-compilation time. Each curve shows its latency percentiles from P0 to P95; the upper 5\% is omitted without rescaling the percentile axis.}
\label{fig:direct_pda_baseline}
\end{figure*}

\begin{figure*}[t]
\centering
\includegraphics[width=\textwidth]{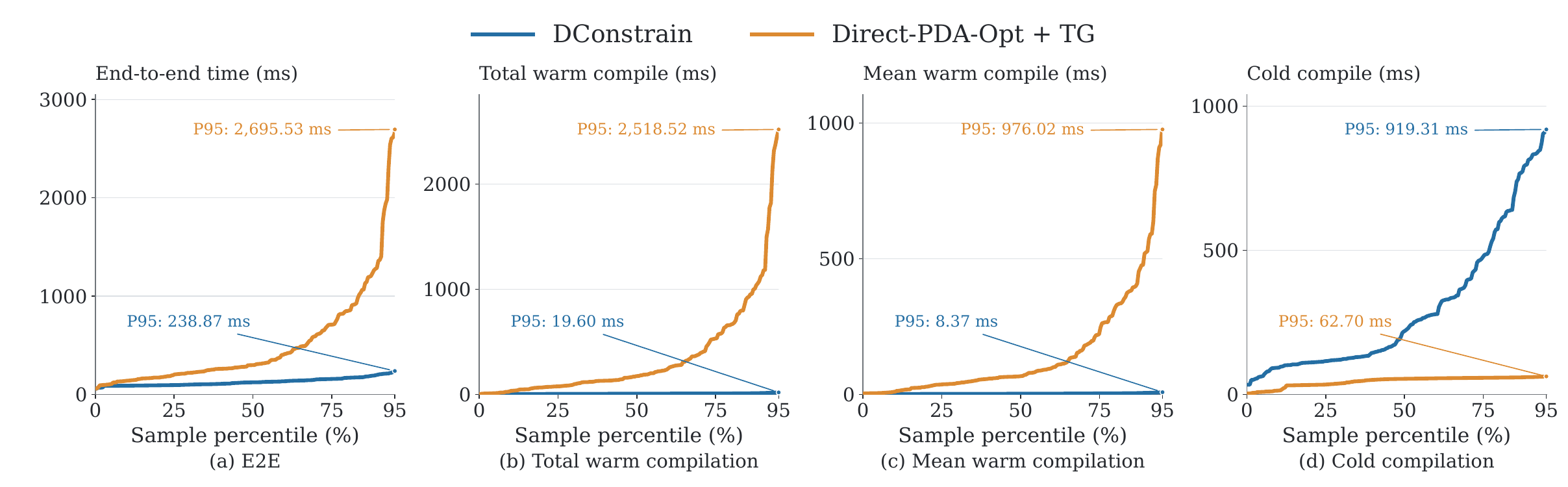}
\caption{DConstrain versus Direct-PDA-Opt + TG, which adds conservative pre-interpretation token grouping and membership reuse. Both methods are rerun on the same 300 BFCL cases. Panel definitions and the P0--P95 display follow Figure~\ref{fig:direct_pda_baseline}.}
\label{fig:direct_pda_grouping}
\end{figure*}
\subsection{Tree-Parallelized Automata Inference}
\label{app:tree_sampling}

DConstrain changes how the canvas-specific grammar graph is constructed while preserving the token-labeled successor relation consumed by downstream automata-based inference. It is therefore solver-agnostic rather than specialized to the Viterbi solver used in our main experiments. We verify this compatibility by applying both Viterbi inference and the inference algorithm of \citet{dang2026constrained} directly to DConstrain graphs, using both sequential and tree-parallelized implementations.

We further examine whether tree parallelism can reduce the solver cost underlying Table~\ref{tab:runtime_latency_matched}. All four solver variants are evaluated on the same 300 BFCL cases using both DINGO and DConstrain graph representations. Table~\ref{tab:tree_sampling} reports results for two cases near the 25th and 95th percentiles of DINGO graph node count, denoted P25 and P95. The selected cases and inference objectives are matched across graph representations.

We consider the log-depth tree construction proposed by Dang and Ermon~\citep{dang2026constrained} and apply the same reduction to MAP inference for a direct comparison with the Viterbi-style decoding used by DINGO~\citep{suresh2025dingo}. Let $L$ denote the decoding-window length, $S$ the automaton states, and $E$ the grammar-transition structure. Conventional Viterbi inference performs position-wise dynamic programming, where state scores at position $i$ depend on those at position $i-1$, yielding a sequential dependency across the decoding window. Tree-based inference instead composes adjacent transition operators in a binary reduction tree, reducing the sequential depth from $O(L)$ to $O(\log L)$. For MAP inference, the same construction can be performed over the max-product semiring~\citep{goodman1999semiring}, preserving the Viterbi objective~\citep{viterbi1967} while exposing tree-level parallelism.

This reduction trades sequential depth for substantially more computation and intermediate state. Under the state-space formulation, sequential message passing requires
\begin{equation}
O\!\left(
L\left(|S|^2 + |E||\mathcal V|\right)
\right)
\end{equation}
work, whereas the tree construction requires
\begin{equation}
O\!\left(
L\left(|S|^3 + |E||\mathcal V|\right)
\right)
\end{equation}
work while reducing the critical path to $O(\log L)$. Dang and Ermon characterize this trade-off as an $|S|$-fold increase in message-passing FLOPs in exchange for greater GPU parallelism. The memory cost can also increase substantially: each interval in the reduction tree materializes a state-to-state transition summary of size $O(|S|^2)$. Retaining the summaries required by the subsequent top-down pass therefore requires $O(L|S|^2)$ intermediate storage, compared with $O(L|S|)$ backpointer storage for conventional Viterbi reconstruction, or $O(|S|)$ score storage when reconstruction is unnecessary. This quadratic dependence on the automaton state space can become expensive for large structured-output graphs.

\begin{table*}[t]
\centering
\setlength{\abovecaptionskip}{0pt}
\setlength{\belowcaptionskip}{10pt}
\caption{Solver latency (ms) and peak solver GPU memory (GiB). All four solver variants are evaluated on 300 BFCL cases; the table displays cases near the 25th and 95th percentiles of DINGO graph node count. P25/P95 refer to graph size, not latency quantiles. ``Ours'' denotes DConstrain; ``seq.'' and ``tree'' denote sequential and tree-parallelized execution. DANG denotes the inference method of \citet{dang2026constrained}.}
\label{tab:tree_sampling}
\small
\setlength{\tabcolsep}{5pt}
\renewcommand{\arraystretch}{1.15}
\begin{tabular}{@{}llN{3.2}N{1.2}@{\hspace{14pt}}N{5.2}N{2.2}@{}}
\toprule
& & \multicolumn{2}{c}{P25 BFCL schema}
& \multicolumn{2}{c}{P95 BFCL schema} \\
\cmidrule(lr){3-4}\cmidrule(l){5-6}
Graph & Method
& {Time (ms) $\downarrow$} & {Mem. (GiB) $\downarrow$}
& {Time (ms) $\downarrow$} & {Mem. (GiB) $\downarrow$} \\
\midrule
\multirow{4}{*}{Ours}
& Viterbi seq. & \bfseries 3.900 & \bfseries 0.010 & \bfseries 30.850 & \bfseries 0.031 \\
& DANG seq. & 4.700 & 0.010 & 48.970 & 0.085 \\
& Viterbi tree & 30.800 & 1.140 & 7824.192 & 12.552 \\
& DANG tree & 60.700 & 1.560 & 4344.171 & 12.299 \\
\midrule
\multirow{4}{*}{DINGO}
& Viterbi seq. & 39.800 & 0.010 & 425.081 & 0.077 \\
& Viterbi tree & 7.130 & 0.210 & 880.384 & 12.838 \\
& DANG seq. & 331.320 & 0.020 & 13536.261 & 0.085 \\
& DANG tree & 11.400 & 0.290 & 3577.134 & 17.625 \\
\bottomrule
\end{tabular}
\end{table*}

Table~\ref{tab:tree_sampling} shows that the benefit of tree parallelism depends strongly on graph size. On the P25 DINGO graph, tree-parallelized Viterbi reduces latency from $39.8\,\mathrm{ms}$ to $7.13\,\mathrm{ms}$, a $5.58\times$ speedup, but remains slower than sequential Viterbi on the DConstrain graph at $3.90\,\mathrm{ms}$. On the P95 DINGO graph, tree-parallelized Viterbi instead increases latency from $\num{425.081}\,\mathrm{ms}$ to $\num{880.384}\,\mathrm{ms}$, while sequential Viterbi on the DConstrain graph requires only $\num{30.850}\,\mathrm{ms}$. Peak solver GPU memory on the DINGO graph increases from $\num{0.010}\,\mathrm{GiB}$ to $\num{0.210}\,\mathrm{GiB}$ at P25 and from $\num{0.077}\,\mathrm{GiB}$ to $\num{12.838}\,\mathrm{GiB}$ at P95, approximately $21\times$ and $167\times$, respectively. Tree parallelism also increases both latency and memory for Viterbi on the DConstrain graph in these two cases. These measurements illustrate that reducing sequential depth does not guarantee lower solver cost: as the graph grows, the additional dense intermediate computation and storage can dominate the available parallelism.

\subsection{Compiled Representations with DiffusionGemma}
\label{app:gemma_representation}

Table~\ref{tab:compiled_representation_gemma} complements the LLaDA2.2 results in Table~\ref{tab:compiled_representation_llada}. DConstrain's median recorded memory footprint is 4.10\,MB on BFCL, 4.23\,MB on JSONSchemaBench, and 4.57\,MB on ComplexFuncBench; the corresponding P95 values are 5.39\,MB, 28.41\,MB, and 5.09\,MB. State counts use the three-method successful intersection, while edge and memory quantiles use each method's successful cases.

\begin{table*}[t]
\centering
\setlength{\abovecaptionskip}{0pt}
\setlength{\belowcaptionskip}{10pt}
\caption{Compiled representations with the DiffusionGemma tokenizer
on fixed sets of 300 BFCL cases, 300 JSONSchemaBench schemas, and 1,000 ComplexFuncBench cases. Edges are reported in thousands; edge and memory columns show P50 and P95.
Mean state counts use the three-method successful intersection
(258, 202, and 421 cases, respectively);
edge and memory quantiles use each method's successful cases.
States denote automaton states for baselines and local RTN nodes
for DConstrain; edges are token-level versus symbolic.
$^{\ddagger}$Memory measures baseline edge-array payloads versus
DConstrain's recorded native-snapshot payloads, in decimal MB,
rather than total process memory.}
\label{tab:compiled_representation_gemma}

\small
\setlength{\tabcolsep}{4pt}
\renewcommand{\arraystretch}{1.15}

\begin{tabular}{@{}llN{3.2}N{4.2}N{4.2}N{3.2}N{3.2}@{}}
\toprule
\multicolumn{7}{@{}l}{
\textbf{Compiled representation and memory footprint}
} \\
\addlinespace[4pt]
Dataset & Method
& {States $\downarrow$}
& \multicolumn{2}{c}{Edges ($10^3$) $\downarrow$}
& \multicolumn{2}{c}{Memory (MB)$^{\ddagger}$ $\downarrow$} \\
\cmidrule(lr){4-5}
\cmidrule(l){6-7}
& & {Mean} & {P50} & {P95} & {P50} & {P95} \\
\midrule

BFCL
& DINGO & 235.21 & 884.925 & 2070.611 & 42.49 & 99.46 \\
& NFA & 240.21 & 593.875 & 1774.719 & 28.53 & 85.25 \\
\ours & \textbf{DConstrain} & \bfseries 208.61 & \bfseries 0.763 & \bfseries 4.003 & \bfseries 4.10 & \bfseries 5.39 \\

\midrule

JSONSchemaBench
& DINGO & 187.28 & 591.323 & 2066.609 & 28.40 & 99.24 \\
& NFA & 223.56 & 596.349 & 3001.778 & 28.65 & 144.16 \\
\ours & \textbf{DConstrain} & \bfseries 169.92 & \bfseries 0.851 & \bfseries 51.803 & \bfseries 4.23 & \bfseries 28.41 \\

\midrule

ComplexFuncBench
& DINGO & 460.21 & 1776.397 & 3252.160 & 85.32 & 156.17 \\
& NFA & 465.21 & 2365.893 & 3842.167 & 113.62 & 184.51 \\
\ours & \textbf{DConstrain} & \bfseries 424.23 & \bfseries 2.354 & \bfseries 3.703 & \bfseries 4.57 & \bfseries 5.09 \\

\bottomrule
\end{tabular}
\end{table*}

To complement the any-order compilation results in Table~\ref{tab:cold_compilation}, Table~\ref{tab:native_anyorder_representation} compares the compiled representations produced by DINGO-Opt and DConstrain under the native tool-call protocols. All representation statistics are computed on the same successful intersection within each setting. Because DConstrain succeeds on every evaluated case, the intersection size $n$ is determined by the finite-state baseline's compilation coverage. The results show that arbitrary property order leads to very large context-expanded transition structures for DINGO-Opt, whereas DConstrain retains compact local RTN nodes and symbolic transitions.

\begin{table*}[t]
\centering
\setlength{\abovecaptionskip}{0pt}
\setlength{\belowcaptionskip}{10pt}
\caption{Compiled representations under native tool-call protocols with arbitrary parameter-field order. Statistics use the common successful intersection within each setting. States denote DFA states for DINGO-Opt and local RTN nodes for DConstrain; edges denote token-level and symbolic transitions, respectively. DConstrain successfully compiles every evaluated case.}
\label{tab:native_anyorder_representation}
\small
\setlength{\tabcolsep}{4pt}
\renewcommand{\arraystretch}{1.15}
\begin{adjustbox}{max width=\textwidth}
\begin{tabular}{@{}llclr rrrrr@{}}
\toprule
Tokenizer & Dataset & Depth & Success & Method & States & \multicolumn{2}{c}{Edges ($10^3$)} & \multicolumn{2}{c}{Payload (MB)} \\
\cmidrule(lr){7-8}\cmidrule(l){9-10}
& & & & & Mean & P50 & P95 & P50 & P95 \\
\midrule
\multirow{4}{*}{LLaDA2.2}
& BFCL & 4 & 213/300 & DINGO-Opt & \num{655.92} & \num{49124.943} & \num{161307.594} & \num{2358.16} & \num{7743.37} \\
& & & & \textbf{DConstrain} & \num{164.98} & \num{3.272} & \num{8.872} & -- & -- \\
\addlinespace[3pt]
& CFB & 4 & 99/1000 & DINGO-Opt & \num{409.19} & \num{77163.564} & \num{169703.331} & \num{3704.11} & \num{8146.34} \\
& & & & \textbf{DConstrain} & \num{247.52} & \num{9.648} & \num{15.140} & -- & -- \\
\midrule
\multirow{4}{*}{DiffusionGemma}
& BFCL & 4 & 2141/2495 & DINGO-Opt & \num{1737.94} & \num{9178.192} & \num{65604.918} & \num{440.59} & \num{3149.94} \\
& & & & \textbf{DConstrain} & \num{481.19} & \num{0.957} & \num{3.584} & \bfseries \num{3.46} & \bfseries \num{4.20} \\
\addlinespace[3pt]
& CFB & 4 & 137/1000 & DINGO-Opt & \num{1379.69} & \num{19669.069} & \num{62942.461} & \num{944.23} & \num{3021.61} \\
& & & & \textbf{DConstrain} & \num{693.85} & \num{2.278} & \num{3.614} & \bfseries \num{3.66} & \bfseries \num{3.86} \\
\bottomrule
\end{tabular}
\end{adjustbox}
\vspace{5pt}
\end{table*}

\subsection{Cold Compilation with Additional Tokenizers}
\label{app:additional_tokenizers}

To evaluate whether DConstrain's compilation behavior depends on the two tokenizers used in the main experiments, we additionally compile the same structured-output workloads with the Dream 7B and LLaDA 8B tokenizers. These experiments evaluate compilation only and do not run model inference. All settings use arbitrary property order and the same recursive structured-output grammar configuration.

Table~\ref{tab:anyorder_other_tokenizers} shows that DConstrain retains high compilation coverage and sub-second median cold-compilation latency across both additional tokenizers. BFCL and ComplexFuncBench compile successfully in all cases, while JSONSchemaBench succeeds on 281 of 300 evaluated schemas. The identical local-state counts across tokenizers reflect the tokenizer-independent pushdown grammar structure, while token-dependent symbolic transitions, payload size, and compilation latency vary with tokenization. These results indicate that the factorized compilation design is not specific to the LLaDA2.2 or DiffusionGemma tokenizers used in the main experiments.

\begin{table*}[t]
\centering
\setlength{\abovecaptionskip}{0pt}
\setlength{\belowcaptionskip}{10pt}
\caption{Any-order cold compilation with Dream 7B and LLaDA 8B tokenizers. All runs use the same recursive structured-output grammar configuration and nesting bounds of 8/8. Statistics use successful compilations; Success reports coverage. States are mean local RTN node counts, edges are symbolic transitions, and payload is the schema-exclusive native-snapshot footprint in decimal MB. Cold compilation uses 64 concurrent single-threaded workers.}
\label{tab:anyorder_other_tokenizers}
\small
\setlength{\tabcolsep}{4pt}
\renewcommand{\arraystretch}{1.15}
\begin{tabular}{@{}llcrrrrrrr@{}}
\toprule
Tokenizer & Dataset & Success & States & \multicolumn{2}{c}{Edges ($10^3$)} & \multicolumn{2}{c}{Payload (MB)} & \multicolumn{2}{c}{Cold (s)} \\
\cmidrule(lr){5-6}\cmidrule(lr){7-8}\cmidrule(l){9-10}
& & & Mean & P50 & P95 & P50 & P95 & P50 & P95 \\
\midrule
Dream 7B & BFCL & 300/300 & \num{416.25} & \num{0.916} & \num{4.236} & \num{2.77} & \num{4.08} & \num{0.360} & \num{1.052} \\
& JSONSchemaBench & 281/300 & \num{616.65} & \num{1.025} & \num{157.527} & \num{2.78} & \num{24.71} & \num{0.384} & \num{12.852} \\
& ComplexFuncBench & 1000/1000 & \num{770.05} & \num{2.829} & \num{4.387} & \num{3.22} & \num{3.69} & \num{0.672} & \num{0.967} \\
\midrule
LLaDA 8B & BFCL & 300/300 & \num{416.25} & \num{0.853} & \num{4.003} & \num{2.09} & \num{3.03} & \num{0.309} & \num{0.814} \\
& JSONSchemaBench & 281/300 & \num{616.65} & \num{0.929} & \num{138.532} & \num{2.08} & \num{14.90} & \num{0.328} & \num{10.780} \\
& ComplexFuncBench & 1000/1000 & \num{770.05} & \num{2.630} & \num{4.063} & \num{2.45} & \num{2.80} & \num{0.596} & \num{0.838} \\
\bottomrule
\end{tabular}
\end{table*}

\subsection{Failure Analysis on JSONSchemaBench}
\label{app:jsb_failure_analysis}

We further analyze the remaining DConstrain compilation failures on JSONSchemaBench with the LLaDA2.2 tokenizer. Table~\ref{tab:jsb_failure_analysis} categorizes all 1,072 failed schemas out of 5,754, corresponding to an 18.6\% failure rate and an 81.4\% success rate. Most failures arise from unsupported schema semantics rather than compiler execution failures: semantic-coverage limitations account for 789 cases (13.7\% of the full benchmark and 73.6\% of all failures). Resource limits account for another 274 cases (4.8\% of the benchmark), primarily RSS exhaustion and the 180\,s compilation timeout. Only nine cases (0.2\%) are attributable to known implementation errors or unresolved compiler behavior.

\begin{table}[t]
\centering
\setlength{\abovecaptionskip}{0pt}
\setlength{\belowcaptionskip}{10pt}
\caption{Failure breakdown for DConstrain on JSONSchemaBench with the LLaDA2.2 tokenizer. Percentages are relative to all 5,754 schemas and rounded to one decimal place; the total is calculated from counts before rounding.}
\label{tab:jsb_failure_analysis}
\small
\setlength{\tabcolsep}{5pt}
\renewcommand{\arraystretch}{1.10}
\begin{tabular}{@{}lrr@{}}
\toprule
Failure cause & Cases & Full set (\%) \\
\midrule
\multicolumn{3}{@{}l}{\textbf{Schema coverage and semantic limitations}} \\
Unsupported keyword or keyword combination & 285 & 5.0 \\
\texttt{oneOf} branch exclusivity not provable & 178 & 3.1 \\
General \texttt{allOf} intersection unsupported & 127 & 2.2 \\
Nested resource scopes (\texttt{id}/\texttt{\$id}) unsupported & 127 & 2.2 \\
Nested schema-dialect switching unsupported & 28 & 0.5 \\
Dynamic-property constraints conflict with empty-object subset & 38 & 0.7 \\
Regular expression outside supported dialect & 4 & 0.1 \\
Tuple-style \texttt{items} unsupported & 1 & 0.0 \\
\texttt{\$ref} and outer-type intersection not provable & 1 & 0.0 \\
\addlinespace[3pt]
\multicolumn{3}{@{}l}{\textbf{Resource and construction limits}} \\
State/transition/closure or pattern-size budget exceeded & 68 & 1.2 \\
RSS limit exceeded & 134 & 2.3 \\
Compilation timeout & 72 & 1.3 \\
\addlinespace[3pt]
\multicolumn{3}{@{}l}{\textbf{Implementation and unresolved failures}} \\
\texttt{\_deduplicate\_json\_values} implementation error & 4 & 0.1 \\
Duplicate \texttt{enum} values rejected & 3 & 0.1 \\
Compiled language unexpectedly empty & 2 & 0.0 \\
\midrule
\textbf{Total failures} & \textbf{1,072} & \textbf{18.6} \\
\bottomrule
\end{tabular}
\end{table}

The dominant failure modes therefore reflect the current supported schema fragment rather than failures of the factorized compilation mechanism itself. In particular, \texttt{oneOf}/\texttt{allOf} reasoning and nested resource semantics account for a substantial fraction of unsupported cases. Resource failures form the second-largest category, showing that although DConstrain substantially reduces context expansion, sufficiently complex schemas can still exceed finite construction or memory budgets.

\paragraph{Localizing frontend coverage limitations.}
To separate frontend coverage from factorized execution, we replayed the
178 \texttt{oneOf} branch-disjointness rejections and 127 general
\texttt{allOf} rejections in Table~\ref{tab:jsb_failure_analysis} using the
archived frontend from the reported experiment. All 305 rejections were
reproduced by the schema-to-parser frontend alone, without invoking the
factorized token compiler. These failures therefore precede factorized
execution; their localization does not establish that downstream
compilation would succeed after extending the frontend.

\paragraph{Frontend acceptance versus semantic preservation.}
We examined the same 305 original schemas using XGrammar~\citep{dong2025xgrammar}.
With \texttt{strict\_mode=True}, XGrammar 0.2.8 returned grammars for
298 schemas. A bounded search identified 124 distinct schemas for which
the native matcher accepted a complete JSON text that violated the
original schema. We independently replayed all 124 counterexamples and
confirmed their invalidity with \texttt{jsonschema} 4.26.0; 104 were also
confirmed by \texttt{fastjsonschema} 2.22.2. We additionally found valid
instances for 86 schemas, establishing their satisfiability. Replaying
the same invalid texts in XGrammar 0.1.27 reproduced violations for
123 schemas. Table~\ref{tab:jsb_semantic_audit} summarizes these results.

The audit used an empty tokenizer and the native complete-string matching
interface, requiring an accepting terminal state. It therefore tests
membership in the compiled character language, rather than compilation
against the LLaDA2.2 vocabulary or the likelihood of generating these
texts during model inference. Validation followed each schema's declared
dialect, defaulting to Draft 2020-12 when unspecified. Optional
\texttt{format} assertions and network reference retrieval were disabled.

\begin{table}[t]
\centering
\setlength{\abovecaptionskip}{0pt}
\setlength{\belowcaptionskip}{10pt}
\caption{Targeted semantic audit of 305 historical DConstrain rejections.
All counts refer to schemas. Counterexamples are complete JSON texts
accepted by XGrammar but rejected under the original schema. Rows use
DConstrain's first rejection cause, which need not match the constraint
violated by the witness. Counterexample counts are lower bounds from a
bounded search; absence of a witness does not establish semantic preservation.}
\label{tab:jsb_semantic_audit}
\small
\setlength{\tabcolsep}{4pt}
\renewcommand{\arraystretch}{1.10}
\begin{tabular}{@{}lrrrr@{}}
\toprule
Historical rejection
& Cases
& \shortstack{Compiled\\XG 0.2.8}
& \shortstack{Counterexamples\\XG 0.2.8}
& \shortstack{Same witnesses\\XG 0.1.27} \\
\midrule
\texttt{oneOf} disjointness & 178 & 175 & 43 & 42 \\
General \texttt{allOf}     & 127 & 123 & 81 & 81 \\
\midrule
\textbf{Total} & \textbf{305} & \textbf{298} & \textbf{124} & \textbf{123} \\
\bottomrule
\end{tabular}
\end{table}

For example, JSB schema \texttt{o41599} specifies \texttt{oneOf} over
integer, string, number, boolean, and null. Both tested XGrammar versions
accept \texttt{0}, which satisfies both the integer and number branches
and thus violates the exactly-one requirement. Similarly, schema
\texttt{o44455} explicitly requires an object and contains a multi-branch
\texttt{allOf}, yet both matchers accept \texttt{0}. Both examples were
confirmed by both validators.

\paragraph{Scope of the conclusions.}
Frontend acceptance alone is insufficient evidence that the original
schema constraints are preserved. This targeted audit does not estimate
overall semantic accuracy, establish DConstrain's semantic correctness
on all accepted schemas, or imply that every rejected schema is
intrinsically difficult. We distinguish execution correctness relative
to supported reference-parser semantics from frontend coverage and
resource-bounded construction. Incomplete coverage, resource failures,
and implementation issues remain limitations of the current system.

\paragraph{Limitations.}
DConstrain substantially reduces the cost of nested constraints during cold compilation by retaining a symbolic RTN representation rather than expanding stack contexts into finite states, and this representation can in principle express context-free structure. However, factorization does not eliminate the worst-case complexity of joint inference. During warm compilation, uncertainty over unresolved token choices can propagate across the decoding canvas and induce a rapidly growing set of reachable runtime contexts; in the worst case, this expansion can still be exponential in nesting depth or other sources of branching. DConstrain therefore extends the practical nesting depth substantially relative to finite-state baselines, but it does not provide unbounded or uniformly efficient compilation for arbitrary deeply nested grammars.

\end{document}